%% file: stobasket-paper.tex
\documentclass[sigconf,review=false]{acmart}
\setcopyright{none}
\acmConference[StoBasket Research Manuscript]{StoBasket Research Manuscript}{October 2026}{Preprint draft}
\acmYear{2026}\copyrightyear{2026}
\renewcommand\footnotetextcopyrightpermission[1]{}
\usepackage{booktabs,multirow,graphicx,amsmath,xcolor,array}
\newtheorem{proposition}{Proposition}

\newcommand{\1}{\mathbf 1}

\newcommand{\SB}{\textsc{StoBasket}}
\newcommand{\E}{\mathbb E}
\title[StoBasket]{StoBasket: Joint Upsell and Cross-Sell Optimization for Contribution Margin}
\author{Xin Heng}
\email{xin@tote.ai}
\affiliation{\institution{Tote AI}\country{United States}}
\begin{document}

\begin{abstract}
Retail recommenders rank relevance, but retailers realize contribution margin: an upsell earns the margin difference over a displaced item, whereas a cross-sell earns an incremental margin. We introduce \emph{joint stochastic basket editing}, a formulation in which one policy selects typed replacement and addition offers for an existing basket, values them by expected incremental contribution margin, and couples their values through distinct displayed-offer, accepted-offer, and joint-acceptance response channels. To our knowledge, it is the first formulation in which upsell and cross-sell responses are coupled: prior joint treatments assume this coupling away, which makes the two decisions separable, and are recovered here as the zero-coupling special case. This integrated action-level formulation is the central contribution of \SB{}: it places full-price recommendations, optional promotion terms, display and inventory constraints, and stochastic customer response in one decision. Real-data-calibrated simulations on five public basket datasets measure every policy against \emph{attainable offer margin}: the expected extra contribution margin that the best possible offers would earn under the simulated customer response (about \$0.47 per basket on Instacart). All percentages below are shares of it. With a correctly structured response model fit to a few thousand or more randomized offer sessions, \SB{} captures essentially all of it: within 0.5\% in every default-price setting of all five markets, including high display costs, and within 0.7\% when the true response lies outside its model family. Its remaining losses come from the response model, not the search: omitting cannibalization and offer fatigue costs 11--19\%. The economic layer is first-order: a margin-weighted popularity rule leaves as much as 11\% unrealized at ordinary display costs and up to 98\% when display is costly. On the three datasets that are also standard next-basket benchmarks, a published history-based recommender leaves 11--52\% unrealized even when reranked by margin, yet the same forecast adds up to 6\% when \SB{} uses it as an input. Joint selection pays most where premium tiers and add-ons are strongly linked: on Instacart it beats independent upsell and cross-sell selection by 10.2\% of attainable margin after empirical-Bayes noise correction and label placebos. Because coupling is measurable from a retailer's own baskets, the same analysis tells an operator where joint selection will pay, and active, randomized offer programs generate exactly the data that let it pay more broadly. When display friction makes actions profitable only together, the best heuristic portfolio loses 64\% under default Instacart prices while exact search stays within 0.5\%; across alternative price assumptions the heuristic loss ranges from 2\% to 89\%. For a factor-local subclass of the formulation, we characterize when staged policies succeed or fail and synthesize established geometric and graphical techniques into an exact optimizer. The two exact methods agree on all 8{,}040 checked real-basket instances. With up to 20 upsell groups and four displayed upsells, vectorized exact enumeration solves every tested real basket in 0.13\,ms at the median, 30\,ms at the 99th percentile, and at most 111\,ms in Python. Holding the optimizer fixed, a signal-free propensity model loses 38\%, while a correctly structured one trained on 5{,}000 randomized sessions comes within 0.06\% of perfect prediction: profit saturates long before accuracy does. Separately, randomized fashion slates show that a same-category neighbor causally lowers focal-item clicks by 12\%, supporting distinct exposure and acceptance channels. \SB{} therefore contributes the economic decision formulation first, evidence for when its coupling matters second, and a certifiable solver for the narrower regime in which local search fails.
\end{abstract}
\keywords{recommender systems, profit-aware recommendation, upsell, cross-sell, contribution margin, stochastic choice, economic optimization}
\maketitle

\section{Introduction}\label{sec:intro}
Recommender systems commonly estimate product relevance or purchase propensity and then rank candidate items. In retail checkout, however, the decision concerns \emph{actions on an existing basket}. Replacing a product and adding a product have distinct financial consequences, feasibility constraints, and customer-response mechanisms. A high preference score does not, by itself, determine whether a recommendation increases the retailer's contribution margin.

Profit-aware recommendation provides a substantial foundation: surveys cover more than one hundred economic-recommender studies, and prior systems incorporate provider profit through reranking, joint relevance--profit objectives, or model training \cite{debiasio2024economic,louca2019profit,debiasio2024model}. Earlier deployed work also generated both upsell and cross-sell candidates for a current basket and ranked them by expected revenue or margin \cite{singh2006market}. \SB{} advances this line from economic item ranking to a joint stochastic edit policy whose replacement and addition actions have different incremental economics and whose values depend on what else is displayed and accepted.

Consider a basket containing regular burger patties and buns. Recommending premium patties is an \emph{upsell}: if accepted, it replaces the regular patties, so the financial gain is the premium margin \emph{minus} the margin of the displaced product. Recommending cheddar is a \emph{cross-sell}: if accepted, it adds an incremental item without removing the patties. The two decisions need not be independent. Recommending premium patties may change the probability of purchasing cheddar; conversely, the combined recommendation can be economically attractive even when neither component is profitable in isolation. Conventional matrix factorization can supply useful product representations, but a relevance score does not encode this upsell/cross-sell accounting or the interactions between recommended actions \cite{ruiz2020shopper,mcauley2015products}.

Our central contribution is the resulting \emph{decision formulation}. We formulate \emph{joint stochastic basket editing}: one policy chooses replacement and addition offers for an existing basket; replacement value deducts the margin of the displaced item, addition value does not; and the response law allows an offer's value to depend separately on which upsells are displayed, accepted, or accepted together. The policy then maximizes \emph{expected incremental contribution margin} over feasible offer sets. This combination is consequential: it makes the two recommendation types economically commensurable without making them behaviorally independent, and it creates profitable sets whose individual actions can all look unattractive. To our knowledge, no prior formulation couples the two action types. Earlier systems rank upsell and cross-sell candidates independently \cite{singh2006market} or assume responses under which the two subproblems separate \cite{ban2024substitutes}. Those models are the zero-coupling special case of ours: Proposition~\ref{prop:staging} shows that they remain optimal when coupling is absent and can fail arbitrarily badly when it is present.

\paragraph{Why a stochastic policy.} The word \emph{stochastic}, which gives \SB{} its name, refers to the customer, not to the search. The retailer chooses offers deterministically, but whether each offer is accepted is random, and the acceptances are correlated: a shopper who takes premium patties is more likely to take cheddar, and merely seeing one suggestion changes the response to another. A set of offers or recommendations therefore induces a probability distribution over possible final baskets, rather than determining a single final basket. Treating offers as if they were purchases overvalues every offer; scoring them one at a time ignores the correlation; and both errors grow exactly where interaction value is largest. The correct object is a policy that maximizes expected incremental margin under the joint response distribution.

That object raises a computational question. Enumerating offer sets grows exponentially with the number of candidates, ranking or staging offers carries no approximation guarantee once offers interact, and general-purpose mixed-integer or bandit search must either linearize an expectation over correlated choices or explore a combinatorial action space without certifying anything. \SB{} instead exploits linearity of expectation: expected margin depends on the response distribution only through first- and selected second-order acceptance moments (Eq.~\eqref{eq:mainmoments}), and every upsell choice influences cross-sells only through a short summary vector with one coordinate per response channel (Eq.~\eqref{eq:structured}). This compression supports both fast joint local search in ordinary markets and an exact optimizer where local search provably fails (Section~\ref{sec:algorithm}). The exact solution is also a trust anchor: for the same fitted objective, it certifies the greatest attainable predicted margin and so separates optimization error from prediction error. Recommendations may be shown at regular prices; item discounts and conditional bundle rebates are optional terms priced in the same objective, and the policy needs no persistent customer identifier.

Our experiments separate three sources of value. First, typed margin accounting matters in every evaluated market: even margin-weighted popularity leaves up to 11\% of attainable offer margin unrealized (Section~\ref{sec:empirical} defines this measure). Second, joint selection pays where premium tiers and add-ons are strongly linked, as in Instacart's organic and premium assortments, and Carbo-Loading's real prices independently confirm that such coupling exists; the same estimates yield a decision-aware market-basket diagnostic. Coupling is strongest in the most digitally active market, which we interpret as evidence that active offer programs both create and reveal the interactions \SB{} monetizes (Section~\ref{sec:impact}). Third, exact search matters in the narrower \emph{bundle-only-profitability} regime, where display friction makes each action unattractive alone although the set is profitable. Finally, profit saturates long before prediction accuracy does: with the optimizer fixed, a signal-free propensity model loses 38\% of attainable margin, while a correctly structured model trained on a few thousand randomized sessions comes within 0.06--0.3\% of perfect prediction, much as ranking skill rather than forecast precision drives value in portfolio investing.

\paragraph{Contributions.}
(1) The joint stochastic basket-editing formulation: replacement and addition actions with displaced-versus-incremental margin, feasibility constraints, display cost, and optional promotion terms in one expected-margin objective. (2) A compact response-to-value representation in which cross-sell value depends separately on displayed, accepted, and jointly accepted upsells, which shows exactly when separate pipelines are valid. (3) Evidence on five public basket datasets that economic accounting is broadly material, that supported coupling creates value in two markets, and that exact search is decisive only in the bundle-only regime. (4) For a factor-local, bounded-treewidth subclass, a characterization of when staged policies are exact, a proof that they have no constant-factor guarantee under coupling, and an exact solver verified on real interaction graphs. (5) A link from response estimation to decision regret, with a field-test and campaign design.

\begin{figure}[t]
\centering\includegraphics[width=0.95\columnwidth]{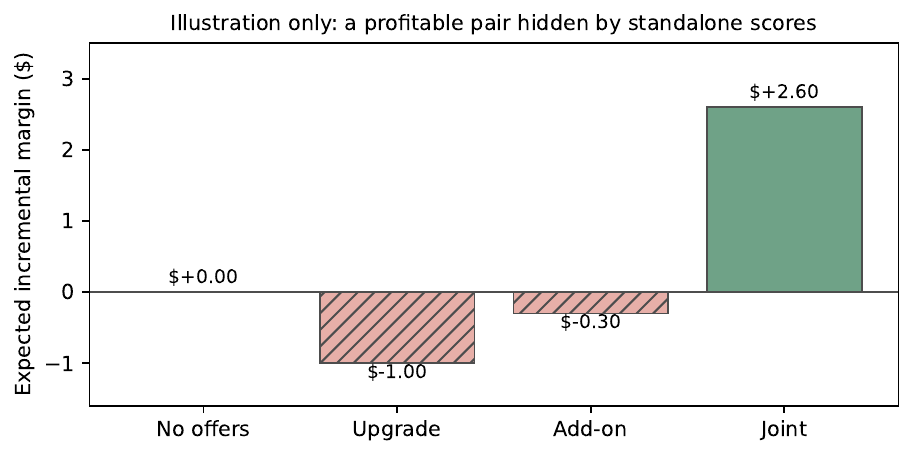}
\caption{\textbf{Illustration only} (constructed counterexample; not calibrated retail data). A deliberately chosen response law makes each singleton recommendation lose money while the joint set earns $+\$2.60$. The figure teaches the staging trap; it does not claim that typical grocery upgrades are unprofitable alone.}
\label{fig:example}
\end{figure}

\paragraph{Reading Figure~\ref{fig:example}.}
The instance is instructional. Upsell acceptance probability is $0.5$, but accepting the upsell \emph{reduces} incumbent-category margin by $\$2$ (expected $-\$1$ alone). That is not the usual ``cheap SKU $\to$ higher-margin premium''; it is the opposite case---for example a private-label to national-brand swap whose shelf price rises while the margin rate falls, or a promoted upgrade whose discount erodes the gain. The add-on (cheddar) has margin $\$10$ and base purchase probability $0.02$, so its standalone expected margin is $\$0.20$; displaying it costs $\$0.50$, hence $-\$0.30$ alone. Display cost is attached to the cross-sell and is the same whether or not the upsell is shown. Offering both raises cheddar purchase probability from $0.02$ to $0.80$ when the upsell is accepted, so the joint expected increment is $+\$2.60$. Every heuristic that requires a profitable first step therefore stops at ``show nothing,'' while the joint policy captures the interaction. Ordinary markets often have positive upgrades alone; the figure isolates the rarer bundle-only-profitability regime that Proposition~\ref{prop:staging} and the exact solver address.

\section{Recommendation System and Decision Process}\label{sec:pipeline}
\paragraph{Context and candidate actions.}
At a recommendation opportunity, the system observes a basket $B$ and context $s$, including product attributes, prices, contribution margins, inventory, store or region, and temporal features where available. We operationalize an \emph{upsell} as an upgrade that displaces an incumbent product, represented by \texttt{REPLACE(i,j)}, and a \emph{cross-sell} as an add-on that extends the basket, represented by \texttt{ADD(k)}. Compatibility rules prohibit conflicting upsells, while display and cardinality budgets restrict the number of recommendations. Contextual features condition the choice model; they are not additional combinatorial decision variables.

\paragraph{Choice estimation.}
The learning layer predicts which upsells shoppers accept, which cross-sells they buy, and how both depend on the combination shown. Estimating display effects requires logs of what was shown; purchase-only histories reveal product relationships but not responses to offers that were never made. For theory and simulation we use a finite latent-class response model: each shopping occasion has an unobserved type, such as ``premium-minded'' or ``value-minded,'' with its own acceptance probabilities, and averaging over types yields consistent individual and joint acceptance probabilities. Any estimator that supplies coherent probabilities can replace it (Section~\ref{sec:learningmain}; Appendix~\ref{app:model}).

\paragraph{Economic optimization and serving.}
A joint optimizer returns the feasible offer set with maximum expected incremental margin, choosing upsells and cross-sells together rather than in an upsell-first or cross-sell-first pipeline, and serves it through a checkout screen, cart page, or associate device. Realized revenue, displaced-product margin, promotion redemption, and display costs determine the outcome. Section~\ref{sec:formulation} defines the objective, Section~\ref{sec:algorithm} the optimizer, and Section~\ref{sec:experiments} the evaluation.

\section{Joint Stochastic Basket-Edit Formulation}\label{sec:formulation}
Four equations form the formulation: Eq.~\eqref{eq:policy} states the decision, Eq.~\eqref{eq:typedmargin} the accounting for one realized response, Eq.~\eqref{eq:structured} the value of \emph{every} candidate offer set in a structured subclass (the objective, not its solution), and Eq.~\eqref{eq:mainmoments} the response probabilities that evaluate it. Section~\ref{sec:algorithm} finds the maximizer.

\paragraph{Running example.} We illustrate every symbol with the burger basket of Section~\ref{sec:intro}. The basket contains regular patties and regular buns. The retailer can recommend two upsells, premium patties in place of the regular patties and brioche buns in place of the regular buns, and two cross-sells, cheddar and bacon. Per-unit contribution margins are \$1.50 for regular patties, \$3.00 for premium patties, and \$1.20 for cheddar. Table~\ref{tab:notation} collects the notation.

\begin{table}[t]
\caption{Notation, with the running burger example.}
\label{tab:notation}
\centering\footnotesize
\begin{tabular}{p{0.17\columnwidth}p{0.73\columnwidth}}
\toprule
Symbol & Meaning (example)\\\midrule
$B$ & Basket at decision time (regular patties, regular buns)\\
$s$ & Transaction context: prices, margins, inventory, store, time\\
$A$ & Offer set, or policy (premium patties and cheddar)\\
$D$ & Promotion terms; $D=\varnothing$ means ordinary prices\\
$\mathcal A(s;D)$ & Permitted offer sets under budgets, conflicts, and stock\\
$Y$ & Random shopper response: which offers are accepted\\
$B\oplus Y$ & Final basket after accepted edits are applied\\
$M_D,\ M_0$ & Contribution margin of a basket after promotions; of the original basket $B$\\
$F$ & Objective: expected incremental contribution margin of an offer set, net of display cost\\
$c(A,D;s)$ & Cost of presenting $A$; written $c(A;s)$ when $D=\varnothing$\\
$m_i$ & Per-unit contribution margin of product $i$\\
$g,\ G$ & Upsell group (incumbent item) and number of groups\\
$x_g,\ J_g$ & Upsell offered in group $g$ ($0$ = none) and number of alternatives\\
$C,\ M$ & Offered cross-sells and number of candidates\\
$L,\ K$ & Maximum numbers of upsells and cross-sells shown\\
$Y_g,\ Z_k$ & Indicators: upsell $g$ accepted; add-on $k$ purchased\\
$p_g,\ \rho_{gh}$ & Probability upsell $g$ is accepted; both $g$ and $h$ accepted\\
$z,\ r$ & Response summary of the upsell choice; its dimension\\
$U$ & Expected value of the upsells alone\\
$\alpha_k,\ \beta_k$ & Standalone value of add-on $k$; its sensitivity to $z$\\
$\eta_{k\ell}$ & Interaction between add-ons $k$ and $\ell$ shown together\\
\bottomrule
\end{tabular}
\end{table}

\paragraph{The decision.} For any permitted offer set $A$, define its value
\begin{align}\label{eq:policy}
F(A;s,D)={}&\E[M_D(B\oplus Y)\mid s,A,D]-M_0(B;s)-c(A,D;s),\notag\\
A^*(D)\in{}&\arg\max_{A\in\mathcal A(s;D)}F(A;s,D).
\end{align}
In words: $F$ scores a candidate offer set by how much its expected final-basket contribution margin exceeds the margin of the basket as it stands, net of the cost of making the offers; the policy chooses the permitted offer set with the highest score. Thus $M$ measures a basket, while $F$ measures an offer set. Each component has a concrete meaning.
\begin{itemize}
\item $A$ is the offer set. Recommending premium patties in place of regular patties together with cheddar is $A=\{\texttt{REPLACE}(i,j),\ \texttt{ADD}(k)\}$, with $i$ the regular patties, $j$ the premium patties, and $k$ cheddar.
\item $\mathcal A(s;D)$ is the set of permitted offer sets: at most $L$ upsells and $K$ cross-sells, at most one replacement per basket item, in-stock products only, and any merchandising rules.
\item $Y$ is the shopper's random response to $A$, for example ``accepts premium patties, declines cheddar.'' $B\oplus Y$ is the resulting final basket, here premium patties and regular buns.
\item $M_D(\cdot)$ is the contribution margin of a basket---revenue minus variable product cost---after any redeemed promotion, and $\E[\,\cdot\mid s,A,D]$ averages it over the possible responses $Y$. The subscript in $M_0(B;s)$ is zero: it is the contribution margin of the original basket before any offer is made. Subtracting it turns the expectation into an \emph{incremental} contribution margin.
\item $c(A,D;s)$ collects the costs of making the offers that $M_D$ does not already include: screen space, associate time, and shopper attention. When $D=\varnothing$ we write $c(A;s)$.
\end{itemize}
The same objective covers ordinary recommendations ($D=\varnothing$), fixed item discounts, which change realized margins and response probabilities, and conditional bundle rebates: a rebate $r$ paid only when upgrade $g$ and add-on $k$ are both accepted contributes $-r\Pr(Y_g=1,Z_k=1\mid s,A,D)$. We select offers under given promotion terms; optimizing discount levels is a distinct extension (Appendix~\ref{sec:promotion}).

\paragraph{Typed economic effects.}
Consider one upsell in group $g$ that replaces incumbent product $i$ with alternative $j$, and one cross-sell add-on $k$. Let $Y_g=1$ if the shopper accepts the upsell and $Y_g=0$ otherwise, and let $Z_k=1$ if the shopper buys the add-on. At ordinary prices, the realized margin increment inside Eq.~\eqref{eq:policy} is
\begin{equation}\label{eq:typedmargin}
\Delta M=Y_g(m_j-m_i)+Z_km_k-c(A;s).
\end{equation}
The first term is the replacement: an accepted upsell earns the \emph{difference} in margin, because the incumbent leaves the basket. Premium patties replacing regular patties earn $\$3.00-\$1.50=\$1.50$, not \$3.00. The second term is the addition: a purchased add-on earns its full margin, \$1.20 for cheddar, because nothing is displaced. The third term is the cost of presenting the offers. Taking expectations gives $\E[\Delta M]=p_g(m_j-m_i)+\Pr(Z_k=1)\,m_k-c(A;s)$, with $p_g$ defined in Eq.~\eqref{eq:mainmoments}. Because the replacement/addition type is part of the action, it changes the value optimized \emph{before} serving rather than appearing only in a report afterward. When the recommendations interact, the joint distribution of $(Y_g,Z_k)$ depends on the selected policy, so optimizing the two action types independently may be suboptimal.

\paragraph{Core structured coupling representation.}
Let the basket have $G$ upsell groups, one per incumbent item that could be upgraded; in the example $G=2$ (patties and buns). Let $x_g\in\{0,\ldots,J_g\}$ denote the upsell recommended for group $g$, where $0$ means no upsell and $J_g$ is the number of alternatives; $x_1=1$ recommends premium patties. Let $C$ be the set of recommended cross-sells, chosen from $M$ candidates; $C=\{\text{cheddar}\}$ in the example. Every offer set $A$ in this subclass is therefore represented by a pair $(x,C)$. Fix the observed context $s$ and the promotion terms $D$, and absorb those terms into the margins and response parameters. For \emph{any} feasible pair $(x,C)$, write $F(x,C;s)$ for the value $F(A;s,D)$ of Eq.~\eqref{eq:policy} when $A$ is the offer set described by $(x,C)$. Under the structured response assumptions of Appendix~\ref{app:model}, this value takes the form
\begin{align}\label{eq:structured}
F(x,C;s)={}&U(x;s)+\sum_{k\in C}\big[\alpha_k(s)+\beta_k(s)^\top z(x;s)\big]\notag\\
&+\sum_{\{k,\ell\}\subseteq C}\eta_{k\ell}(z(x;s);s),\\[-3pt]
z(x;s)={}&\sum_{f\in\mathcal H}\mu_f(x_{S_f};s).\nonumber
\end{align}
Equation~\eqref{eq:structured} assigns a value to every candidate offer set, profitable or not. The optimal offer set is the candidate with the largest value:
\begin{equation}\label{eq:structuredopt}
(x^*,C^*)\in\arg\max_{(x,C)\in\mathcal A(s;D)}F(x,C;s).
\end{equation}
Equation~\eqref{eq:structuredopt} is Eq.~\eqref{eq:policy} restricted to the structured subclass. Section~\ref{sec:algorithm} solves it exactly, and Appendix~\ref{app:model} derives Eq.~\eqref{eq:structured} from an affine customer-response model (Appendix Eq.~\eqref{eq:objective}). The terms of Eq.~\eqref{eq:structured} read as follows in the burger basket.
\begin{itemize}
\item $U(x;s)$ is the expected value of the upsells by themselves. For each offered upsell it adds the acceptance probability times the margin difference, minus the cost of showing it, plus any extra margin when two upsells are accepted together. If premium patties are accepted with probability $0.5$ and cost \$0.10 to show, they contribute $0.5\times\$1.50-\$0.10=\$0.65$.
\item $\alpha_k(s)$ is the standalone value of add-on $k$: its margin times its purchase probability when no upsell is shown, minus its display cost. If cheddar sells with probability $0.10$ on its own and costs \$0.05 to show, $\alpha_{\text{cheddar}}=0.10\times\$1.20-\$0.05=\$0.07$.
\item $z(x;s)\in\mathbb R^r$ summarizes, in $r$ numbers, everything about the upsell choice $x$ that matters to add-ons. In the response model of Appendix~\ref{app:model}, $r=3$, one coordinate per channel through which upsells change add-on purchases: (i) \emph{accepted upsells}, $\sum_gp_gt_g$, where $t_g$ measures how much accepting upsell $g$ shifts add-on demand: a shopper who switches to premium patties is more likely to add cheddar; (ii) \emph{displayed upsells}, $\sum_gv_g$, where $v_g$ measures the effect of merely showing upsell $g$: seeing the premium suggestion changes the cheddar decision even if the shopper declines it; and (iii) \emph{joint acceptance}, $\sum_{(g,h)}\rho_{gh}\zeta_{gh}$, where $\zeta_{gh}$ measures the extra shift when upsells $g$ and $h$ are both accepted: premium patties together with brioche buns signal a ``better burger'' beyond either alone.
\item $\beta_k(s)\in\mathbb R^r$ converts that summary into money for add-on $k$: each coordinate is the add-on's margin times its sensitivity to the corresponding channel. Thus $\beta_k^\top z$ is how much the chosen upsells raise or lower add-on $k$'s expected value, and $\alpha_k+\beta_k^\top z$ is its total contribution when shown.
\item $\eta_{k\ell}$ is the optional interaction between two add-ons shown together. Cheddar and bacon may compete for the same burger spend ($\eta<0$, cannibalization) or reinforce each other ($\eta>0$); the interaction may itself depend on $z$.
\item The second line states the general structure of $z$: a sum of local contributions $\mu_f$, each depending only on the upsell choices $x_{S_f}$ in a small set of groups $S_f$. The collection $\mathcal H$ of such factors includes one factor per group, contributing $(p_gt_g,v_g,0)$, and one per correlated pair, contributing $(0,0,\rho_{gh}\zeta_{gh})$. Locality is what makes exact optimization possible in Section~\ref{sec:algorithm}.
\end{itemize}
These channels are distinct economically and statistically: display can change response without acceptance, and correlated acceptance can create value not recoverable from marginal probabilities. The coupling term $\beta_k^\top z(x;s)$ is what distinguishes this formulation from prior joint treatments. When it is zero for every feasible $x$ and $k$, Eq.~\eqref{eq:structured} separates into an upsell problem and a cross-sell problem (Proposition~\ref{prop:separability}), which recovers the separable models of earlier work \cite{ban2024substitutes}.

The dimension $r$ counts response channels, not products, promotions, or action types: a discount or bundle rebate changes coefficients, not $r$ (Appendix~\ref{sec:promotion}), and a fourth coordinate would add another channel, for example separate meat and bread effects. The summary has a dual role: it makes the policy nonseparable while compressing many upsell choices into a fixed-dimensional economic state, which is what Section~\ref{sec:algorithm} exploits. The number of distinct add-on rankings the optimizer must consider grows as $M^{O(r)}$, polynomial in the number of candidates for fixed $r$.

\paragraph{Coherent customer-response moments.}
For an offered upgrade $g$, let $Y_g$ denote acceptance, and define
\begin{equation}\label{eq:mainmoments}
p_g=\Pr(Y_g=1\mid A,s),\qquad
\rho_{gh}=\Pr(Y_g=Y_h=1\mid A,s).
\end{equation}
Here $p_g$ is the probability that the shopper accepts upsell $g$ given the whole offer set, and $\rho_{gh}$ the probability of accepting both $g$ and $h$; they weight the margin terms in $U$ and the accepted and joint-acceptance channels of $z$. In the example, $p_1=p_2=0.5$ while $\rho_{12}=0.30$ exceeds the $0.25$ that independence would imply, because shoppers who upgrade one item tend to upgrade the other. These first- and second-order moments are the interface between any coherent response model---neural, latent-variable, or otherwise---and the economic objective; they must come from a valid choice distribution and respect the factor scopes the optimizer requires (Appendix~\ref{app:model}). For one upsell and one cross-sell, the mixed difference $F(1,1)-F(1,0)-F(0,1)+F(0,0)$ of Eq.~\eqref{eq:structured}, where the arguments indicate whether each offer is shown, measures the value of coupling them; in Figure~\ref{fig:example} it is $\$2.60+\$1.00+\$0.30=\$3.90$.

\section{Exact Structured Optimization}\label{sec:algorithm}
This section solves the maximization of Eq.~\eqref{eq:structuredopt}: given the value $F(x,C;s)$ of every candidate offer set, find the candidate with the largest value. Combining hyperplane arrangements with tree-decomposition dynamic programming \cite{lo2019synergy,delpia2025factor}, the multi-channel structure of Eq.~\eqref{eq:structured} identifies a nonseparable class of stochastic basket-edit policies whose global optimum is computable. The main text gives the algorithm and its intuition; Appendix~\ref{app:proofs} gives the proof, complexity, tie handling, and implementation.

The central computational obstacle is the exponential number of joint recommendation sets: trying every combination of upsells and cross-sells quickly becomes infeasible. Equation~\eqref{eq:structured} exposes structure that permits a more efficient search. In the additive cross-sell subclass of Eq.~\eqref{eq:structured} ($\eta_{k\ell}=0$), each candidate add-on has an affine score $\ell_k(z)=\alpha_k+\beta_k^\top z$, its total contribution when shown. Given a fixed upsell summary $z$ and a cross-sell budget $K$, the optimal cross-sells are the $K$ largest positive scores. The ordering of these scores changes only across pairwise-equality and zero-score hyperplanes in the $r$-dimensional summary space.

This observation yields a finite family of candidate cross-sell sets. For each candidate set, the remaining upsell objective is a sum of the local factors of Eq.~\eqref{eq:structured} and can be optimized by a count-augmented dynamic program on a tree decomposition of the upsell interaction graph, the graph that joins two upsell groups whenever they share a factor scope $S_f$. The resulting algorithm combines \emph{geometric enumeration of economically relevant cross-sells} with \emph{graphical optimization of feasible upsells}, avoiding enumeration of every possible joint basket edit.

\begin{proposition}[Exact joint optimization for structured responses]\label{prop:mainexact}
Suppose that in Eq.~\eqref{eq:structured} cross-sell contributions are additive ($\eta_{k\ell}=0$) and affine in a summary $z$ of fixed dimension $r$; the upgrade factors in $U$ and $z$ have a primal interaction graph of fixed treewidth; all input coefficients are rational; and upgrade and add-on budgets $L$ and $K$ are separate. Under coherent, policy-local acceptance moments (Eq.~\eqref{eq:mainmoments}), an offer set $(x,C)$ maximizing Eq.~\eqref{eq:structured}, and hence the expected incremental margin of Eq.~\eqref{eq:policy} within this class, can be computed in polynomial time in the encoded input size.
\end{proposition}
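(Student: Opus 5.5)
The proof splits the joint maximization into two layers. The outer layer enumerates polynomially many candidate cross-sell sets via a hyperplane arrangement in the summary space $\mathbb R^r$. The inner layer optimizes the upsells for each fixed candidate set by tree-decomposition dynamic programming.

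\emph{Step 1 (reduction to a finite family of cross-sell sets).} With $\eta_{k\ell}=0$, fix any upsell choice $x$ and let $z=z(x;s)$. The objective is then $U(x;s)+\sum_{k\in C}\ell_k(z)$ with $\ell_k(z)=\alpha_k+\beta_k^\top z$. Because the budgets $L$ and $K$ are separate, the best $C$ for this $z$ is a set of at most $K$ largest positive scores. Consider the arrangement $\mathcal R$ of the $O(M^2)$ hyperplanes $\{\ell_k=\ell_{k'}\}$ and $\{\ell_k=0\}$ in $\mathbb R^r$. Its faces, of all dimensions, number $O(M^{2r})$, polynomial for fixed $r$, and they can be enumerated in polynomial time by standard arrangement construction; rational coefficients give polynomial bit size. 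On each face the sign pattern and the order of the scores are constant, up to ties resolved uniformly on the face. Hence the face determines a top-$K$-positive set $C_R$ that is optimal for every $z$ in that face. Let $\mathcal C=\{C_R\}$; every optimal $(x^*,C^*)$ can be replaced by $(x^*,C_R)$ for the face containing $z(x^*)$ with no loss of value. The subtle point is boundary $z$ with ties. Including lower-dimensional faces, equivalently taking every top-$K$ set consistent with the weak order, handles this.

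\emph{Step 2 (inner upsell problem for fixed $C$).} For fixed $C\in\mathcal C$, the objective $U(x;s)+\sum_{k\in C}\alpha_k+\big(\sum_{k\in C}\beta_k\big)^\top\sum_f\mu_f(x_{S_f};s)$ is a sum of local factors over the scopes of $U$ and the $S_f$. The coherence and policy-locality assumption on the moments of Eq.~\eqref{eq:mainmoments} is what guarantees that $p_g$ and $\rho_{gh}$ depend only on choices inside those scopes. So the inner objective is a factor-local function on a primal graph of fixed treewidth $w$. Compute a width-$w$ tree decomposition in polynomial time (Bodlaender). Then run max-sum dynamic programming over bag assignments, with an extra counter in $\{0,\dots,L\}$ for the number of nonzero $x_g$ to enforce the upsell budget. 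Pairwise conflict constraints are encoded as $-\infty$ factors within bags. The running time is $O(G\cdot J_{\max}^{w+1}\cdot L^2)$ per set $C$, polynomial because $J_{\max}$ is part of the input.

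\emph{Step 3 (assembly).} Return the best pair over $C\in\mathcal C$ by combining Steps 1 and 2. The total time is polynomial in the encoded input size. By Step 1 the optimum lies in this family, and Step 2 solves each member exactly. Equality with Eq.~\eqref{eq:policy} within the class follows from the derivation of Eq.~\eqref{eq:structured} in Appendix~\ref{app:model}.

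I expect Step 1 to be the main obstacle. The difficulty is not enumerating faces but establishing that the family $\mathcal C$ depends only on the arrangement and not on which $z$ values are achievable. It must still contain an optimal set even though $z(x)$ ranges over a finite, irregular set and may lie on hyperplanes. I would prove this with an exchange argument on the sorted scores at the fixed point $z(x^*)$. I would also bound the bit complexity of the face-enumeration step using the rationality assumption.
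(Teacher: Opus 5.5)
Your proposal is correct and follows the paper's proof of Proposition~\ref{prop:poly} essentially step for step: you enumerate top-$K$ positive-score sets from the arrangement of the $O(M^2)$ pairwise and zero hyperplanes, run count-augmented max-sum dynamic programming on a tree decomposition for each fixed $C$, and maximize over the candidates. The only variation is tie handling: you enumerate faces of all dimensions, whereas the paper keeps only full-dimensional cells and argues by continuity that an adjacent cell's selection is optimal on a tie face. Because your faces partition all of $\mathbb R^r$, the obstacle you anticipate about which values $z(x)$ can take is already resolved, and no further exchange argument is needed.
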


\noindent\textbf{Algorithm.} (1) Estimate the response moments $p_g$ and $\rho_{gh}$ of Eq.~\eqref{eq:mainmoments} for context $s$. (2) Construct the terms $U$, $\alpha_k$, $\beta_k$, and $z$ of Eq.~\eqref{eq:structured}. (3) Enumerate the candidate top-$K$ cross-sell sets induced by the arrangement of scores $\ell_k(z)=\alpha_k+\beta_k^\top z$, including boundary ties. (4) For each candidate set $C$, maximize Eq.~\eqref{eq:structured} over upsells $x$ by tree-decomposition dynamic programming subject to the upsell budget $L$. (5) Return the pair $(x,C)$ with the largest value of Eq.~\eqref{eq:structured}; this is the maximizer $(x^*,C^*)$ of Eq.~\eqref{eq:structuredopt}.

\paragraph{What the certificate means operationally.}
Under the assumptions of Proposition~\ref{prop:mainexact}, no other feasible offer set has greater value under the supplied response estimates and economic objective. A retailer may therefore run any faster heuristic it prefers and use the exact policy selectively or offline as a trust anchor: the difference $F(x^*,C^*;s)-F(\widehat x,\widehat C;s)$ measures the heuristic's optimization loss on the same basket and model. Exact execution makes this search loss zero and isolates the remaining uncertainty in the response estimates, margins, and structural assumptions. It does \emph{not} make poor data accurate or an omitted customer-response mechanism harmless; Sections~\ref{sec:learningmain} and \ref{sec:limitations} quantify those separate prediction and specification risks.

The proof and exact assumptions appear in Appendix~\ref{app:proofs} (Proposition~\ref{prop:poly}). Because upgrade factors may span several groups, the statement also covers local exposure effects in which displaying one upsell changes another's acceptance. Add-on interactions ($\eta_{k\ell}\neq0$) lie outside this exact class; without structure on them, the problem is NP-hard (Appendix~\ref{app:hardness}). The implementation samples every arrangement cell in exact rational arithmetic and runs the dynamic program on a min-fill tree decomposition; it matches exhaustive search on cyclic graphs of induced width up to four for $r=3$ and $r=4$ (Appendix~\ref{sec:twexp}). The arrangement has $M^{O(r)}$ cells, so bounded $r$ is a substantive modeling assumption.

\paragraph{Five heuristic solution strategies.}
Practical systems often avoid the joint problem by staging it. We compare the exact optimizer with five heuristics that solve the \emph{same} problem, Eq.~\eqref{eq:structured}, with the same objective and the same exact sub-solvers for the best upsells given cross-sells ($x\mid C$) and the best cross-sells given upsells ($C\mid x$). They differ only in how they search, not in their customer model (Table~\ref{tab:strategies}).

\begin{table*}[t]
\caption{The five heuristic strategies and the exact optimizer, all solving Eq.~\eqref{eq:structured} with the same objective and sub-solvers. ``Trap'' refers to bundle-only profitability (Figure~\ref{fig:example}; Proposition~\ref{prop:staging}).}
\label{tab:strategies}
\centering\footnotesize
\begin{tabular}{>{\raggedright\arraybackslash}p{0.15\textwidth}>{\raggedright\arraybackslash}p{0.36\textwidth}>{\raggedright\arraybackslash}p{0.41\textwidth}}
\toprule
Strategy & How it chooses offers & Strengths and weaknesses\\\midrule
1. Independent & Picks upsells as if no cross-sell were shown (maximizes $U(x)$) and cross-sells as if no upsell were shown (maximizes $\sum_{k\in C}\alpha_k$). & $+$ Fastest; two small separate problems; optimal when offers do not interact. $-$ Ignores all cross-action coupling; largest gap when coupling is strong.\\
2. Upsell-first & Keeps the independent upsells, then picks the best cross-sells given their summary $z(x)$. & $+$ Cheap; captures the effect of upsells on cross-sells. $-$ Upsells are chosen blind to cross-sells, and a poor first stage is never revised.\\
3. Cross-sell-first & Keeps the independent cross-sells, then picks the best upsells given them. & $+$ Cheap; picks upsells that raise the value of the chosen cross-sells. $-$ Cross-sells are chosen blind to upsells; no revision.\\
4. Alternating best response & Starts from the better of strategies 2 and 3, then alternates $x\mid C$ and $C\mid x$ until neither step improves. & $+$ Fast and usually near-exact; within 0.14\% of exact on Carbo-Loading. $-$ Stops at a local optimum; can stay at ``show nothing'' in the trap; no worst-case guarantee.\\
5. Greedy & Repeatedly adds the single offer with the largest positive gain until no offer helps. & $+$ Simple; needs no structure in the objective. $-$ Myopic: requires a profitable first step, so it fails in the trap and when pairs interact.\\
Exact (\SB{}) & Enumerates candidate cross-sell sets by score geometry and optimizes upsells by tree-decomposition dynamic programming (Proposition~\ref{prop:mainexact}). & $+$ Certified global optimum; captures profitable sets with unattractive parts. $-$ Needs fixed $r$ and bounded treewidth; more compute (0.13\,ms median, 111\,ms maximum on real baskets).\\
\bottomrule
\end{tabular}
\end{table*}

\begin{proposition}[Staging gap]\label{prop:staging}
(i) Independent $\le$ upsell-first, cross-sell-first $\le$ alternating $\le$ optimum, for every instance. (ii) If $\beta_k^\top z(x)=0$ for all feasible $x$ and $k$, and the budgets are separate, then independent, both sequential policies, and alternating best response are optimal; greedy need not be when upsell or add-on pairs interact. (iii) For every $\lambda>0$ there is an instance with one upsell and one cross-sell on which all five policies return value $0$ while the optimum is $2.6\lambda$. Hence none of them has a positive approximation ratio. (iv) For every $n\ge1$ there is a valid instance with $n$ upsells and one cross-sell in which every nonempty policy with at most $n$ offers has negative value, while offering all $n+1$ is profitable. Local search that changes up to $n$ offers per step, started from the empty policy, therefore also returns $0$.
\end{proposition}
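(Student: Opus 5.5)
Part (i) follows from the sub-solver structure. Write $x^I,C^I$ for the independent choices. The upsell-first policy returns $(x^I,C^*(x^I))$, where $C^*(x)$ is the exact best response $C\mid x$. Because $C^I$ is feasible in that subproblem, $F(x^I,C^*(x^I))\ge F(x^I,C^I)$. The same argument applies to cross-sell-first with $x^*(C^I)$. Alternating best response starts from the better of the two sequential policies, and each step is an exact best response that is accepted only if it does not decrease $F$, so the value is monotone nondecreasing. Every returned pair is feasible, so its value is at most the optimum. One caveat: I will state the chain as ``independent $\le$ each sequential $\le$ alternating'', since the two sequential policies need not be comparable with each other.

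Part (ii) follows from separability. If $\beta_k^\top z(x)=0$ for all feasible $x$ and $k$, and $\eta_{k\ell}=0$ is understood as in the additive class, then Eq.~\eqref{eq:structured} reduces to $F(x,C)=U(x)+\sum_{k\in C}\alpha_k$. With separate budgets the feasible set is a product, so $\max F=\max_x U+\max_C\sum\alpha_k$, attained by the independent pair. This is the content of Proposition~\ref{prop:separability}, which I will invoke. Part (i) then forces every policy that dominates independent to be optimal as well.

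For greedy I need a counterexample built from a pair interaction that the separability argument permits. I will use two upsell groups with $U$ containing a joint-acceptance bonus: each singleton has value $-\epsilon$, but the pair has value $+1$. Greedy starts at the empty set, finds no profitable first step, and stops, while independent maximization of $U$ finds the pair. A symmetric construction works for add-ons if $\eta_{k\ell}\neq0$ is allowed.

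Part (iii) scales Figure~\ref{fig:example} by $\lambda$. The upsell has $p=0.5$ and margin difference $-2\lambda$. The add-on has margin $10\lambda$, base probability $0.02$, probability $0.8$ conditional on upsell acceptance, and display cost $0.5\lambda$. This gives $F(0,0)=0$, $F(1,0)=-\lambda$, $F(0,1)=-0.3\lambda$ and $F(1,1)=-\lambda+0.5(0.8)(10\lambda)+0.5(0.02)(10\lambda)-0.5\lambda=2.6\lambda$. I will check that this instance is a valid latent-class response and fits the affine form with $z=p\,t$.

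Independent picks $x=0$ and $C=\varnothing$. Both sequential policies then best-respond to an empty partner and pick nothing, because each singleton is negative. Alternating is stuck at $(0,\varnothing)$ for the same reason, and greedy has no positive first step. All five return $0$ against an optimum of $2.6\lambda>0$, so the ratio is $0$.

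Part (iv) is the main obstacle: I must build a valid coherent response under which every nonempty set with at most $n$ offers is negative. My first attempt will be a latent-class model with a rare ``complete-bundle'' type. Each upsell carries a margin loss $-a$ if accepted, and the add-on has display cost $c$. Display effects are calibrated so that the add-on's purchase probability jumps only when all $n$ upsells are displayed, for example through the displayed channel with threshold-like weights realized through a joint-acceptance factor of full scope.

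The affine summary only allows pairwise $\rho$, so to get an all-or-nothing effect I will instead make acceptances perfectly correlated: all upsells are accepted together with probability $p$. The add-on coefficient is then $\beta=b/n$ on the count of accepted upsells, with each upsell's own loss $a$ chosen large. For a subset of $m$ upsells plus the add-on, the value is $-mpa+p(m/n)b\,m_k-c$. This is linear in $m$, so negativity for all $m\le n$ but positivity at $m=n$ with the add-on fails unless the feasibility checks are done only at the endpoints. Linearity therefore defeats the simple version, and I expect to need a superlinear term supplied by the joint-acceptance channel: weights $\zeta_{gh}$ summed over $\binom m2$ pairs give a quadratic in $m$.

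I will choose parameters so that the quadratic $\phi(m)=-mpa-c+\kappa\binom m2$ with the add-on, and $-mpa$ without it, is negative for $m<n$ and positive at $m=n$. I will verify that the value at $m=n$ without the add-on is also negative. Sets containing all $n$ upsells but lacking the add-on have exactly $n$ offers and are negative, so every set of at most $n$ offers is negative. Any local search from the empty set that changes up to $n$ offers per step sees only such sets and returns $0$.
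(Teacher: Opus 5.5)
Parts (i)--(iii) match the paper's argument and are fine: feasibility of the first-stage choice in each later subproblem, separability over a product feasible set, and the scaled instance of Figure~\ref{fig:example}. The gap is in (iv), and it starts with a miscount. A policy that shows the add-on and $m$ upsells has $m+1$ offers. The requirement is therefore $\phi(m)<0$ for $m\le n-1$ and $\phi(n)>0$, not negativity for all $m\le n$. An increasing affine $\phi(m)=m\delta-A$ meets this whenever $\delta>0$ and $(n-1)\delta<A<n\delta$, so linearity does not defeat the simple version. What would defeat it is your choice of a ``large'' loss $a$, which makes the per-upsell slope $p(\beta m_k-a)$ negative.

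The affine version is exactly the paper's construction:
\begin{itemize}
\item each upsell has acceptance probability $\tfrac12$, accepted margin change $-0.2$, and accepted feature $t=1/n$;
\item one add-on has margin $10$, base rate $0$, and unit slope, so its conditional purchase probability $\sum_gY_g/n$ lies in $[0,1]$ for every realization;
\item each upsell is worth $-0.1$ alone and $\delta=5/n-0.1$ once the add-on is shown, and the add-on's display cost is $A=(n-\tfrac12)\delta$.
\end{itemize}
Letting the add-on margin exceed $0.2n$ keeps $\delta>0$ for every $n$, not only $n<50$.

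Your quadratic replacement does not rescue the full statement. Because it relies only on the joint-acceptance channel, it gives $\phi(1)=-pa-c<0$ when $n=1$, since $\binom12=0$. There is no pair to reward, so the bundle is never profitable, and the claim fails for $n=1$ unless you fall back on (iii).

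For $n\ge2$ the convex $\phi$ can be made to satisfy $\phi(n-1)<0<\phi(n)$. However, the validity check you defer is binding. With perfectly correlated acceptances, the add-on's conditional probability when all $n$ upsells are accepted is $a_k+h\zeta\binom n2$, which must be at most $1$. Combined with $\kappa(n-1)>pa$, where $\kappa=p\,m_kh\zeta$, this forces the add-on margin to grow roughly like $na/2$. That is satisfiable, but it is more machinery than needed. Correcting the offer count and using the accepted-upsell channel alone gives one construction that covers every $n\ge1$.
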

\noindent\emph{Proof sketch.} (i) Each later policy optimizes over a set containing the earlier policy's choice. (ii) Under the condition, $F(x,C)=U(x)+\sum_{k\in C}\alpha_k$ by Proposition~\ref{prop:separability}, so the independent policy is optimal, and (i) closes the chain. (iii) Scale the instance of Figure~\ref{fig:example} by $\lambda$. From the empty policy, each single action has negative value, so each staged step, each best response, and each greedy move retains the empty policy. (iv) Each upsell loses a little alone but gains $b>0$ when the cross-sell is shown; the cross-sell alone loses $A$ with $(n-1)b<A<nb$. Appendix~\ref{app:staging} gives the explicit valid response model.

\section{Learning Customer Choice and Decision Quality}\label{sec:learningmain}
A finite latent-class model provides one coherent construction for the moments of Eq.~\eqref{eq:mainmoments}. Let a latent preference class $Q$ take value $q$ with probability $\pi_q$, and let $p_{qg}$ denote the acceptance probability for upgrade $g$ conditional on class $q$ and its exposure. Under conditional independence within a class,
\begin{equation}\label{eq:latentclass}
p_g=\sum_q\pi_q p_{qg},\qquad
\rho_{gh}=\sum_q\pi_q p_{qg}p_{qh}.
\end{equation}
In the burger example, one class might be ``premium-minded'' shoppers who accept both upgrades with high probability and another ``value-minded'' shoppers who accept neither; mixing the two produces $\rho_{12}>p_1p_2$ even though choices are independent within each class. This latent-class construction gives a concrete route to mutually coherent moments. The mixture induces correlated decisions after marginalizing $Q$. Its class weights and acceptance rates can be conditioned on the basket, store, and time; individual purchase history can provide additional features. The optimizer requires coherent response moments, rather than persistent identity information or a particular neural architecture.

Choice estimation affects decisions through errors in expected policy values. Let $F(A)$ denote the true value of offer set $A$ (Eq.~\eqref{eq:policy}, or Eq.~\eqref{eq:structured} in the structured class), and let $\widehat F$ be the value function implied by estimated moments, with $\sup_{A\in\mathcal A}|F(A)-\widehat F(A)|\leq\varepsilon$. If $\widehat A$ maximizes $\widehat F$, the standard plug-in decision inequality gives
\begin{equation}\label{eq:regret}
F(A^*)-F(\widehat A)\leq 2\varepsilon.
\end{equation}
Section~\ref{sec:insights} measures this link directly by varying predictor quality while holding the optimizer fixed (Figure~\ref{fig:prediction}), complementing existing choice-learning and decision-regret analyses \cite{erginbas2025online}.

\section{Economic Results and Evaluation}\label{sec:experiments}
\paragraph{Coupling multiplier.} \emph{Cross-action coupling} is the term $\beta_k^\top z(x)$ through which upsells change a cross-sell's value. In the experiments a dimensionless multiplier $\gamma\ge0$ scales only these terms, holding margins, base rates, display costs, candidates, and budgets fixed: $\gamma=0$ removes coupling, $\gamma=1$ is the generated or receipt-calibrated reference strength, and $\gamma=2$ is a doubled stress test. It is not a probability. If an upsell raises an add-on's purchase probability from 8\% to 12\% at $\gamma=1$, the same channel gives 8\%, 10\%, and 16\% at $\gamma=0$, $0.5$, and $2$.

\begin{figure*}[t]
\centering\includegraphics[width=0.85\textwidth]{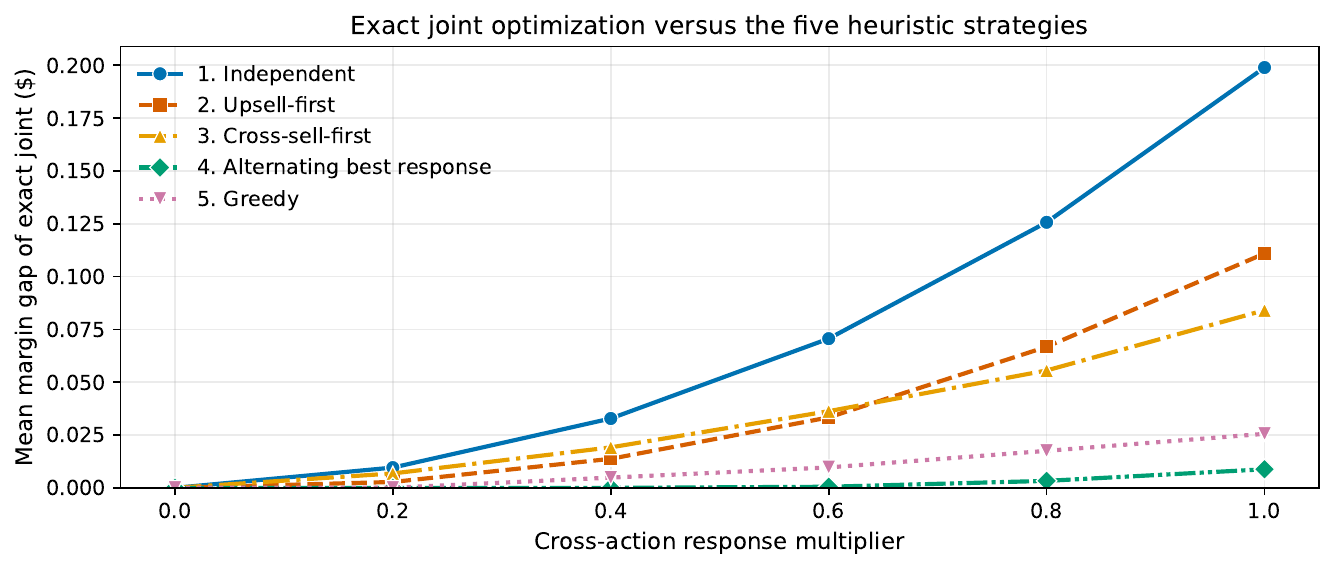}
\caption{Mean expected-margin gap between the exact joint optimum and each of the five heuristic strategies of Table~\ref{tab:strategies} as the multiplier $\gamma$ scales all generated upsell-to-cross-sell response terms. The same 60 synthetic instances are evaluated at each multiplier; $\gamma=0$ removes cross-action coupling and $\gamma=1$ restores the generated reference strength. All staged strategies are exact at zero coupling.}
\label{fig:sweep}
\end{figure*}
\paragraph{Synthetic optimization experiments.}
In the 60-seed interaction sweep, every staged baseline and alternating best response coincides with the joint optimum at zero coupling, as Proposition~\ref{prop:staging}(ii) requires. The gaps grow with the cross-action effect (Figure~\ref{fig:sweep}). At full coupling, the mean advantage of the exact joint policy, and the number of instances on which it is strictly better, is $\$0.199$ (44/60) over independent, $\$0.111$ (32/60) over upgrade-first, $\$0.084$ (22/60) over add-on-first, $\$0.026$ (10/60) over greedy, and $\$0.009$ (6/60) over alternating best response. Best-response iteration recovers most of the independent gap but, by Proposition~\ref{prop:staging}(iii), carries no worst-case guarantee. The exact solver also matches exhaustive optimization on all reported small structured instances, including cyclic upgrade graphs (Appendix Table~\ref{tab:checks}).

\subsection{Real-data-calibrated evaluation}\label{sec:empirical}
No public dataset records which recommendations were displayed and accepted, so we evaluate by \emph{real-data-calibrated simulation} (Appendix~\ref{app:empirical}). Five public retail datasets provide real baskets and market structure. \emph{Complete Journey} supplies prices, discounts, stores and timestamps; margins are assumed. \emph{Ta Feng} supplies prices and per-line cost, hence real margins. \emph{Instacart} supplies large-scale in-cart sequences but requires price imputation. \emph{Online Retail II} is a non-grocery UK giftware replication. \emph{Carbo-Loading} supplies paid prices and built-in complementary categories---pasta and sauce, pancake mix and syrup---plus brand tiers and store promotion flags. Real test-period baskets define the decisions: up to four upsell groups with two premium alternatives each, six add-on candidates, and budgets $L=K=2$. Training receipts calibrate acceptance shares and the three channels of Section~\ref{sec:formulation}. Because true offer responses are unobserved, each experiment fixes one of three simulated \emph{response truths}, which determine how shoppers actually react and against which every policy is scored. \textbf{S1} (in-family) follows the response model of Section~\ref{sec:formulation}. \textbf{S2} (out-of-family) uses logistic responses and latent shopper classes that the fitted model cannot represent exactly. \textbf{S3} (stress) adds add-on cannibalization, offer fatigue, and a train--test shift. Appendix~\ref{app:empirical} gives their full specifications. Policies are learned from simulated randomized offer logs and scored by exact expected margin under the truth. We report incremental contribution margin per 1{,}000 test baskets with paired bootstrap intervals. Appendix~\ref{app:empirical} describes each dataset and lists its source and license.

\begin{table*}[t]
\caption{\textbf{Economic scorecard across five markets} (S1, the in-family response truth of Section~\ref{sec:empirical}; percentage of oracle offer margin). Economic-policy gain is the learned exact policy over margin-weighted popularity in the base regime; it combines response learning and decision optimization rather than isolating accounting alone. Joint-selection gain uses EB shrinkage where available. Exact-search premium is the largest portfolio gap in the tested display-friction map through $\tau=3$ for Complete Journey/Ta Feng and $\tau=2$ for Instacart/Carbo-Loading; Online Retail II has only the base comparison. $^\dagger$: 63.8\% uses EB shrinkage and default imputed prices; alternative price assumptions give 2--89\%.}
\centering\footnotesize
\input{tables/empirical_economic_summary.tex}
\label{tab:economic-summary}
\end{table*}

\begin{table*}[t]
\caption{Exact joint policy minus each baseline, in expected incremental margin per 1{,}000 baskets (USD; NT\$ for Ta Feng; Instacart prices imputed). Oracle and Exact are absolute values. All model-based policies use the same fitted model. $^\dagger$: 95\% interval contains zero. $^-$: baseline significantly better.}
\centering\footnotesize
\input{tables/empirical_e1.tex}
\label{tab:empirical}
\end{table*}

\begin{figure*}[t]
\centering\includegraphics[width=0.92\textwidth]{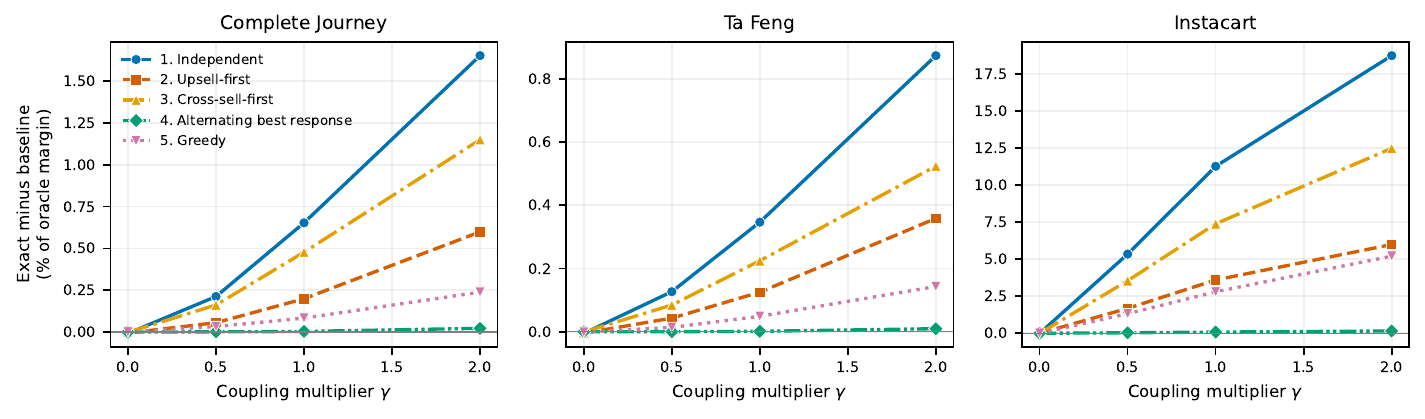}
\caption{Exact joint policy minus each baseline as the receipt-calibrated cross-action channel effects are scaled by $\gamma$ (in-family truth S1, Section~\ref{sec:empirical}), as a percentage of the oracle's expected margin. Here $\gamma=0$ removes the calibrated spillovers, $\gamma=0.5$ halves them, $\gamma=1$ is the reference calibration, and $\gamma=2$ doubles them as a stress test.}
\label{fig:empirical}
\end{figure*}

\begin{figure*}[t]
\centering\includegraphics[width=0.98\textwidth]{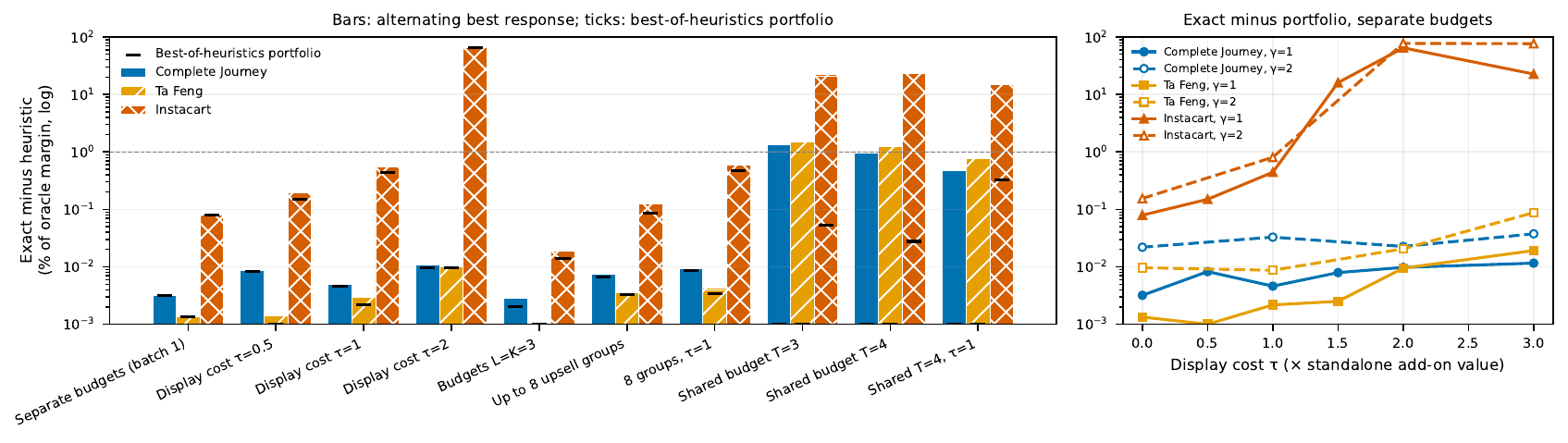}
\caption{\textbf{Where exact optimization matters.} Left: exact minus alternating best response (bars) and a post-hoc best-of-heuristics portfolio (ticks), as a percentage of oracle offer margin. Right: exact minus the portfolio under separate budgets as display cost increases; $\tau$ is display cost divided by mean standalone add-on value and $\gamma$ scales cross-action coupling. Weakly coupled Complete Journey and Ta Feng remain easy. Instacart enters a bundle-only-profitability regime in which all tested heuristics fail.}
\label{fig:whereexact}
\end{figure*}

\paragraph{Economic accounting is first-order.} Table~\ref{tab:economic-summary} separates the economic-policy gap over a margin-weighted popularity rule, the joint-selection gain, and the exact-search premium across all five markets; Table~\ref{tab:empirical} gives the primary three-dataset baseline detail, and Appendix Table~\ref{tab:moredata} gives the external-data rows. Unless stated otherwise, percentages are shares of \emph{attainable offer margin}: the oracle policy's expected incremental margin under the simulated truth, not total store profit. On Instacart this is about \$0.47 per basket, so leaving 11\% unrealized forgoes about 5 cents per basket; equivalently, \SB{} earns about 13\% more than that baseline. Against this benchmark, the learned \SB{} policy comes within 0.1\% of the oracle in every market when the simulated response matches its model family, and within 0.7\% when it does not (Table~\ref{tab:empirical}; Appendix Table~\ref{tab:moredata}). A relevance-style association rule that ignores displaced and incremental margins forfeits \$534 of \$536 per 1{,}000 Complete Journey baskets, 52\% of attainable margin on Ta Feng, 19\% on Instacart, and 15\% on Carbo-Loading. Weighting popularity by margin recovers most of that loss but still trails the learned economic policy by 2.4\%, 1.5\%, 11.4\%, 1.0\%, and 4.7\% on Complete Journey, Ta Feng, Instacart, Online Retail II, and Carbo-Loading, respectively. On Carbo-Loading the gap reaches 27\%, 54\%, and 98\% as display cost rises from one to two times standalone add-on value. Correctly valuing replacement versus addition is therefore the broad economic result; interaction-aware optimization determines how much more can be recovered.

\begin{table*}[t]
\caption{\textbf{Against prior art on shared benchmarks} (E20; percentage of oracle offer margin, 3{,}000 test baskets per row). TIFU-KNN \cite{hu2020tifuknn} is run from the reference code of the next-basket benchmark study \cite{li2023nbrcheck} with its published hyperparameters. $\omega$ scales a personal-habit effect on add-on purchase rates; $\omega=0$ is the reference in-family S1 truth (Section~\ref{sec:empirical}). \emph{StoBasket + TIFU} adds the TIFU-KNN forecast as one regressor in the same response model; \emph{relevance} fills every offer slot with TIFU-KNN's top-scored items; \emph{$\times$ margin} reranks them by score times margin gain. Bold: best at the shown precision.}
\centering\footnotesize
\input{tables/empirical_prior_art.tex}
\label{tab:priorart}
\end{table*}

\paragraph{Against prior art on the same benchmarks.} The baselines form three tiers, all applied to the same candidates and economics. Practice rules are classical association rules \cite{agrawal1993mining,singh2006market} and profit-aware item ranking by margin-weighted popularity \cite{louca2019profit,debiasio2024economic}; they measure the value of correct economics. A published recommender, TIFU-KNN, is described below. The strongest tier is the five heuristics of Table~\ref{tab:strategies}---including independent selection as in Singh et al.'s candidate lists and the separable regime of Ban et al. \cite{singh2006market,ban2024substitutes}---plus a post-hoc portfolio of all of them. They use the same fitted model, margins, and constraints as \SB{}, so they isolate the value of search alone: against them the exact policy nearly ties in ordinary markets, and its gains concentrate where offers pay only together (Section~\ref{sec:insights}). Complete Journey's source, Ta Feng, and Instacart are also the standard next-basket benchmarks \cite{li2023nbrcheck}, so we also run TIFU-KNN \cite{hu2020tifuknn}, a personalized-frequency method that matches or exceeds deep next-basket models on them. Because it uses the shopper's purchase history, which \SB{}'s anonymous model does not, the simulated shopper receives a personal habit: add-on purchase rates are scaled by $\pi^\omega$, where $\pi$ is the customer's own category purchase rate relative to the population. Used as the recommender, TIFU-KNN leaves 17--111\% of attainable offer margin unrealized; reranked by margin, it still leaves 11--52\% (Table~\ref{tab:priorart}). The losses arise where \SB{}'s accounting applies: on Complete Journey 44\% of the upgrades it shows displace margin, on Ta Feng short histories leave 55\% of premium alternatives unscored, and on Instacart it picks habitual add-ons rather than those the current basket makes likely. As one extra regressor in \SB{}'s response model, the same forecast adds 0.3--0.4\% on Complete Journey and Ta Feng and 6.0\% on Instacart when habits matter ($\omega=1$), and at most 0.012\% when they do not. Next-basket recommenders supply better forecasts; \SB{} converts them into margin.

\paragraph{Joint selection monetizes supported interactions.} EB shrinkage reduces the S1 exact-minus-independent gain from \$3.50 to \$0.91 per 1{,}000 baskets on Complete Journey (0.17\% of the oracle), eliminates the Ta Feng gain, and retains \$47 of \$53 on Instacart (10.2\%). Carbo-Loading supplies a complementary real-price result: exact gains \$1.65 (0.73\%) under the in-family truth S1, 1.22\% under the out-of-family truth S2, and 2.63\% when coupling is doubled. Its real-label gain survives both item- and line-label placebos, which yield \$0.13 and $-\$0.22$, and EB shrinkage leaves it unchanged because the training sample is large. A second measure uses a narrower base, \emph{coupling value}: the part of the oracle policy's margin that exists only because of cross-action interactions, i.e., its value minus the value of the same offers with coupling switched off. In Instacart and Carbo-Loading, joint selection captures 32--52\% of that coupling value (Appendix Table~\ref{tab:couplingshare}). The method converts a comparable share of interaction opportunity into margin; what differs most across markets is whether interactions carry 2\% or more than 30\% of attainable value.

\subsection{Decision-Aware Market-Basket Analysis}\label{sec:marketbasket}
StoBasket turns classical market-basket analysis from a descriptive association report into a decision-aware economic diagnostic. It keeps the familiar support, confidence, and lift \cite{agrawal1993mining,singh2006market} and adds premium-versus-regular anchors, empirical-Bayes shrinkage and placebos, signed cross-action effects, contribution margin, replacement-versus-addition semantics, and the operational test: whether an association is large enough to change the optimized offer set.

\paragraph{Scaling in common notation.} Consider $N$ baskets over a catalog of $d$ items, with $b_n$ items in basket $n$. Classical mining of associations up to order $q$ faces $\sum_{j\le q}\binom{d}{j}=O(d^q)$ candidate itemsets---polynomial in the catalog but exponential in the order---and counting requires $\sum_n\binom{b_n}{q}$ \emph{updates}, each adding one to the tally of a combination found in a basket; a 10-item basket alone contains 210 four-item combinations. \SB{} pools items into $G$ anchor groups and $M$ add-on cells ($G,M\le d$) and estimates one signed premium-versus-regular contrast per anchor/add-on pair plus selected local moments: $O(GM+|E|)$ quantities from $\sum_na_nm_n$ sparse updates, where $a_n,m_n\le b_n$ count the anchor groups and add-on cells in basket $n$. Neither count grows with $q$, because effects of several offers enter through the additive channel summary rather than as separate itemsets. At one million Instacart orders (46{,}798 products, 10.1 items per order, 134 aisle-level groups), four-item mining faces $2\times10^{17}$ candidate itemsets and $3.1\times10^9$ subset updates, whereas \SB{} estimates 17{,}956 contrasts from $7.6\times10^7$ updates, with the average anchor group observed in about 54{,}000 baskets (Appendix Table~\ref{tab:mbascaling}). Pair counting costs about the same in both methods; the gap opens with association order and with the number of quantities each basket must support. A single support threshold over that large candidate space creates the rare-item dilemma---high support discards infrequent but valuable relationships, low support yields a combinatorial number of noisy patterns \cite{liu1999multiple}---whereas \SB{} shrinks noisy contrasts toward zero by empirical Bayes. Because the same contrasts already feed the optimizer, the report is a natural byproduct rather than a second mining model. It is sparsity-aware, not sparsity-proof: relationships with no exposure remain unidentified, and a genuinely irreducible three- or four-item effect outside the additive summary requires an extension.

The patterns are data-derived. In Carbo-Loading's training receipts, baskets anchored by premium pancake mix contain Aunt Jemima syrup 11.3\% of the time, versus 3.3\% for regular pancake-mix baskets (a $3.43\times$ relative rate); private-label syrup appears 9.4\% versus 28.4\%. Premium pasta-sauce baskets contain Barilla pasta 8.2\% versus 4.8\% (Appendix Table~\ref{tab:marketbasket}). Like all market-basket statistics, these are observational; they give an interpretable opportunity report before any policy is served, and the implementation is suitable for separate open-source release as a market-basket-analysis component.

\subsection{Optimization, Learning, and Deployment Insights}\label{sec:insights}

\paragraph{Real coupling does not always require exact search.} The same Carbo-Loading report reveals coherent complement and avoidance patterns, but alternating search captures them too; the heuristic portfolio remains within 0.14\% of exact even when display cost is twice standalone cross-sell value. Online Retail II and the noise-corrected Complete Journey and Ta Feng results show the same operational conclusion for different reasons: either coupling is weak or local search captures it. These markets benefit from the \SB{} economic objective without paying for exact search.

\paragraph{Coupling plus display friction: exact becomes necessary.} Instacart's organic and premium tiers induce stronger upgrade-to-add-on associations. With separate budgets and display cost $\tau=1.5$ times mean standalone add-on value, the heuristic portfolio gives up 16\% of oracle margin. At $\tau=2$ it gives up 65\% under fixed shrinkage and 64\% under EB shrinkage; when coupling is also doubled, the fixed-shrinkage gap is 78\% (\$138 of \$177 per 1{,}000 baskets). Exact search remains within 0.2--0.5\% of the oracle in the default calibration. Here each component can be unattractive after display cost although the coupled pair is profitable, so every local method refuses the first move---the real-basket analogue of Proposition~\ref{prop:staging}(iii). The size of the failure is price-sensitive: at $\tau=2$, the portfolio gap is 89\% with a 10\% organic premium, 5.8\% with a 40\% premium, and 2.0\% after item-level price noise; exact regret is at most 0.5\% except in the low-value 10\% arm. Shared offer budgets break naive alternating by 1.3\%, 1.5\% and 21\% on Complete Journey, Ta Feng and Instacart, respectively, but enumerating every fixed upsell/add-on budget split repairs the gap to at most 0.07\%. Thus shared budgets alone do not establish a need for exact search; bundle-only profitability does, while prices determine how often that regime occurs. In this regime display costs consume most of the value---attainable margin falls from \$474 to \$46 per 1{,}000 Instacart baskets at $\tau=2$---so the 65\% heuristic loss is about 3 cents per basket. The finding concerns reliability: no heuristic signals when it has entered this regime, and prices alone move its loss from 2\% to 89\%. Appendix~\ref{app:batch2} reports the full map.

\paragraph{Model adequacy and economic robustness.} Under the stress truth S3 (Section~\ref{sec:empirical}), a base model that omits same-department cannibalization and offer fatigue loses 12--17\% of oracle margin. Adding those two mechanisms to the fitted model recovers their true parameters and reduces regret to 0.13--0.32\% even with the test-period shift. This controlled result shows that richer response structure can repair known misspecification, not that unknown misspecification is harmless. Bundle economics also matter: on Instacart, ignoring a rebate equal to the add-on margin loses \$34 per 1{,}000 baskets (7.6\% of oracle), whereas alternating search with the rebate-aware objective stays within \$1.3. Dominick's accounting data give national-brand and private-label gross margins of 22.1\% and 36.8\%; substituting these real rates for Complete Journey's assumed rates changes the joint gain only from 0.65\% to 0.58\%.

\begin{figure*}[t]
\centering\includegraphics[width=0.92\textwidth]{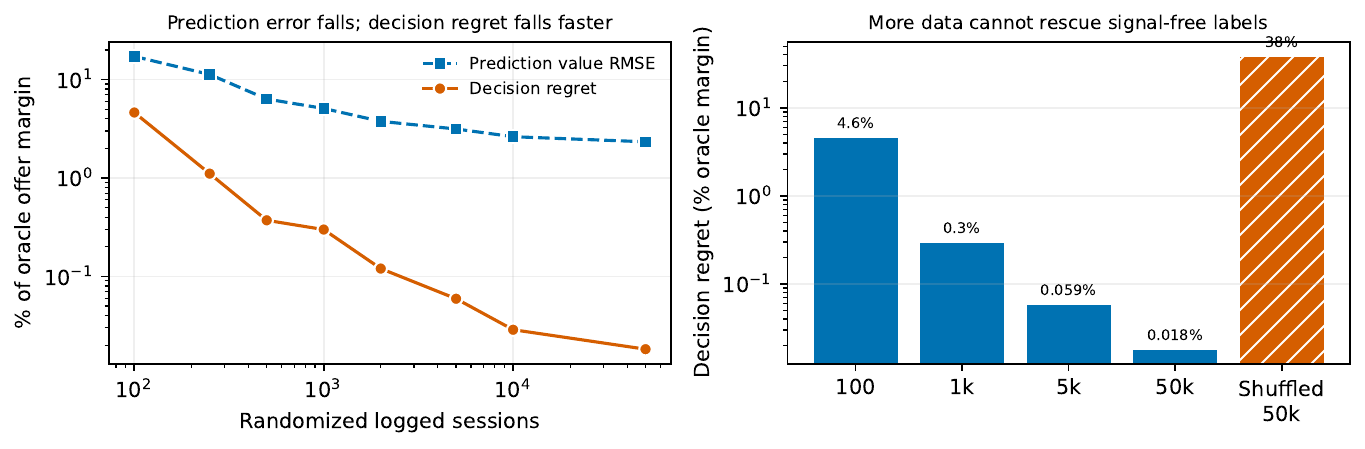}
\caption{\textbf{How much prediction quality matters} (Instacart, exact optimizer held fixed). Left: as randomized logged sessions grow, prediction error in policy values (squares, dashed) falls slowly while decision regret (circles, solid) falls much faster. Right: decision regret for learned predictors versus a predictor fit to 50{,}000 sessions with shuffled outcomes (hatched). Percentages are of oracle offer margin; both axes in the left panel and the right panel's vertical axis are logarithmic.}
\label{fig:prediction}
\end{figure*}

\paragraph{How much does prediction quality matter?} \SB{} accepts any coherent predictor, so we ask how much the response predictor, often called the propensity model, changes the economic result when everything else is fixed. On Instacart, the market with the strongest surviving coupling, truth, candidates, budgets ($L=K=2$), economics, and the exact optimizer are identical in every arm. Learned predictors fit the same three-channel model---a logistic regression for upsell acceptance, one correlation factor for joint acceptance, and a linear regression for add-on purchase (Appendix~\ref{app:empirical})---to between 100 and 50{,}000 randomized offer sessions. A signal-free predictor fits it to 50{,}000 sessions whose outcomes are permuted across all sessions and items: overall acceptance rates survive, but which items and offers work does not, so it effectively ranks offers by margin alone (Figure~\ref{fig:prediction}; Appendix Table~\ref{tab:prediction}). Prediction quality is decisive at the bad end: the signal-free predictor loses 38.4\% of attainable offer margin, and 100 valid sessions lose 4.6\%. Decision value then saturates quickly: 500 sessions come within 0.37\% of the oracle, 1{,}000 within 0.30\%, and 5{,}000 within 0.06\%. Economic regret falls far faster than prediction error: from 1{,}000 to 50{,}000 sessions, policy-value error falls only from 5.1\% to 2.3\% and the oracle's exact offer set is chosen 85\% and 96\% of the time. Competing offer sets are often nearly tied---the median gap between the best and second-best policy is \$0.003 per basket---so residual errors usually swap one near-optimal set for another, and the $2\varepsilon$ bound of Eq.~\eqref{eq:regret} is a worst case. Because the predictor shares the simulated truth's functional form, these curves are a best case; Section~\ref{sec:scope} brackets misspecification.

\paragraph{Profit saturates before accuracy: security selection and portfolio construction.} The closest practitioner analogy is active equity management, which separates \emph{alpha forecasting} from \emph{portfolio construction}: a model scores expected returns, security selection ranks the stocks, and an optimizer converts the scores into positions under risk and capacity constraints. Forecast skill is summarized by the \emph{information coefficient}, the correlation between scores and subsequent returns, because a useful ranking can build a good book even when return magnitudes are poorly calibrated; modest skill applied across many independent opportunities produces meaningful performance \cite{grinold1989fundamental,grinold2000active}. In \SB{}, predicted offer value is the alpha signal, ranking candidate offer sets is security selection, display and compatibility limits are portfolio constraints, and exact optimization is portfolio construction. Decision analysis calls the resulting insensitivity a \emph{flat maximum} \cite{vonwinterfeldt1982costs}, and predict-then-optimize formalizes that decision loss, not forecast error alone, is the relevant metric \cite{elmachtoub2022smart}.

\paragraph{Is calibrated coupling real?} Scaling channel effects by $\gamma$ gives zero gaps at $\gamma=0$ and increasing gaps thereafter (Figure~\ref{fig:empirical}). EB shrinkage drives line-placebo gains to approximately zero, retains mean coupling weight 0.89 on Instacart and 0.50 on Complete Journey, and detects no signal on Ta Feng. Carbo-Loading strengthens this diagnostic: its 2.5 million training baskets make shrinkage negligible, yet both label placebos remove the gain while cross-commodity effects remain under real labels. We therefore treat Instacart and Carbo-Loading as the supported coupling calibrations, with Carbo adding real paid prices. These tests establish that the receipt associations are structured rather than sampling noise; they do not turn observational co-purchase into a causal offer effect.

\paragraph{Randomized evidence for an exposure channel.} Building on exposure-aware recommenders that distinguish awareness from preference \cite{sato2020exposure}, we test whether neighboring display changes response in 454{,}265 screens from the Open Bandit Dataset's uniform-random fashion slates \cite{saito2020open}. Holding focal item and position fixed, each same-category neighbor at the coarsest seven-level category lowers focal click probability by 12.3\% (slate-randomization $p=0.012$, Holm-adjusted $p=0.036$ across three category resolutions). Neighbor attractiveness and the other category resolutions are null. Clicks are nine times more likely when a neighbor is clicked, but that association is not causal. This experiment supplies causal evidence for the exposure channel used by our economic policy. Its scope is fashion clicks; a field experiment will test the complete mechanism for grocery purchases.

\paragraph{Graph structure and exactness at scale.} Real upsell interaction graphs are seldom forests: half of Complete Journey's graphs over upgrade-eligible items contain a cycle, and 94\% do when eight or more items are eligible. With up to 20 upsell groups, enumeration and the tree-decomposition DP agree on all 8{,}040 checked baskets. Exact serving is fast once the enumeration is vectorized: scoring every feasible upsell policy at once with array operations, rather than one at a time in a Python loop, solves all 3{,}600 held-out baskets at $L=4$ in 0.13\,ms median, 30\,ms p99, and 111\,ms maximum on one core, with values matching brute force or DP on all 8{,}100 solves. An earlier scalar implementation took 1.1\,ms median and up to 3.0\,s (Appendix Table~\ref{tab:hybrid}). The slow tail came from Python per-policy overhead on dense 20-group baskets, not from the problem's difficulty, so no native-code rewrite is needed at these budgets. Budgets above $L=4$, where enumeration grows as $O((GJ)^L)$ and graphical DP becomes the scalable route, remain untested.

\begin{table*}[t]
\caption{\textbf{How StoBasket advances representative foundations.} Prior work supplies essential capabilities; StoBasket connects them in a joint post-basket decision over interacting replacement and addition offers.}
\centering\footnotesize
\renewcommand{\arraystretch}{1.2}
\begin{tabular}{p{0.19\textwidth}p{0.32\textwidth}p{0.41\textwidth}}
\toprule
Prior work & What it already does & What StoBasket adds\\\midrule
Profit-aware recommendation \cite{debiasio2024economic,louca2019profit,debiasio2024model} & Scores or trains rankings of individual products using provider profit, relevance, or a tradeoff between them. & Optimizes a \emph{set of edits} to a basket already in progress. An upsell replaces an incumbent and earns only the margin difference; a cross-sell adds an item and earns incremental margin; their values can interact.\\
Association rules and market-basket analysis \cite{agrawal1993mining,singh2006market} & Finds products that co-occur, reports support/confidence/lift, and may rank upsell or cross-sell candidates by association, acceptance, revenue, or margin. & Uses the association report as an interpretable diagnostic, then adds signed premium-to-complement effects, shrinkage/placebos, action type, contribution margin, and a feasible joint offer decision.\\
Upgraded substitutes and co-purchases \cite{ban2024substitutes} & Places substitute upgrades and co-purchases in one business problem; under the studied response assumptions, the component optimizations separate. & Makes the upsell decision change cross-sell response and value through displayed, accepted, and jointly accepted channels; proves when separation is safe and when it can fail arbitrarily.\\
Next-basket recommendation \cite{hu2020tifuknn,li2023nbrcheck} & Forecasts which items a shopper will buy next from purchase history, with strong reproducible results on the dunnhumby, Ta Feng, and Instacart benchmarks. & Turns that forecast into margin by using it as a response feature, then deciding which replacement and addition offers to serve. On the same benchmarks it adds up to 6\% as an input, whereas serving the forecast directly leaves 11--52\% unrealized even after margin reranking (Table~\ref{tab:priorart}).\\
Exposure modeling \cite{sato2020exposure} & Separates whether a user becomes aware of an item from whether the user prefers or purchases it, and models recommendation influence on exposure. & Places displayed-offer, accepted-offer, and joint-acceptance effects inside one expected-margin policy over replacements and additions.\\
Cross-category basket retrieval \cite{acar2026crosscategory} & Retrieves complementary products for a checkout-adjacent recommendation surface and debiases transactional complement evidence. & Takes retrieved candidates as inputs, values displaced and incremental margin, and jointly chooses which replacement and addition offers to serve.\\
Basket and choice models \cite{ruiz2020shopper,ke2022cross,elhousni2025markovian} & Learn substitution/complementarity or optimize assortments and linked category choices. & Starts from the shopper's observed basket and chooses typed edits whose acceptance probabilities and economic values depend on the whole displayed policy.\\
Structured optimization \cite{lo2019synergy,delpia2025factor} & Provides sparse-interaction and factorized binary-optimization results, including geometric and graphical techniques. & Combines those foundations for the StoBasket objective: score-arrangement enumeration outside and count-augmented tree-decomposition dynamic programming inside, with exactness for a stated response class.\\
\bottomrule
\end{tabular}\label{tab:prior}
\end{table*}
\section{Relationship to Prior Work}\label{sec:related}
Table~\ref{tab:prior} summarizes the foundations and the advance. The new object is the \emph{joint post-basket edit policy}, which differs from item ranking or assortment in three ways. Actions are typed: \texttt{REPLACE} removes an incumbent and earns a margin difference, whereas \texttt{ADD} earns incremental margin. Response is policy-conditioned across action types, through separate display, acceptance, and joint-acceptance channels. And the optimization results match that decision: when separate pipelines are exact, why staged methods have no constant-factor guarantee under coupling, and a global certificate for a structured class. The closest precedents each supply part of this object. Singh et al. rank upsell and cross-sell candidates for a current basket \cite{singh2006market}; Ban et al. jointly formulate upgraded substitutes and co-purchases under responses that separate \cite{ban2024substitutes}; Acar retrieves cross-category candidates \cite{acar2026crosscategory}; next-basket recommenders such as TIFU-KNN forecast purchases that \SB{} can use as an input \cite{hu2020tifuknn,li2023nbrcheck}; linked-category choice models represent related decisions without typed basket edits \cite{ke2022cross,elhousni2025markovian}; and Lo--Topaloglu and Del Pia supply the sparse-interaction and factorized-optimization techniques that the exact solver combines \cite{lo2019synergy,delpia2025factor}. Causal-mediation and exposure-bias methods underlie the logging requirements \cite{imai2010mediation,chen2020bias}.

\section{Discussion and Limitations}\label{sec:limitations}
\subsection{Six Key Insights}
Percentages below are shares of attainable incremental \emph{offer} margin, not total store profit.

\paragraph{1. Optimize margin, not recommendation scores.}
Across all five markets, treating an upsell as a replacement and a cross-sell as an addition changes which offers should be shown. Even after popularity is weighted by margin, the remaining opportunity is \textbf{1.0--11.4\%}. A published next-basket recommender used directly leaves \textbf{11--52\%} unrealized after margin reranking, but adds up to \textbf{6\%} when its forecast is fed into \SB{}. The product implication is simple: a better predictor is an input; it is not a substitute for the economic decision layer.

\paragraph{2. Historical data reflects the offer program that produced it.}
Joint selection gains \textbf{10.2\%} on digitally active Instacart, compared with \textbf{0.17\%} on Complete Journey and approximately zero on Ta Feng. Weak measured coupling can describe current merchandising rather than the ceiling of customer response. A retailer should therefore test joint offers before concluding that its market has no interaction opportunity.

\paragraph{3. The response layer is also market-basket analysis.}
The same estimates used to choose offers produce premium-conditioned support, confidence, lift, signed complement or avoidance effects, and contribution margin. For example, premium pancake-mix baskets contain Aunt Jemima syrup \textbf{11.3\%} of the time versus \textbf{3.3\%} for regular-mix baskets, a \textbf{$3.43\times$} relative rate. This decision-aware market-basket report is an interpretable product byproduct, while remaining explicitly observational rather than causal.

\paragraph{4. Ranking offer sets matters more than perfect probability forecasts.}
With 1{,}000 randomized sessions, policy-value error remains \textbf{5.1\%}, yet the resulting margin regret is only \textbf{0.30\%}; residual errors usually exchange nearly tied policies. In contrast, a predictor trained on shuffled outcomes loses \textbf{38\%}. The practical target is therefore useful signal and correct decision ranking, not marginal accuracy improvements after economic value has saturated.

\paragraph{5. Exact optimization provides a trust anchor.}
For every instance satisfying Proposition~\ref{prop:mainexact}, \SB{} returns the globally best feasible offer set under the supplied model: no heuristic can produce greater predicted margin for that same basket, objective, and response estimates. Retailers may still deploy faster heuristics; the exact solution supplies the benchmark that certifies them and measures their optimization gap. That distinction matters little in ordinary markets, where strong heuristics stay within \textbf{0.09--0.14\%}, but matters sharply in the bundle-only regime, where they lose \textbf{65\%}. A heuristic gives no warning when it enters that regime---Instacart price assumptions alone move its loss from 2\% to 89\%---so the value of the guarantee is that a retailer need not guess. Exact serving takes \textbf{0.13\,ms at the median} and at most \textbf{111\,ms} on every tested real basket. The guarantee removes search error; it does not correct bad response data or model misspecification.

\paragraph{6. Structure makes that certificate computationally practical.}
The unrestricted interacting-offer problem is NP-hard (Appendix~\ref{app:hardness}). Fixed-dimensional response channels and local interaction factors convert the stated class into an exact polynomial-time problem, and two independent exact methods agree on all \textbf{8{,}040 of 8{,}040} checked real baskets. The contribution is therefore not merely a smarter heuristic: it identifies the conditions under which a global certificate is both mathematically valid and operationally affordable.

\subsection{Industrial Impact and Deployment}\label{sec:impact}
The gains are market-dependent, and that dependence is itself actionable. Joint selection earns the most where premium tiers and add-ons are strongly linked, as in Instacart; where measured coupling is weak, the economic objective still recovers margin and fast heuristics suffice. We interpret this pattern as a statement about merchandising practice as much as about shoppers. Receipts record behavior under a retailer's existing offer regime. Instacart operates a digital platform on which ranking models were already in production when its public data were released \cite{stanley2017instacart}; today it serves in-cart complement suggestions, real-time personalized recommendations, and brand coupons that appear on item cards and prompt shoppers in the cart \cite{jia2023onemodel,ruan2023dynamic,instacart2026promotions}. Its baskets display the strongest premium-to-add-on coupling in our study. Complete Journey and Ta Feng record more conventional retail settings, where premium tiers and complements are less often presented together, and their receipts show weak coupling. A retailer that rarely pairs premium items with complements will observe little such association in its data; that silence measures the current regime, not the ceiling of the opportunity.

This interpretation is a hypothesis that our observational data cannot settle, but it carries a clear prescription. A retailer whose historical data shows little joint-selection value should not conclude that its shoppers lack the opportunity. It should run randomized joint offers. Randomization refreshes the data with logged exposure and acceptance, identifies the display and acceptance channels that receipts cannot separate, and reveals whether coupling emerges once offers are actually made. The relationship is self-reinforcing: the more actively a retailer experiments with joint upsell and cross-sell offers, the richer the interaction data becomes, and the more value \SB{} can extract from it.
\subsection{Scope and Next Steps}\label{sec:scope}
\paragraph{(1) Scope of the exactness guarantee.} The guarantees apply to coherent acceptance moments, factor-local dependence, bounded interaction width, and fixed response dimension. Arbitrary choice laws, optimized promotion amounts, and high-order bundle conditions may require different algorithms. Latency was measured only for $L\le4$ and at most 20 upsell groups; larger offer budgets are where graphical DP should matter most.

\paragraph{(2) Cost of the structured approximation.} Exactness holds within the structured projection of Eq.~\eqref{eq:structured}, not for arbitrary customer response. If the projected value approximates the true value within $\varepsilon$ for every offer set, the plug-in argument of Eq.~\eqref{eq:regret} bounds structural regret by $2\varepsilon$; without such a bound, a nonlinear response law can reverse policy rankings. Empirically, under the out-of-family truth S2 (Section~\ref{sec:empirical}), whose logistic saturation, latent classes, and three-way response lie outside the fitted family, regret is only 0.02--0.69\% across five datasets. Under the stress truth S3, omitting cannibalization, fatigue, and temporal shift costs 10.6--19.3\%, and modeling cannibalization and fatigue reduces it to 0.08--0.32\%. Smooth curvature can be benign; omitted mechanisms that create new interactions are not.

\paragraph{(3) Evidence is calibrated simulation, not live uplift.} Receipt co-occurrence alone cannot identify the causal effect of displaying recommendations \cite{imai2010mediation,chen2020bias}. Carbo-Loading has real paid prices but assumed margins, and alternative imputations of Instacart's missing prices move the high-friction heuristic gap at $\tau=2$ from 2\% to 89\%. The randomized fashion result independently identifies a display effect, but for clicks rather than grocery purchases.

\paragraph{(4) Learning results assume the right model family.} The stress-test recovery and the prediction-quality results both use one calibrated truth and a correctly specified response family, so a field test should confirm the required log size in each market. Without a simulator, a retailer can check the same property on its own data: hold out randomized sessions and compare predicted with realized offer margin, retrain on growing log sizes to see whether policy value has plateaued, and estimate new policies off-policy from logged selection probabilities before serving them (Appendix~\ref{app:abtest}).

\paragraph{(5) Next step: an A/B test any retailer can run.} A retailer can measure \SB{}'s uplift on its own traffic in five steps (Appendix~\ref{app:abtest}). First, use the market-basket diagnostic of Section~\ref{sec:marketbasket} to choose a category with strong premium-to-add-on coupling. Second, randomize each basket, by a hash of its session identifier, to the current recommender or to \SB{}, keeping a small no-offer arm and a 5--10\% uniformly random arm for re-estimation. Third, log what was shown, its selection probability, what was accepted, the final basket, and realized prices, costs, and rebates. Fourth, compare arms on offer-attributable incremental contribution margin per basket. Fifth, size the test from Appendix Table~\ref{tab:power}: about 1{,}400 baskets per arm under Instacart-like coupling with offer-level logs, but 20--50 times more if only till receipts are available.

\paragraph{(6) Next step: campaigns that make coupling learnable.} Weak coupling in historical receipts can reflect how offers were run rather than how shoppers respond. When upsell and cross-sell programs are planned, owned, and measured separately, premium items and their complements are seldom shown together; the logs then contain little joint exposure, and the coupling coefficients are not identified. This is the positivity requirement of causal and off-policy evaluation \cite{chen2020bias,saito2020open} applied to merchandising: an effect that is never exposed cannot be learned, and its absence from the data is not evidence that it is zero. The same separation also produces the staging trap of Proposition~\ref{prop:staging}, because each program judges its offers alone. The remedy is operational rather than algorithmic: co-display premium anchors with candidate complements under one contribution-margin objective, keep a small randomized share with logged selection probabilities, record shown and accepted offers separately, and direct exploration toward the cells that the shrinkage diagnostic marks as uninformative. Appendix~\ref{app:campaign} gives the full recipe.

\section{Conclusion}
\SB{} contributes a formulation before it contributes a solver. To our knowledge, it is the first formulation of joint upsell and cross-sell selection in which the two responses are coupled; separable treatments are its zero-coupling special case. With a correctly structured response model fit to a few thousand or more randomized offer sessions, \SB{} comes within 0.5\% of the best attainable offer margin in every default-price setting of five retail markets, and within 0.7\% when the true response lies outside its model family; where it falls short, the cause is the response model, never the search. Across those markets, correctly accounting for displaced and incremental margin is the largest and most consistent source of value. Joint selection pays most where interactions are strong, beating independent selection on Instacart by 10.2\% of attainable offer margin. Exact optimization is a targeted final layer: unnecessary when alternating search captures the interaction, as on Carbo-Loading, but decisive when actions pay only together. Its certificate is practical---exact methods agree on every checked real basket, and vectorized exact search takes 0.13\,ms at the median and at most 111\,ms---and serves as a trust anchor that removes search error, leaving prediction quality and economic inputs as separate, measurable risks. The predictor must rank offer sets correctly rather than forecast perfectly, so a few thousand randomized sessions suffice. The next step is a randomized field test in an interaction-heavy category, with shown offers, acceptances, realized costs, and contribution margin logged end to end.

\bibliographystyle{ACM-Reference-Format}
\bibliography{references}
\appendix
\input{appendix}
\end{document}

%% file: tables/empirical_economic_summary.tex
\begin{tabular}{lrrr}
\toprule
Market & Economic-policy gain & Joint-selection gain & Exact-search premium\\
 & vs.\ margin-pop.\ (\%) & vs.\ independent (\%) & vs.\ portfolio (\%)\\
\midrule
Complete Journey & 2.4 & 0.17 & $\leq 0.09$\\
Ta Feng & 1.5 & $\approx 0$ & $\leq 0.09$\\
Instacart & 11.4 & 10.2 & 63.8$^\dagger$\\
Online Retail II & 1.0 & 0.34 & $\approx 0$\\
Carbo-Loading & 4.7 & 0.73 & $\leq 0.14$\\
\bottomrule
\end{tabular}

%% file: tables/empirical_e1.tex
\begin{tabular}{llrrrrrrrrr}
\toprule
Data & Sim. & Oracle & Exact & Indep. & Upg.-first & Add-first & Greedy & Altern. & Margin-pop. & Assoc.\\
\midrule
Complete Journey & S1 & 536 & 536 & +3.5 & +1.1 & +2.6 & +0.4 & +0.0$^\dagger$ & +12.6 & +534.2\\
 & S2 & 509 & 507 & +5.8 & +1.3 & +5.8 & +2.1 & +0.1 & +13.2 & +493.6\\
 & S3 & 373 & 313 & +2.4 & +0.5 & +1.7 & +0.2 & +0.0$^\dagger$ & +7.9 & +308.3\\
Ta Feng & S1 & 20576 & 20570 & +71.3 & +25.8 & +46.2 & +10.0 & +0.3 & +300.7 & +10746.4\\
 & S2 & 19523 & 19498 & +104.0 & +52.9 & +56.7 & +21.7 & +1.0 & +296.2 & +10651.8\\
 & S3 & 14346 & 12069 & +46.0 & +10.0 & +35.5 & +4.7 & -0.2$^-$ & +170.2 & +6371.9\\
Instacart & S1 & 474 & 474 & +53.4 & +17.0 & +35.0 & +13.3 & +0.4 & +54.1 & +89.1\\
 & S2 & 546 & 542 & +110.5 & +34.9 & +79.1 & +42.8 & +1.4 & +106.3 & +143.9\\
 & S3 & 303 & 266 & +10.6 & +8.9 & -0.7$^\dagger$ & +4.1 & +0.2$^\dagger$ & +12.0 & +30.9\\
\bottomrule
\end{tabular}

%% file: tables/empirical_prior_art.tex
\begin{tabular}{llrrrrrr}
\toprule
 & & \multicolumn{2}{c}{StoBasket} & \multicolumn{2}{c}{TIFU-KNN} & Margin- & Assoc.\\
\cmidrule(lr){3-4}\cmidrule(lr){5-6}
Dataset & $\omega$ & + TIFU & anonymous & $\times$ margin & relevance & popularity & rules\\
\midrule
Complete Journey & 0 & \textbf{99.9} & \textbf{99.9} & 88.7 & $-$10.8 & 97.6 & $-$2.3\\
 & 0.5 & \textbf{99.7} & 99.6 & 88.8 & $-$10.1 & 97.3 & $-$1.7\\
 & 1 & \textbf{99.3} & 98.9 & 88.4 & $-$9.2 & 96.7 & $-$0.4\\
\addlinespace
Ta Feng & 0 & \textbf{100.0} & \textbf{100.0} & 48.2 & 37.0 & 98.5 & 47.3\\
 & 0.5 & \textbf{99.9} & 99.8 & 49.3 & 37.9 & 98.4 & 47.7\\
 & 1 & \textbf{99.8} & 99.5 & 50.7 & 39.4 & 98.1 & 48.5\\
\addlinespace
Instacart & 0 & \textbf{100.0} & \textbf{100.0} & 74.4 & 75.5 & 88.6 & 81.1\\
 & 0.5 & \textbf{99.2} & 97.8 & 80.7 & 81.2 & 87.7 & 81.2\\
 & 1 & \textbf{97.9} & 91.9 & 83.7 & 83.2 & 83.2 & 78.6\\
\bottomrule
\end{tabular}

%% file: appendix.tex
% StoBasket technical appendix: response model, proofs of the main-text propositions,
% synthetic verification, and real-data-calibrated evaluation.

\section{Response model and economic scope}\label{app:model}

\subsection{Feasible basket edits}
Partition a current basket into $G$ incumbent groups. Each group $g$ permits one chosen upgrade offer $x_g\in\{0,1,\ldots,J_g\}$, with $x_g=0$ denoting no offer. At most $L$ groups receive an upgrade offer. Independently, choose a set of offered additions $C\subseteq [M]$ with $|C|\le K$. These are \emph{simultaneously selected offers}, not assumed purchases. An accepted upgrade removes its incumbent and inserts an alternative. A purchased add-on inserts an item without removing an incumbent. The action constraints prevent conflicting replacements by construction.

Let $Y_g\in\{0,1\}$ indicate acceptance of the offered upgrade (zero when $x_g=0$). For fixed context $s$, write $p_g(x_g;s)=\Pr(Y_g=1\mid x_g,s)$ and, for edges $(g,h)$ of a graph $H=(V,E)$,
\begin{equation}\rho_{gh}(x_g,x_h;s)=\Pr(Y_g=Y_h=1\mid x_g,x_h,s).\label{eq:rho}\end{equation}
Crucially, $\rho_{gh}$ need not equal $p_gp_h$. We assume these first and edgewise second moments are \emph{policy-local}: changing offers outside their endpoints does not change these moments. We also assume the supplied moments come from a consistent joint distribution. On a forest, compatible edgewise binary marginals specify at least one such distribution.

\subsection{Offered and realized utility effects}
For each active group $g$, let $d_g(x_g)$ be the incremental accepted-upgrade margin, $t_g(x_g)$ its accepted latent displacement, $v_g(x_g)$ its \emph{presentation} feature, and $c_g(x_g)$ the cost of displaying it. Each offered complement $k$ has incremental margin $m_k\ge 0$ in this nonpromotional or fixed-terms subclass and display cost $c_k$. We allow pairwise upgrade effects on edges: accepted upgrades $g,h$ may contribute extra margin $\gamma_{gh}(x_g,x_h)$ and a latent interaction feature $\zeta_{gh}(x_g,x_h)$.

A structured \emph{conditional} add-on acceptance model is
\begin{equation}
\begin{split}
q_k(Y,x)={}&a_k+b_k^\top\sum_gY_gt_g(x_g)\\
&+e_k^\top\sum_gv_g(x_g)+h_k^\top\sum_{(g,h)\in E}Y_gY_h\zeta_{gh}(x_g,x_h),
\end{split}\label{eq:choice}
\end{equation}
where each feature may be vector-valued. Parameters must satisfy $0\le q_k(Y,x)\le1$ for every feasible action/acceptance realization. The model allows an offer to affect another item's purchase probability \emph{even when rejected}. We do not require add-on purchases to be conditionally independent, but assume their expected margin is additive and the purchase probabilities do not depend on which other additions are offered. These conditions specify the tractable conditional-response family analyzed below.

Define the finite-dimensional moment summary
\begin{equation}
\begin{aligned}
z(x)=\Big(&\sum_gp_g(x_g)t_g(x_g),\,\sum_gv_g(x_g),\\
&\sum_{(g,h)\in E}\rho_{gh}(x_g,x_h)\zeta_{gh}(x_g,x_h)\Big)\in\mathbb R^r.
\end{aligned}\label{eq:state}
\end{equation}
Absorb the appropriate coefficients into $\beta_k=m_k(b_k,e_k,h_k)$ and let $\alpha_k=m_ka_k-c_k$, where $c_k$ is the display cost of add-on $k$. The expected incremental margin of the policy is
\begin{equation}
\boxed{\begin{aligned}
F(x,C)={}&\underbrace{\sum_g[p_gd_g-c_g]}_{U(x)}
+\sum_{(g,h)\in E}\rho_{gh}\gamma_{gh}\\
&+\sum_{k\in C}\underbrace{(\alpha_k+\beta_k^\top z(x))}_{\ell_k(z(x))}.
\end{aligned}}\label{eq:objective}
\end{equation}
The optimization is $\max_{x,C}F(x,C)$ under $\sum_g\1\{x_g\ne0\}\le L$ and $|C|\le K$. All $p,\rho,d,t,v,\gamma,\zeta$ are evaluated at the chosen local actions and fixed observed context $s$; we suppress $s$ from most expressions for readability. The baseline incumbent margin cancels when taking incremental profit.

\subsection{Context conditioning}\label{sec:context}
Every quantity in this appendix is conditional on one fixed transaction context $s=(B,\ell,\tau,P,I,\ldots)$: the basket, store or verified region $\ell$, store-local time $\tau$, prices $P$, and available inventory $I$. Acceptance moments and add-on response coefficients may vary with $s$, and sparse location--time strata back off to coarser estimates. The exactness conditions below must hold for each fixed context; dependence induced by an arbitrary trained conditional model does not satisfy them automatically.

\subsection{Optional promotions: economic scope and tractability}\label{sec:promotion}
A recommendation selects what to suggest; a \emph{promotion} sets optional transaction terms such as a percentage discount, a coupon, or a bundle rebate. These are distinct decisions. Let $D$ be a set of fixed promotion terms, with $D=\varnothing$ denoting ordinary prices. In the unrestricted economic formulation, the response law is $P(Y\mid s,A,D)$ and net incremental contribution margin is
\begin{equation}\label{eq:promotion_general}
F(A,D)=\mathbb E\!\left[M_D(B\oplus Y)-M_0(B)\mid s,A,D\right]-c(A,D).
\end{equation}
Here $M_D$ includes net realized selling prices, product costs and any redeemed promotion funding, whereas $c$ comprises costs not already counted in $M_D$. Thus a discount is charged \emph{once}, only when its eligibility/redemption conditions hold. The baseline $M_0(B)$ refers to keeping the original basket at baseline transaction terms. Setting $D=\varnothing$ recovers the nonpromotional model.

A retailer-funded discount $\delta$ on accepted upgrade $g$ changes its accepted margin from $d_g$ to $d_g-\delta$ and may change $p_g$ to $p_g(D)$; a cross-sell discount similarly changes $m_k$ to $m_k-\delta_k$. These modified parameters enter Eq.~\eqref{eq:objective}, and the exact results apply, \emph{provided} its affine-response, local-dependence, and additive-margin assumptions continue to hold. A bundle rebate $r$ paid only if an upsell and a cross-sell are both accepted contributes $-r\Pr(Y_g=1,Z_k=1\mid s,A,D)$, a joint moment that must be modeled rather than recovered from marginals. Arbitrary bundle thresholds, price-dependent nonlinear response, and the joint choice of discount amounts can create new interactions, so exact solvability must then be rechecked.

\section{Structural characterizations}
\begin{proposition}[Moment sufficiency]\label{prop:moments}
Under Eq.~\eqref{eq:choice} and additive expected margins, $F(x,C)$ depends on the joint distribution of upgrade acceptances only through the marginals $p_g$ and selected joint moments $\rho_{gh}$ appearing in Eq.~\eqref{eq:state}. No higher-order moments or independence assumption are required.
\end{proposition}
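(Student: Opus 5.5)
The argument is linearity of expectation applied to the typed accounting of Eq.~\eqref{eq:typedmargin}, used in its multi-offer form. Fix $(x,C)$ and the context $s$. I will write the realized incremental margin as
\[
\Delta M=\sum_g Y_g d_g+\sum_{(g,h)\in E}Y_gY_h\gamma_{gh}+\sum_{k\in C}Z_km_k-\sum_g c_g-\sum_{k\in C}c_k,
\]
so that the baseline margin $M_0$ has already cancelled. Every term is then linear in the acceptance indicators $Y_g$ or in the pairwise products $Y_gY_h$ along edges of $H$, except the add-on purchases $Z_k$.

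The first step is the upsell part. Its expectation gives $\sum_g(p_gd_g-c_g)+\sum_{E}\rho_{gh}\gamma_{gh}$ directly, because $\E[Y_g]=p_g$ and $\E[Y_gY_h]=\rho_{gh}$ under policy-local moments. No independence is used here. The second step is the add-on part. I condition on the upsell acceptance vector $Y$ and apply the tower property, using the additive expected margin assumption:
\[
\E\Big[\sum_{k\in C}Z_km_k\Big]=\sum_{k\in C}m_k\,\E[q_k(Y,x)].
\]
The expected margin is additive, and $q_k$ does not depend on which other add-ons are offered. Hence any dependence among the $Z_k$ given $Y$ is irrelevant, since only conditional means enter.

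The key step is substituting Eq.~\eqref{eq:choice}. Because $q_k$ is affine in $Y_g$ and in the edge products $Y_gY_h$, taking expectations yields
\[
a_k+b_k^\top\sum_gp_gt_g+e_k^\top\sum_gv_g+h_k^\top\sum_E\rho_{gh}\zeta_{gh},
\]
which is $a_k+(b_k,e_k,h_k)^\top z(x)$ with $z$ as in Eq.~\eqref{eq:state}. Multiplying by $m_k$ and subtracting $c_k$ gives $\alpha_k+\beta_k^\top z(x)$, which recovers Eq.~\eqref{eq:objective}.

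To conclude, I observe that every appearance of the joint law of $Y$ in this expression is through $p_g$ and through $\rho_{gh}$ for $(g,h)\in E$. Two joint distributions that agree on these moments therefore give the same $F(x,C)$. Higher-order moments never appear because no product of three or more $Y$'s occurs in $\Delta M$ or in $q_k$.

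The main obstacle is justifying the conditioning step. The tower property needs $\E[Z_k\mid Y,x]=q_k(Y,x)$ to be a valid probability for every realization, which is guaranteed by the stated constraint $0\le q_k\le1$. It also needs the moments to come from some coherent joint law. I would note that coherence only ensures $F$ is the value of an actual policy; the formula itself depends only on the listed moments.
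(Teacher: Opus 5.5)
Your proposal is correct and matches the paper's argument: the paper writes the realized value $V(Y,x,C)$ with $m_kq_k(Y,x)$ already substituted for $Z_km_k$. It then observes that every term is a linear combination of $1$, $Y_g$, and $Y_gY_h$, whose expectations are $1$, $p_g$, and $\rho_{gh}$. Your explicit tower-property step over the $Z_k$ spells out that substitution, and your remark that no triple products appear matches the paper's note that a nonlinear $q_k$ would require third-order moments.
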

\noindent\emph{Proof sketch.} Apply linearity of expectation to $Y_g$, $Y_gY_h$, and each conditional add-on probability. All terms then reduce to $p_g$ and $\rho_{gh}$, giving Eq.~\eqref{eq:objective}. See Appendix~\ref{app:proofs}.

\begin{proposition}[Exact separability condition]\label{prop:separability}
Assume no-offer is feasible, $z(0)=0$, $L,K\ge1$, and the feasibility constraints on upgrades and additions are separate. Then the objective can be written as $F(x,C)=U_0(x)+V_0(C)$ for all feasible pairs if and only if
\begin{equation}\beta_k^\top z(x)=0\quad\text{for every feasible $x$ and item $k$.}\label{eq:separation}\end{equation}
\end{proposition}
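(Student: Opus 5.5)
We prove the two directions separately, working directly from the boxed objective Eq.~\eqref{eq:objective}. Group its terms as $F(x,C)=\tilde U(x)+\sum_{k\in C}\alpha_k+\sum_{k\in C}\beta_k^\top z(x)$. Here $\tilde U(x)=U(x)+\sum_{(g,h)\in E}\rho_{gh}\gamma_{gh}$ collects every term that depends on $x$ alone.

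\emph{Sufficiency.} If Eq.~\eqref{eq:separation} holds, the last sum vanishes for every feasible pair. Take $U_0=\tilde U$ and $V_0(C)=\sum_{k\in C}\alpha_k$. This direction is immediate.

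\emph{Necessity.} Assume $F(x,C)=U_0(x)+V_0(C)$ for all feasible pairs. Separability implies the mixed-difference identity
\[
F(x,C)-F(0,C)-F(x,\varnothing)+F(0,\varnothing)=0
\]
for every feasible $x$ and feasible $C$. This needs the four pairs to be feasible. No-offer is feasible, and because the upgrade and addition constraints are separate, $(x,C)$ feasible implies $(0,C)$ and $(x,\varnothing)$ are feasible as well.

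Now take $C=\{k\}$ for a single item $k$. This set is feasible since $K\ge1$, and we assume the item is in stock and permitted; if some $k$ is never feasible, Eq.~\eqref{eq:separation} should be read over feasible items only. Expanding the grouped form, the terms $\tilde U$ and $\alpha_k$ cancel in the mixed difference. What remains is $\beta_k^\top z(x)-\beta_k^\top z(0)$, and this equals $\beta_k^\top z(x)$ because $z(0)=0$. So $\beta_k^\top z(x)=0$ for every feasible $x$ and every $k$.

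The condition $L\ge1$ matters only to make nonzero upsell vectors $x$ feasible; otherwise the claim is vacuous.

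The main obstacle is bookkeeping rather than depth. Two points need care. First, the decomposition of $F$ must be valid for \emph{all} feasible pairs, including the degenerate pairs $(0,\cdot)$ and $(\cdot,\varnothing)$ used in the mixed difference. Second, $z(0)=0$ requires checking that with $x_g=0$ we have $p_g=0$ and $\rho_{gh}=0$, and that presentation features vanish, $v_g(0)=0$, at the no-offer action. The hypothesis states this directly, so I would invoke it rather than derive it.

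I would also note that the argument uses only singleton add-on sets. That is legitimate because the cross-sell part of Eq.~\eqref{eq:objective} is additive in $k$ ($\eta_{k\ell}=0$). If the general form Eq.~\eqref{eq:structured} with $\eta\neq0$ were intended, the same mixed difference would also pick up $\eta_{k\ell}(z(x))-\eta_{k\ell}(0)$. The statement is about Eq.~\eqref{eq:objective}, so I would restrict to that case.
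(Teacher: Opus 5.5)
Your proof is correct and follows the paper's own argument. Sufficiency holds because the vanishing cross term drops out. Necessity comes from the mixed difference $F(x,\{k\})-F(x,\varnothing)-F(0,\{k\})+F(0,\varnothing)=\beta_k^\top z(x)$ over a Cartesian feasible domain, with $z(0)=0$. Your remarks on why the four pairs are feasible, and on restricting to $\eta_{k\ell}=0$, are reasonable clarifications that the paper leaves implicit.
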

\noindent\emph{Proof sketch.} The mixed finite difference
\begin{equation}
D(x,k)=F(x,\{k\})-F(x,\varnothing)-F(0,\{k\})+F(0,\varnothing)
=\beta_k^\top z(x)\label{eq:diff}
\end{equation}
vanishes for every $x,k$ exactly when the cross term can be removed. A nonzero difference certifies objective-level coupling, but does \emph{not} imply that separate optimization necessarily selects a different optimum.

\section{An exact algorithm for structured interactions}
\begin{proposition}[Fixed-dimensional, bounded-treewidth exact solvability]\label{prop:poly}
Suppose (i) add-on response is affine as in Eq.~\eqref{eq:choice}, with additive expected add-on margins; (ii) every upgrade-side term---each expected margin in $U$ and each contribution to the summary $z$---depends only on the actions $x_{S_f}$ of a factor scope $S_f$, and the primal graph joining upgrade groups that share a scope has fixed treewidth $w$; (iii) only separate cardinality budgets $L$ and $K$ apply; (iv) all input coefficients are rational; and (v) the summary dimension $r$ is fixed. Then \SB{} optimization admits an exact algorithm running in time polynomial in the number of upgrade choices, the number of add-ons, and the coefficient encoding length.
\end{proposition}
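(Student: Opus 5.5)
The plan is an outer enumeration over add-on orderings, driven by the geometry of the $z$-space, with an inner dynamic program over upgrades for each candidate add-on set. First, fix any upgrade policy $x$ and write $z=z(x)\in\mathbb Q^r$. The add-on part of Eq.~\eqref{eq:objective} is $\sum_{k\in C}\ell_k(z)$ with $\ell_k(z)=\alpha_k+\beta_k^\top z$, subject only to $|C|\le K$, because the budgets are separate by (iii). So the best $C$ for this $z$ consists of the (at most $K$) largest strictly positive scores. That set is determined by the sign pattern of $z$ against the $O(M^2)$ hyperplanes $\{\ell_k=\ell_{k'}\}$ and $\{\ell_k=0\}$ in $\mathbb R^r$. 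Since $r$ is fixed by (v), this arrangement has $O(M^{2r})$ faces, counting lower-dimensional ones. I would enumerate one rational point in each face, for example by the standard recursive construction of arrangements, or by taking vertices of subarrangements and perturbing with rational offsets whose encoding length is bounded via (iv). At each point I would record every top-$K$ set consistent with its ties. The result is a polynomial-size family $\mathcal C$ that contains an optimal add-on set $C^\star(z(x))$ for every $x$.

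Second, for each $C\in\mathcal C$, maximize $F(x,C)=U(x)+\sum_{(g,h)}\rho_{gh}\gamma_{gh}+\sum_{k\in C}\alpha_k+\big(\sum_{k\in C}\beta_k\big)^\top z(x)$ over $x$. Once $C$ is fixed, the coefficient vector $\bar\beta_C=\sum_{k\in C}\beta_k$ is a constant. By (ii), $z(x)$ is a sum of local factors $\mu_f(x_{S_f})$, so $\bar\beta_C^\top z(x)$ is also a sum of scalar local factors on the same scopes, and the objective becomes a sum of local functions on the primal graph. I would take a tree decomposition of width $O(w)$; it can be computed in linear time for fixed $w$ (Bodlaender), or by any polynomial constant-factor approximation. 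The DP over this decomposition carries, in each bag state, the count of groups with $x_g\neq0$ in the subtree, capped at $L$. The tables then have size $(1+\max_g J_g)^{O(w)}(L+1)$, and the combine steps are max-plus convolutions over counts. This gives $\max_{x:\,\|x\|_0\le L}F(x,C)$ in polynomial time, with backtracking to recover $x$. All arithmetic is on rationals whose size grows only additively, so bit complexity stays polynomial.

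Third, output the best pair $(x_C,C)$ over $C\in\mathcal C$. For correctness, let $(x^*,C^*)$ be optimal. Since $C^\star(z(x^*))\in\mathcal C$ and is a best response to $x^*$, we have $F(x^*,C^\star(z(x^*)))\ge F(x^*,C^*)$. The DP for that $C$ then returns a value at least this large, and every returned pair is feasible. Proposition~\ref{prop:moments} guarantees that Eq.~\eqref{eq:objective} equals the true expected margin under the coherent, policy-local moments, so the returned pair is optimal for Eq.~\eqref{eq:policy} within the class.

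The main obstacle is the first step. The family must provably cover every realizable $z(x)$, including points that lie exactly on hyperplanes, where ties make several top-$K$ sets optimal. It must also have both polynomial cardinality and polynomial bit length. I would handle ties by enumerating every face of the arrangement rather than only the open cells, and bound the number of faces by the classical $O(N^r)$ bound for $N$ hyperplanes in fixed dimension. If representative points of lower-dimensional faces prove awkward, I would fall back on a more robust cover. For each ordered subset of at most $r$ tied constraints, solve the linear system to find a point on their intersection, then perturb along a lexicographic symbolic $\epsilon$. This still gives $M^{O(r)}$ candidates.
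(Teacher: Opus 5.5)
Your proof is the paper's proof. You enumerate the arrangement of the $O(M^2)$ hyperplanes $\ell_k=\ell_{k'}$ and $\ell_k=0$ in the fixed-dimensional summary space. You then fix each resulting $C$, fold $\sum_{k\in C}\beta_k$ into the local factors, and run a count-augmented tree-decomposition DP. Your correctness step $F(x^*,C^\star(z(x^*)))\ge F(x^*,C^*)$ is exactly the paper's, and your appeal to Bodlaender's algorithm supplies a detail the paper leaves implicit: how the decomposition is obtained.

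One step needs tightening. Recording \emph{every} top-$K$ set consistent with the ties at a face point can produce exponentially many sets. This happens, e.g., when many add-ons share the same $(\alpha_k,\beta_k)$, or when many score functions pass through one point, so that $\binom{m}{j}$ selections tie. As written, $\mathcal C$ therefore need not have polynomial size. One set per face suffices. The sign pattern, and hence the family of tie-consistent selections, is the same at every point of a face, and at each such point all of them attain the optimal add-on value.

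The paper avoids lower-dimensional faces altogether. It first discards degenerate comparisons with $\beta_k=\beta_{k'}$ (or $\beta_k=0$), whose zero sets are empty or all of $\mathbb R^r$ and which your face enumeration must also drop, and it breaks permanent ties by index. It then keeps one top-$K$ set per full-dimensional cell. Every point of $\mathbb R^r$ lies in the closure of some open cell. On that cell, the cell's set attains the continuous function $\max_{|C|\le K}\sum_{k\in C}\ell_k(z)$, so by continuity it remains optimal on the boundary. This removes what you identified as the main obstacle, finding rational representatives of lower-dimensional faces, and leaves only open-cell sampling.
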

Condition (ii) covers the single-group and edge moments of Eq.~\eqref{eq:objective}, as well as local exposure effects in which displaying a neighboring upsell changes $p_g$. It does not cover a graphical choice model whose induced marginals depend on the whole policy; such a model must first be shown to have locally factorized expected values.

\noindent\emph{Proof sketch.} For a fixed $x$, the best addition set consists of at most $K$ positive values of $\ell_k(z(x))=\alpha_k+\beta_k^\top z(x)$. The order and positivity of these affine scores can change only across hyperplanes $\ell_k=\ell_j$ or $\ell_k=0$. There are $O(M^2)$ hyperplanes in fixed dimension $r$, yielding $O(M^{2r})$ open cells. Enumerate one top-$K$ selection per cell, with tie handling. Some selection from these cells is optimal at the state $z(x^*)$ of a global optimum. For each such fixed $C$, the objective becomes a sum of local factors over the scopes $S_f$. A tree-decomposition dynamic program, with an additional state counting upgrade offers up to $L$, solves this exactly for fixed $w$. Maximizing over the enumerated $C$ recovers a global optimum. Full details are in Appendix~\ref{app:proofs}.

\noindent\textbf{Implementation.} The reference solver (\path{engine/treewidth_exact.py}) covers any fixed $r$ and arbitrary factor scopes. The outer step (\path{engine/arrangement.py}) returns, in exact rational arithmetic, a point in every full-dimensional cell of the score arrangement by adding the coordinate hyperplanes, slicing between consecutive vertex coordinates, and recursing on dimension; on a tie face, an adjacent cell's top-$K$ set is optimal, so cells suffice. This simple sampler is polynomial for fixed $r$ but targets verification rather than speed. The inner step (\path{engine/treewidth_dp.py}) is max-sum variable elimination along a min-fill ordering, with messages indexed by separator assignment and upgrade-offer count; the ordering affects only the cost, $O(G\,J_{\max}^{w'+1}(L+1)^2)$ per candidate set for induced width $w'$. The algorithm combines known arrangement and graphical-DP techniques rather than proposing them as new \cite{lo2019synergy,delpia2025factor}.

\section{Why structure is needed}\label{app:hardness}
Fixed response dimension alone does not make the problem tractable. The two reductions below show that the add-on and response restrictions of Proposition~\ref{prop:poly} carry the result.

\textbf{Unrestricted add-on interactions.} Allow pairwise add-on payoffs $\eta_{k\ell}$ as in Eq.~\eqref{eq:structured}. Given a graph $H$ and an integer $K\ge2$, create one add-on per vertex, no upgrade groups, zero unary values, add-on budget $K$, and $\eta_{k\ell}=1$ exactly for the edges of $H$. The objective counts edges among the chosen add-ons, so value $\binom K2$ is attainable exactly when $H$ has a $K$-clique. The optimization problem is therefore NP-hard even when the summary is identically zero.

\textbf{Nonlinear response.} Given positive integers $w_1,\ldots,w_G$ with sum $S$ and a target $T\in[0,S]$, let $L=G$ and $K=1$, let each upgrade be accepted with certainty when offered and change no margin, and let one add-on with unit margin and no display cost have purchase probability $q(x)=1-\big((\sum_gw_gx_g-T)/S\big)^2\in[0,1]$. Expected value $1$ is attainable exactly when some subset of the $w_g$ sums to $T$. A one-dimensional summary with a nonlinear response is thus already NP-hard; the affine response of Eq.~\eqref{eq:choice} is what restores tractability.

\section{Synthetic verification}
\subsection{Controlled validity tests}
All data below are synthetic. We generate multiple incumbent groups, several upgrade alternatives per group, a random forest of potential cross-upgrade effects, and complementary items. Coefficients are restricted or explicitly checked to ensure conditional add-on probabilities are within $[0,1]$.

\begin{table}[t]
\caption{Reproducible finite-instance verification; exactness means agreement with exhaustive search on the enumerated instances, not a proof of all-instance correctness.}
\centering\small
\begin{tabular}{lrr}
\toprule
Test & Instances & Failures\\
\midrule
Forest plus pair moments, $r=2$ &320&0\\
Nonempty interaction graph among those &289&--\\
Moment expansion vs. tree-joint law &120&0\\
Tree-decomposition DP vs.\ brute force &300&0\\
Treewidth solver, rational factors, $r\le2$ &240&0\\
\quad of which cyclic upgrade graphs &216&--\\
Treewidth solver, latent-class exposure model &120&0\\
\quad moment objective vs.\ full joint law &360&0\\
Generic DP equals forest and chain DPs &120&0\\
Cell sampler coverage, $r=1$--$4$ &56&0\\
Cell sampler equals face sampler, $r\le2$ &60&0\\
Treewidth solver, $r=3$, rational factors &48&0\\
Treewidth solver, $r=3$, latent-class model &36&0\\
Treewidth solver, $r=4$, rational factors &12&0\\
Staging traps, $n=1$--$6$ (all policies enumerated) &6&0\\
\bottomrule
\end{tabular}
\label{tab:checks}
\end{table}

The optimizer matched brute force on all 320 small forest instances, 289 with a nonempty upgrade-interaction graph, and the closed-form expected margin matched explicit summation over a correlated joint law to within $8.9\times10^{-16}$. These checks establish numerical consistency under synthetic parameters, not predictive accuracy.

\subsection{Bounded-treewidth upgrade graphs}\label{sec:twexp}
These checks test Proposition~\ref{prop:poly} beyond forests. The tree-decomposition program matched brute force on 300 random factor graphs (cycles, ladders, and partial 2- and 3-trees). The complete solver matched exhaustive search on 240 rational instances with $r\le2$ (216 cyclic) and on 120 coherent latent-class instances with exposure spillovers, in which displaying a neighboring upgrade shifts acceptance (109 cyclic, induced width up to four); there the moment objective matched the full joint law to within $1.3\times10^{-15}$. The cell sampler hit every sign pattern found by dense random sampling on 56 arrangements in dimensions one to four. In the model's natural dimension $r=3$, with separate accepted, displayed, and joint-acceptance coefficients, the solver matched exhaustive search on 48 rational and 36 latent-class instances, and on 12 instances with $r=4$. Larger $r$ would add response features rather than a new mechanism; the arrangement's $M^{O(r)}$ cells make bounded $r$ a substantive assumption. The 60-instance interaction sweep of Figure~\ref{fig:sweep} uses four upgrade groups, six add-ons, and budgets $L=K=3$; all baselines coincide with the optimum at zero coupling.

\subsection{Proof of the staging-gap proposition}\label{app:staging}
Write $x^0=\arg\max_xF(x,\varnothing)$ and $C^0=\arg\max_CF(0,C)$. The independent policy is $(x^0,C^0)$. Upgrade-first returns $(x^0,\arg\max_CF(x^0,C))$, whose value is at least $F(x^0,C^0)$ because $C^0$ is feasible. Add-on-first returns $(\arg\max_xF(x,C^0),C^0)$, likewise at least $F(x^0,C^0)$. Alternating best response starts from the better of the two and never decreases the objective, so it dominates both. Every policy is feasible, so none exceeds the optimum. This proves (i).

For (ii), Proposition~\ref{prop:separability} gives $F(x,C)=U(x)+\sum_{k\in C}\alpha_k$, with $U(x)=F(x,\varnothing)$. Because the budgets are separate, $\max_{x,C}F=\max_xU(x)+\max_C\sum_{k\in C}\alpha_k$, and $(x^0,C^0)$ attains both maxima. By (i) the sequential and alternating policies are also optimal. Greedy can fail even here. If $L\ge2$ and two upgrades each have value $-\epsilon$ alone and a pair margin $3\epsilon$ together, greedy never adds the first one. The same holds for a pair of add-ons with a positive interaction.

For (iii), take one upgrade with acceptance $p=1/2$, accepted margin change $-2\lambda$ and latent feature $0.6$, and one add-on with margin $10\lambda$, base acceptance $0.02$, slope $1.3$ on the accepted-upgrade feature, and display cost $0.5\lambda$, with $L=K=1$. Then $F(\varnothing)=0$, $F(\text{upgrade})=-\lambda$, $F(\text{add-on})=-0.3\lambda$ and $F(\text{both})=2.6\lambda$. Here $x^0$ and $C^0$ are both empty. The best addition given no upgrade is none, and the best upgrade given no addition is none. Alternating best response is therefore at a fixed point, and both greedy moves from the empty policy have negative gain. All five policies return $0$, while the optimum is $2.6\lambda>0$. No positive multiplicative guarantee $\mathrm{ALG}\ge c\cdot\mathrm{OPT}$ can hold, and the additive gap $2.6\lambda$ is unbounded.

For (iv), take $n$ groups, each with one upgrade of acceptance $p=1/2$, accepted margin change $-0.2$, no display cost, and accepted feature $t=1/n$. Take one add-on with margin $10$, base acceptance $0$, unit slope on the accepted-upgrade feature, and display cost $A=(n-\tfrac12)b$, where $b=5/n-0.1>0$ for $n<50$. Its conditional acceptance $\sum_gY_g/n$ lies in $[0,1]$ for every realization, so the response law is valid. An upgrade alone is worth $-0.1$. With the add-on shown, each offered upgrade adds $10\cdot\tfrac12\cdot\tfrac1n-0.1=b$. A policy with the add-on and $j$ upgrades is worth $jb-A$, which is negative for $j\le n-1$ and equals $b/2>0$ for $j=n$. Policies without the add-on are worth $-0.1j$. Every nonempty policy with at most $n$ offers is therefore negative. Any search that moves from the empty policy only to a policy within $n$ offer changes, and only on strict improvement, stops at $0$; this includes all five baselines. Exhaustive enumeration confirms the construction for $n=1,\ldots,6$. Scaling all margins and costs by $\lambda$ again makes the gap arbitrary. The traps depend on the starting policy: best-response iteration started from a policy that already shows the add-on reaches the optimum here. Whether a heuristic succeeds thus depends on such design choices, whereas the exact solver certifies optimality independently of them.

\section{Real-data-calibrated evaluation: data and design}\label{app:empirical}
\textbf{Claim boundary.} No public dataset records which upsells and cross-sells were displayed and whether they were accepted. All results in Section~\ref{sec:empirical} are therefore \emph{real-data-calibrated simulations}. Real baskets, prices, contexts and (for Ta Feng) costs define the decision problems; purchase patterns in training receipts calibrate simulated offer responses; policies are learned from simulated randomized offer logs and scored by their exact expected value under the simulator. The settings below were fixed in a written plan before the first full run.

\begin{table}[t]
\caption{Datasets. Splits are chronological; customer identifiers are not used.}
\centering\footnotesize
\begin{tabular}{p{0.2\columnwidth}p{0.22\columnwidth}p{0.22\columnwidth}p{0.22\columnwidth}}
\toprule
& Complete Journey & Ta Feng & Instacart\\\midrule
Source & 84.51\textdegree{} via \texttt{completejourney}, CC0 & Taiwanese supermarket, 2000--01 & 2017 public release \\
Train / test baskets & 102,604 / 37,434 & 88,527 / 31,051 & 400,000 (sample) / 131,209 \\
Split & weeks 1--39 / 40--53 & Nov--Jan / Feb & prior / final orders \\
Margin & assumed rates, 3 scenarios & \textbf{real cost} & imputed prices \\
Context & store, weekday, hour & residence area, weekday & weekday, hour \\
Premium tier & national vs.\ private label & price above subclass median & organic \\
\bottomrule
\end{tabular}\label{tab:data}
\end{table}

\textbf{Data sources and licenses.} All datasets are public and are used here for research. Complete Journey 2.0 originates from 84.51\textdegree{} and is used through the \texttt{completejourney} R package, released under CC0 \cite{boehmke2025completejourney}. Online Retail II \cite{chen2012onlineretail} and the Open Bandit Dataset \cite{saito2020open} are released under CC BY 4.0. The Instacart Online Grocery Shopping Dataset 2017 is provided for non-commercial use under Instacart's dataset terms \cite{instacart2017data}. Carbo-Loading is distributed through dunnhumby Source Files for research and non-commercial use \cite{dunnhumbycarbo}. The Dominick's Finer Foods data are provided for academic research by the James M. Kilts Center for Marketing, University of Chicago Booth School of Business \cite{kiltsdominicks}. The Ta Feng grocery data are publicly distributed without a stated license \cite{tafengdata}. We report only aggregate statistics and derived results. No raw transactions are redistributed: the accompanying code downloads each dataset, and users must accept each provider's terms.

\textbf{Data.} Table~\ref{tab:data} summarizes the three datasets. In Complete Journey, the retailer's revenue is the sales value and the shelf price is
\[\text{shelf price}=(\text{sales value}+\text{retail discount}+\text{coupon match})/\text{quantity}.\] We keep grocery, fresh and health departments and drop fuel and quantities above 20. Ta Feng's \texttt{ASSET} field is the line's cost of goods. Within every product it is exactly proportional to quantity, it lies below the sales price in 91\% of lines, and it implies a 15\% gross margin; we therefore use $(\text{price}-\text{ASSET})/\text{quantity}$ as the real unit margin. A Ta Feng basket is one customer-day. Instacart has no prices. We impute a department price from comparable Complete Journey departments, add 25\% for organic items, and label every Instacart economic quantity as imputed. Its decision basket is the first half of the cart, so recommendations are made in-cart. Its mirror's row counts equal the official release.

\textbf{Decision instances.} For a basket, an upsell group is a basket item with a premium alternative in the same fine category that sold at least 20 times in training and costs more on the shelf. We keep the four highest-value such items and their two most purchased alternatives. A replacement's margin is $\text{qty}\times(m_j-m_i)$, using the incumbent's actual revenue in the basket. Add-on candidates are the six categories absent from the basket with the largest gain $\max_{c\in B}P(d\mid c)-P(d)$, each represented by its most purchased item. Budgets are $L=K=2$.

\textbf{Margins.} Ta Feng uses real costs. Complete Journey uses gross-margin rates of 18\%, 28\% and 38\% for national brands, with private label 8 points higher. A robustness arm instead applies Ta Feng's median margin rates by premium tier, rescaled to the 28\% mean. Instacart uses the 28\% scenario.

\textbf{Channel associations.} For upsell category $u$ and add-on category $d$, $A[u,d]=P(d\mid\text{premium }u)-P(d\mid\text{regular }u)$ in training baskets, shrunk by $n/(n+50)$ with $n$ the smaller tier count, and clipped to $\pm0.15$. This fixed shrinkage defines the original calibration in Table~\ref{tab:empirical}. Batch 3 adds an empirical-Bayes sensitivity analysis that estimates per-entry noise from the binomial counts and a dataset-level signal variance by moments (Table~\ref{tab:eb}). These are correlational quantities; the simulator adopts them as causal truth, which is the main modeling assumption of the study.

\textbf{Simulators.} S1 is the paper's model. Upsell acceptance is
\[p_{gj}=\min\big(\max(\kappa_{\text{up}}\cdot2s_j/(s_i+s_j),\,0.01),\,0.6\big),\]
where $s$ are purchase shares within the category in the basket's store and time stratum, shrunk to global shares, and $\kappa_{\text{up}}=0.25$. Two latent classes with weights $(0.6,0.4)$ and multipliers $(0.6,1.6)$ make acceptances correlated. An add-on is bought with probability
\[q_k=\kappa_{\text{add}}b_k+\gamma\Big[\textstyle\sum_gY_gA_{gk}+\lambda_2\sum_g\mathbf1\{x_g\neq0\}A_{gk}+\lambda_3\sum_{g<h}Y_gY_h\tfrac{A_{gk}+A_{hk}}2\Big],\]
clipped to $[0,1]$, where $b_k=\max_{c\in B}P(d_k\mid c)$ adjusted to the stratum and $\kappa_{\text{add}}$ makes the mean base rate 8\%. Defaults are $\gamma=1$, $\lambda_2=\lambda_3=0.5$. S2 is out of family: logistic upsell acceptance with three classes and class-specific slopes, and logistic add-on purchase whose linear predictor contains the same channel terms. S3 is a stress test: S1 with add-on cannibalization (two displayed add-ons from one department each lose 40\%), display fatigue ($-10\%$ on every acceptance per extra offer), and a test-period shift ($\kappa_{\text{up}}\times0.8$, $\kappa_{\text{add}}\times1.2$). At most two upsells are displayed, so every expectation is computed exactly by enumerating acceptance outcomes and classes; Monte Carlo draws agree with these values.

\textbf{Learning.} Logs contain 50{,}000 training baskets with a uniformly random feasible display policy. The fitted model has three parts. A logistic upsell model uses the log share ratio and the log price ratio. A correlation factor $\psi=\sum Y_gY_h/\sum\hat p_g\hat p_h$ is estimated from pairs displayed together. A linear add-on model regresses purchase on the base rate and the three channel regressors. The two-channel variant ties the accepted and displayed coefficients. The learner sees features only at its context arm (C0 basket only, C1 time, C2 store or area, C3 both; the truth uses C3) and never sees classes or simulator parameters.

\textbf{Policies and metrics.} The exact policy enumerates all upsell policies within the budget and adds the top-$K$ positive add-ons under the fitted model; this is exact for the fitted model. The five baselines of Proposition~\ref{prop:staging} use the same fitted model. Two model-free rules are included: \emph{association} offers the most purchased upgrades and the highest-lift add-ons, and \emph{margin popularity} ranks by purchase share times margin gain. The oracle optimizes the true simulator, including add-on interactions, by enumeration. We report expected incremental margin per 1{,}000 baskets in the dataset's currency and paired bootstrap 95\% intervals over 5{,}000 test baskets (1{,}000 for each of five seeds).

\section{Real-data-calibrated evaluation: additional results}\label{app:empiricalresults}
Table~\ref{tab:empextra} reports the secondary experiments, each over five seeds and 5{,}000 test baskets.

\textbf{Coupling ($\gamma$).} The gap to the staged policies grows with the calibrated channel effects. At $\gamma=0$ it vanishes, except on Ta Feng, where acting on spuriously estimated coupling costs NT\$1.0 per 1{,}000 baskets relative to independent selection.

\textbf{Log size.} Regret at 10{,}000 logged baskets is already close to its value at 200{,}000. The remaining regret on Ta Feng (about NT\$5) reflects the fitted model's form, logistic acceptance and an unclipped linear add-on response, rather than sampling error.

\textbf{Context.} The truth varies by store and time stratum exactly as the receipts do. A learner restricted to basket information loses 1.8\% of the oracle margin on Complete Journey and 1.2\% on Ta Feng. Time context alone does not help Complete Journey, whose heterogeneity is mainly by store. When context is missing, alternating best response can slightly beat the exact optimizer of the misspecified model (Ta Feng C0 and C2).

\textbf{Channels.} Tying the displayed and accepted coefficients costs little when the display effect is positive. It matters when displaying an upsell suppresses add-on purchases ($\lambda_2=-0.5$), most clearly on Instacart. There, regret rises from \$0.3 to \$4.7 per 1{,}000 baskets, and the exact policy's edge over independent selection shrinks from \$6.2 to \$1.7.

\textbf{Margins.} Under low, high and Ta Feng-derived margin rates, the ranking of policies is unchanged. Gaps scale with the margin level: \$1.0, \$3.5 and \$8.5 over independent selection at 18\%, 28\% and 38\%.

\begin{table*}[t]
\caption{Secondary experiments (S1 unless stated): oracle expected margin and exact-policy regret per 1{,}000 baskets, and the exact policy's margin over three baselines. Currency as in Table~\ref{tab:empirical}.}
\centering\footnotesize
\input{tables/empirical_extra.tex}
\label{tab:empextra}
\end{table*}

\section{Exactness regimes and robustness}\label{app:batch2}
\begin{table*}[t]
\caption{External-data checks: Online Retail II (GBP per 1{,}000 invoices), Complete Journey under Dominick's margin rates, and Carbo-Loading (USD per 1{,}000 baskets). Exact regret and exact minus each baseline are absolute amounts.}
\centering\footnotesize
\input{tables/empirical_more_data.tex}
\label{tab:moredata}
\end{table*}

\textbf{Protocol.} The questions and settings in this section were written before the second empirical batch. Two baselines were added after inspecting the first exactness-regime results and are explicitly post hoc: \emph{alternating over splits} runs alternating best response for every fixed upsell/add-on split of a shared budget and keeps the best result; the \emph{portfolio} runs every independent, sequential, greedy and alternating policy and keeps the policy with greatest value under the fitted model. Neither baseline observes the simulator's truth. Display cost is $c=\tau\bar v$ per offer, where $\bar v$ is mean standalone expected add-on margin in the seed-0 training pool. All percentages are of the oracle's expected incremental offer margin.

\begin{table*}[t]
\caption{Market opportunity and method effectiveness. Coupling value is the oracle policy's value under the stated channels minus the value of that \emph{same policy} with $\gamma=0$; it overlaps add-on value and is not an additional additive category. The final column normalizes the exact-minus-independent gain by this coupling value.}
\centering\scriptsize
\resizebox{\textwidth}{!}{\input{tables/empirical_coupling_share.tex}}
\label{tab:couplingshare}
\end{table*}

\textbf{Value decomposition.} Table~\ref{tab:couplingshare} separates how much economic value flows through interactions from how much of that value joint selection recovers relative to independent selection. In the two supported S1 calibrations, coupling accounts for 31\% of oracle margin on EB-shrunk Instacart and 2\% on real-price Carbo-Loading; exact-minus-independent recovers 32\% and 41\% of that coupling value. Under S2 the recovery shares are 40\% and 52\%, respectively. The fixed-shrinkage Complete Journey and Ta Feng rows are retained for comparability, but E17 shows that much or all of their small apparent signal is sampling noise. The denominator is policy-specific and counterfactual: it measures the interaction contribution to the oracle policy, not a causal decomposition of observed retailer profit.

\begin{table*}[t]
\caption{Where exact optimization matters (S1). ``Portfolio'' is the post-hoc best-of-heuristics baseline evaluated by the fitted model; ``Differs'' is the share of baskets on which exact and alternating choose different policies. Exact regret and exact-minus-baseline columns are percentages of oracle offer margin.}
\centering\scriptsize
\resizebox{\textwidth}{!}{\input{tables/empirical_e8.tex}}
\label{tab:e8}
\end{table*}

\textbf{Display friction and budgets.} Table~\ref{tab:e8} supplies the full data behind Figure~\ref{fig:whereexact}. With separate budgets, increasing candidate count or budgets leaves all portfolio gaps at or below 0.12\%. Shared budgets make a single alternating run sensitive to its initial split: at $T=3$ it loses 1.33\%, 1.52\% and 21.22\% on Complete Journey, Ta Feng and Instacart. Enumerating fixed splits reduces those gaps to at most 0.07\%, so this is a repairable search failure. The distinct failure occurs on Instacart under display friction. At $\tau=1.5$ the portfolio loses 16.1\%; at $\tau=2$ it loses 65.1\%; and at $\tau=2,\gamma=2$ it loses 78.0\% (\$138 per 1{,}000 baskets). Exact regret is 0.14\%, 0.50\%, and 0.19\%, respectively. On Complete Journey and Ta Feng the portfolio remains within 0.09\% through $\tau=3,\gamma=2$. These stress settings map the mechanism; they are not estimates of any retailer's actual display cost.

\begin{table*}[t]
\caption{Instacart price sensitivity (E16, S1, $\gamma=1$). Entries are percentages of oracle offer margin. The portfolio is the post-hoc best fitted-value heuristic. Margin-rate arms rescale both margins and the display-cost unit and are therefore decision-invariant.}
\centering\scriptsize
\resizebox{\textwidth}{!}{\input{tables/empirical_price_sensitivity.tex}}
\label{tab:price-sensitivity}
\end{table*}

\textbf{Price sensitivity.} Table~\ref{tab:price-sensitivity} varies the assumptions behind Instacart's imputed economics. The pre-registered criterion---portfolio regret at least 10\% for some $\tau\le2$, with exact regret at most 1\%---holds for the default 25\% organic premium and the 10\% premium, but not for the 40\% premium or item-level price noise. At $\tau=2$, exact-minus-portfolio ranges from 2.0\% under price noise to 89.3\% under the 10\% premium. Exact regret stays below 0.5\% except in the 10\% arm at $\tau=2$, where it is 3.3\% and the oracle itself is only \$7.5 per 1{,}000 baskets. The result isolates the mechanism: local policies fail when display cost makes single actions unattractive, and prices determine how often actions lie near that threshold.

\begin{table*}[t]
\caption{Fixed versus empirical-Bayes coupling shrinkage (E17). Absolute exact-minus-independent values use dataset currency per 1{,}000 baskets; portfolio/alternating gaps are percentages of oracle offer margin.}
\centering\scriptsize
\resizebox{\textwidth}{!}{\input{tables/empirical_eb.tex}}
\label{tab:eb}
\end{table*}

\textbf{Noise-corrected coupling.} The pre-registered moment estimator over all association entries returned zero signal even on Instacart because thousands of sparse pairs dominated the moments and half-count smoothing inflated their noise. Before inspecting any policy-decision result, we changed the estimator to use unsmoothed binomial variance floored at one proportion count and to estimate signal variance over entries with at least 30 baskets in each premium tier. This protocol deviation is recorded in the experiment plan. Under the corrected estimator, a line-label placebo has zero estimated signal on all datasets. Real labels yield signal standard deviation 0.018 on Complete Journey (mean shrinkage weight 0.50), 0.026 on Instacart (weight 0.89), and no detectable signal on Ta Feng. Table~\ref{tab:eb} shows the decision consequence: S1 exact-minus-independent falls from \$3.50 to \$0.91 on Complete Journey, from NT\$71 to approximately zero on Ta Feng, and from \$53 to \$47 on Instacart. The Instacart $\tau=2$ portfolio gap changes only from 65.1\% to 63.8\%. Policy ordering and the large cost of margin-blind recommendation are otherwise unchanged.

\begin{table}[t]
\caption{Robustness tests. Rebate losses are in dataset currency per 1{,}000 baskets; stress-test values and log-size regret are percentages of oracle offer margin.}
\centering\scriptsize
\resizebox{\columnwidth}{!}{\input{tables/empirical_robust.tex}}
\label{tab:robust}
\end{table}

\textbf{Promotions and misspecification.} Table~\ref{tab:robust} reports three checks. First, a rebate-blind optimizer increasingly misprices a bundle as the conditional rebate grows; at a full add-on-margin rebate it loses \$34 per 1{,}000 Instacart baskets. Second, the rich fitted model adds same-department add-on cannibalization and multiplicative fatigue from offer count. Nonlinear least squares estimates cannibalization at 0.39--0.41 (truth 0.4) and fatigue at 0.08--0.09 (truth 0.1). It reduces exact-policy regret from 10.6--17.5\% to 0.08--0.10\% without period shift and to 0.13--0.32\% with the shift. This experiment gives the learner the correct functional family and therefore tests recoverability of known mechanisms, not robustness to arbitrary omitted structure. Third, the small-log arm shows that decision regret falls primarily below 5{,}000--10{,}000 randomized sessions.

\textbf{Placebos and coupling gate.} Permuting premium labels across items retains 65--92\% of the original three-dataset joint gain because item-level heterogeneity remains. A stricter line-level placebo assigns the premium label independently to every purchase line at the category's observed premium rate. Under fixed shrinkage it retains 42\% of Complete Journey's gain, 48\% of Ta Feng's, and 12\% of Instacart's; E17's noise correction reduces every line-placebo gain to approximately zero. Carbo-Loading passes a stronger economic check: exact-minus-independent is \$1.65 with real labels, versus \$0.13 under the item placebo and $-\$0.22$ under the line placebo, and EB shrinkage leaves the real-label result unchanged. A heteroskedasticity-robust joint Wald gate sets all coupling coefficients to zero unless their joint null is rejected at 5\%. It remains closed in every $\gamma=0$ seed and opens in every seed for $\gamma\ge0.5$ without materially changing results.

\textbf{Additional retail data and real margins.} Online Retail II is a non-grocery UK giftware dataset. We treat an invoice as a basket, derive coarse product groups from description head nouns, define premium by above-median within-group price, assume margins, and split chronologically at July 1, 2011. Under S1 exact gains \pounds22 per 1{,}000 invoices (0.34\%) over independent selection and \pounds0.04 over alternating.

Carbo-Loading contains 5.2 million purchase lines from pasta, pasta sauce, pancake mix, and syrup. Weeks 1--78 form training and weeks 79--104 test. An upsell group is a commodity; plain store brand is the regular tier and other brands are premium candidates. Add-on categories are commodity--brand cells with at least 1\% share, restricted to commodities absent from the basket so that a second brand of pasta is not treated as a cross-sell. We use observed paid prices, assign the private-label margin bonus to store brands, and restrict EB moments to cross-commodity entries used by the policy. Under S1, exact-minus-independent is \$1.65 per 1{,}000 baskets (0.73\% of oracle) and exact-minus-portfolio is approximately zero. At $\gamma=2$ the independent gap rises to 2.63\%. At $\tau=2$, margin popularity loses \$93.4 of the \$95 oracle while the portfolio remains within 0.14\% of exact. Thus real-price coupling is present and economically usable, but it does not create a local-search trap.

For a separate margin check, we estimate revenue-weighted national-brand and private-label gross-margin rates from 17 Dominick's Finer Foods categories. The rates are 22.1\% and 36.8\%, with a median within-category difference of 10.8 percentage points. Applying them to Complete Journey lowers oracle margin from \$536 to \$439 per 1{,}000 baskets but changes the joint gain only from 0.65\% to 0.58\%; applying them to Carbo-Loading retains a 1.13\% exact-minus-independent gain.

\textbf{Randomized display-interference analysis.} This analysis is separate from the calibrated simulations. The Open Bandit Dataset contains fashion recommendation slates generated by a uniform-random policy. After cleaning, there are 454{,}265 screens, 59{,}687 distinct slates and 609{,}492 focal-item test rows; focal click rate is 0.34\%. A linear probability model with focal-item and position fixed effects estimates the effect of the other two displayed items. At the coarsest seven-level category, each same-category neighbor changes focal click probability by $-0.000424$, or $-12.3\%$ relative to the mean. A permutation test that reassigns whole randomized slates gives $p=0.012$; Holm correction over three category resolutions gives $p=0.036$. The other resolutions, neighbor attractiveness, and a joint neighbor-identity test do not distinguish signal from their placebos. This identifies an exposure effect among fashion substitutes under randomized display. It neither estimates grocery purchase response nor separates replacement from addition.

\section{Solver scale, prediction quality, and prior art}\label{app:scale}
\textbf{Interaction-graph structure.} For each of about 4{,}000 test baskets per dataset, we form a graph whose nodes are all items with an eligible premium alternative, without the cap of four. An edge joins two items when buying the premium tier of one changes the probability of buying the premium tier of the other by more than 0.02 in training receipts, after the same shrinkage as $A$. Table~\ref{tab:e7} reports the size, the share of graphs with a cycle, and the induced width of a min-fill ordering, an upper bound on treewidth. Forests are the exception once baskets contain several eligible items. Most graphs nevertheless have small width, while the largest Complete Journey baskets reach widths above 20, where exact dynamic programming over all items is impractical. The 0.02 threshold is a modeling choice; larger thresholds give sparser graphs.

\begin{table}[t]
\caption{Structure of real upsell interaction graphs ($G$: items with an eligible upgrade).}
\centering\footnotesize
\input{tables/empirical_e7.tex}
\label{tab:e7}
\end{table}

\begin{table*}[t]
\caption{Tree-decomposition dynamic programming on real interaction graphs (E15). Runtime is median/p99; baseline gaps are percentages of oracle offer margin. ``DP faster'' compares measured runtimes on each basket.}
\centering\scriptsize
\resizebox{\textwidth}{!}{\input{tables/empirical_scale.tex}}
\label{tab:scale}
\end{table*}

\textbf{Exact optimization on larger real graphs.} E15 retains up to 20 upsell groups and restricts joint-acceptance terms to edges of the premium-association graph. The count-augmented tree-decomposition DP and enumeration agree on all 8{,}040 checked baskets. Between 73\% and 91\% of baskets have induced width at most two; median DP runtime is 1.8--3.5\,ms. Runtime grows with width rather than $G$: approximately 0.7\,ms at width zero, 13\,ms at width two, 0.2\,s at width four, 2\,s at width six, 20\,s at width eight, and 60\,s at width nine in the pure-Python implementation. Enumeration instead grows with the number of feasible upgrade policies and completes every tested basket within 3\,s. It is faster on nearly every $L=2$ basket; at $L=4$, DP is faster on 27--36\% of Complete Journey and Ta Feng baskets and 4\% of Instacart baskets. Exact-minus-portfolio is at most 0.01\% on Complete Journey and Ta Feng and 0.44\% on Instacart.

\begin{table*}[t]
\caption{Per-basket solve time of exact methods on held-out seeds 1--2 (E15b, E15c). The scalar hybrid predicts enumeration or DP runtime from seed 0; ``Best of both'' chooses with hindsight; DP instances above the width cap are unfinished, so its worst time is over completed instances. Vectorized enumeration evaluates the same policy set and objective with array operations and finishes every test basket.}
\centering\scriptsize
\resizebox{\textwidth}{!}{\input{tables/empirical_hybrid.tex}}
\label{tab:hybrid}
\end{table*}

\textbf{Exact solve time.} E15b fits a post-hoc runtime selector on seed 0 that runs scalar enumeration or tree-decomposition DP, whichever it predicts is faster, on the 5{,}400 baskets of seeds 1--2. At $L=4$ it achieves 1.1\,ms median, 1.8\,s p99, and 3.0\,s maximum (Table~\ref{tab:hybrid}). Every slow basket has 20 upsell groups and a dense interaction graph (induced width 7--13): DP is exponential in width there, and the scalar loop over 87{,}441 upsell policies costs about 30\,$\mu$s per policy in interpreter overhead. E15c removes that overhead. It builds every feasible policy as an index array, computes acceptance moments, channel summaries, and top-$K$ add-on scores for all policies with array operations, and takes the maximum. The policy set and objective are unchanged, so the method is exact: its values match brute-force enumeration or DP on all 8{,}100 solves (maximum difference $4\times10^{-16}$), and they match the fitted model's scalar evaluator on every basket. On the held-out seeds it takes 0.13\,ms median, 16\,ms p95, 30\,ms p99, and 111\,ms maximum at $L=4$, and at most 0.4\,ms at $L=2$, measured with one process per configuration on six concurrent workers. The tested budgets therefore need neither the hybrid selector nor native code. Budgets above $L=4$ remain the regime in which enumeration's $O((GJ)^L)$ growth makes graphical DP the scalable route.

\textbf{Prediction quality (E19).} E19 holds the exact optimizer, candidates, budgets, economics, and response truth fixed and varies only the predictor. The truth is the Instacart S1 calibration with empirical-Bayes-shrunk channels, default imputed prices, separate budgets $L=K=2$, and no display cost. Learned predictors fit the same three-channel model to $N\in\{100,250,500,1\text{k},2\text{k},5\text{k},10\text{k},50\text{k}\}$ randomized sessions. The signal-free endpoint fits the same model to 50{,}000 sessions after independently shuffling upsell, pair, and add-on outcomes, which preserves outcome rates but removes their relation to context and actions. Prediction quality is measured as policy-value RMSE over a fixed grid of 100 baskets $\times$ 64 feasible policies per seed; decision quality as true expected margin, regret to the oracle, and oracle-policy agreement, over 1{,}000 test baskets $\times$ 5 seeds. Settings were registered before the run. The oracle earns \$460.45 per 1{,}000 baskets. Every learned-predictor regret has a paired 95\% interval excluding zero; for example, $N=100$ regret is \$20.03--\$22.67 and $N=1$k regret is \$1.23--\$1.52 per 1{,}000 baskets. The true gap between the best and second-best policy is small: mean \$0.0049 and median \$0.0028 per basket, with 24\% of baskets below \$0.001 and 85\% below \$0.01. Table~\ref{tab:prediction} reports all arms.

\begin{table}[t]
\caption{Prediction quality with the exact optimizer held fixed (E19, Instacart S1). Value RMSE is the error in predicted policy values; oracle-policy agreement is the share of baskets on which the chosen offer set equals the oracle's.}
\centering\scriptsize
\resizebox{\columnwidth}{!}{\input{tables/empirical_prediction_quality.tex}}
\label{tab:prediction}
\end{table}

\textbf{Prior art on shared benchmarks (E20).} E20 was registered before any E20 code ran. TIFU-KNN \cite{hu2020tifuknn} is ported line by line from the reference implementation of Li et al. \cite{li2023nbrcheck}; its per-basket weights match that code exactly for histories of 1--39 baskets. We use Li et al.'s published hyperparameters without tuning: $(k,r_b,r_g,\alpha,m)=(300,0.9,0.7,0.7,7)$ for Ta Feng, $(900,0.9,0.6,0.2,3)$ for dunnhumby, and $(900,0.9,0.7,0.9,3)$ for Instacart. Each query uses the customer's baskets strictly before the decision basket; neighbors are other customers' training-period histories, and Instacart histories are read in full from the raw files for up to 20{,}000 users. An add-on category's score is the sum of its item scores, and a premium alternative's score is its item score. In the personalized truth, each add-on base rate is multiplied by $\pi_{ck}^\omega$, where $\pi_{ck}$ is customer $c$'s leave-one-out lifetime basket share of category $k$, shrunk toward the population rate with five pseudo-baskets, divided by that rate, capped to $[0.1,10]$, and normalized to mean one on the training pool. This construction is generous to TIFU-KNN because both summarize purchase frequency; upsell response is not personalized. \emph{StoBasket + TIFU} adds one add-on regressor, base rate times TIFU-KNN category score relative to its pool mean, to the same three-channel model fitted on the same 50{,}000 sessions. As a check of the port on its native task, Recall@20 on full test baskets is 0.428 for TIFU-KNN, 0.406 for personal top-frequency, and 0.091 for global top-frequency on Instacart; on Ta Feng it is 0.146, 0.160, and 0.083. On Complete Journey TIFU-KNN (0.178) trails personal top-frequency (0.197), missing the registered expectation. The likely reason is pool size: our one-year release has 2{,}441 households, so $k=900$ averages 37\% of all customers. A post-hoc diagnostic that scales $k$ to Li et al.'s pool share ($k=98$) yields a tie (NDCG@20 0.1926 versus 0.1924); all reported decision results use the published $k$. Against StoBasket + TIFU, every listed competitor's loss has a paired 95\% interval excluding zero, except anonymous StoBasket at $\omega=0$, where differences are at most 0.012\%.

\section{Decision-aware market-basket analysis: derivation and scaling}\label{app:marketbasket-scaling}
\begin{table*}[t]
\caption{Post-hoc decision-aware market-basket diagnostic from Carbo-Loading training receipts. ``Premium'' and ``Regular'' are conditional co-purchase rates given the anchor tier; policy effect is the empirical-Bayes-shrunk difference used by the calibrated simulator, clipped to $\pm15$ percentage points. Relative rate is premium divided by regular. Like all market-basket statistics, these are observational associations.}
\centering\footnotesize
\input{tables/empirical_market_basket.tex}
\label{tab:marketbasket}
\end{table*}

\textbf{Notation.} Throughout, $N$ is the number of baskets, $d$ the number of catalog items, $b_n$ the number of items in basket $n$, and $q$ the association order. \SB{} pools items into $G$ anchor groups and $M$ add-on cells, with $G,M\le d$; $a_n\le b_n$ and $m_n\le b_n$ count the anchor groups and add-on cells present in basket $n$.

\textbf{Classical itemset growth.} Mining associations up to order $q$ faces
\[
\sum_{j=1}^{q}\binom{d}{j}=O(d^q)
\]
candidate itemsets: polynomial in the catalog size for fixed $q$, exponential in $q$, and $2^d-1$ without an order limit. Counting support visits $\sum_n\binom{b_n}{q}$ order-$q$ subsets per pass, so its work grows with both $N$ and $q$. Apriori, FP-growth, and related methods prune the candidate family, and multiple-minimum-support methods improve coverage, but a single support threshold retains the rare-item problem \cite{liu1999multiple}. More baskets sharpen relationships that receive support; they do not remove the catalog-by-order explosion.

\textbf{Typed contrast as a policy byproduct.}
Let $N$ be the number of training baskets. Define sparse incidence matrices $X^+,X^-\in\{0,1\}^{N\times G}$, where $X^+_{ng}$ (respectively $X^-_{ng}$) marks that basket $n$ contains a premium-tier (regular-tier) item from anchor group $g$, and let $W\in\{0,1\}^{N\times M}$ mark the add-on cells present in each basket. The calibration already required by the policy computes
\begin{equation}\label{eq:mbacontrast}
C^\pm=(X^\pm)^\top W,\qquad
\widehat p^\pm_{gk}=\frac{C^\pm_{gk}}{\sum_nX^\pm_{ng}},\qquad
\widehat A_{gk}=\widehat p^+_{gk}-\widehat p^-_{gk}.
\end{equation}
Here $\widehat p^+_{gk}$ and $\widehat p^-_{gk}$ are conventional confidence quantities conditional on premium and regular anchors, while support and lift follow from the same counts. The signed contrast $\widehat A_{gk}$ is the premium-to-add-on association consumed by the response model: positive values identify complements strengthened by the premium tier and negative values identify avoidance or substitution. The report therefore requires no second mining model. It exposes counts already computed for the optimizer, attaches representative products and contribution margins, and sorts rows by noise-corrected effect and economic relevance.

\textbf{Low-count regularization.}
Let $\widehat\sigma^2_{gk}$ be the binomial sampling variance of the difference in Eq.~\eqref{eq:mbacontrast}, and let $\widehat\tau^2$ be the across-entry signal variance estimated by moments. The empirical-Bayes policy effect is
\begin{equation}\label{eq:mbashrink}
\widetilde A_{gk}
=\frac{\widehat\tau^2}{\widehat\tau^2+\widehat\sigma^2_{gk}}\,
\widehat A_{gk}.
\end{equation}
The multiplier approaches one for well-supported contrasts and zero when sampling noise dominates. In contrast to a global minimum-support rule, weak entries are retained with appropriately reduced influence rather than all being discarded at one threshold.

\textbf{Typed scaling and a one-million-basket comparison.} Eq.~\eqref{eq:mbacontrast} produces a fixed $G\times M$ table, and selected local pair-acceptance moments add only the edges $E$ retained by the response graph. Direct sparse counting therefore takes $\sum_n a_nm_n$ co-occurrence updates and stores $O(GM+|E|)$ quantities, independent of $q$: effects of several offers enter the response through the additive channel summary rather than as separate itemsets. Table~\ref{tab:mbascaling} evaluates both methods at one million baskets using each market's observed catalog size and basket-size distribution. Pair counting costs about the same in both. The gap opens with order: on Instacart, classical updates grow from $7.4\times10^7$ at $q=2$ to $3.1\times10^9$ at $q=4$ while candidate itemsets reach $2\times10^{17}$, whereas \SB{} needs $7.6\times10^7$ updates for 17{,}956 contrasts at any order; on Complete Journey, whose basket sizes are more skewed, four-item counting reaches $2.4\times10^{10}$ updates. Each typed contrast is also well supported: the average Instacart anchor group appears in about 54{,}000 of the million baskets.

\begin{table}[t]
\caption{Classical itemset mining versus typed contrasts at $10^6$ baskets, from each market's observed catalog and basket sizes. A counting update adds one to the tally of one combination found in one basket; classical mining performs $\binom{b_n}{q}$ updates for a basket of $b_n$ items, typed contrasts $a_nm_n$ regardless of $q$. Candidate itemsets do not depend on the number of baskets; counting updates do.}
\centering\scriptsize
\resizebox{\columnwidth}{!}{\input{tables/empirical_mba_scaling.tex}}
\label{tab:mbascaling}
\end{table}

Category/tier pooling supplies additional support, and Eq.~\eqref{eq:mbashrink} moves noisy low-count contrasts toward zero. An initial moment estimator over all entries failed even on Instacart because thousands of sparse pairs dominated its variance estimate; the corrected estimator learns signal variance from entries with at least 30 baskets in each tier, removes every line-placebo gain, eliminates unsupported Ta Feng coupling, and retains \$47 of the original \$53 per 1{,}000-basket Instacart joint-selection gain (Table~\ref{tab:eb}). The result is sparsity-aware, not sparsity-proof: a relationship with no observations or randomized exposure remains unidentified, and genuine high-order effects omitted from the typed low-order model require an extension that may lose the stated tractability.

\section{Deployment recipes}\label{app:recipes}
\begin{table*}[t]
\caption{Two-arm field-test power under calibrated effects: baskets per arm for 80\% power at 5\% two-sided size. ``Offer log'' uses offer-attributable margin from logged exposure and acceptance; ``till'' uses total basket margin. M denotes million and k thousand.}
\centering\footnotesize
\input{tables/empirical_power.tex}
\label{tab:power}
\end{table*}

\subsection{A/B test recipe}\label{app:abtest}
This protocol expands item~(5) of Section~\ref{sec:scope}. It requires no customer identifier and can be run by a retailer on its own checkout, cart, or associate-assisted traffic.

\begin{enumerate}
\item \textbf{Choose the category.} Run the market-basket diagnostic of Section~\ref{sec:marketbasket} on training receipts and select categories whose premium anchors have strong, shrinkage-robust associations with add-ons. Weakly coupled categories need millions of baskets per arm (Table~\ref{tab:power}); testing there measures a small effect, not a failed method.
\item \textbf{Randomize.} The unit is the basket or cart session, assigned by a hash of its session identifier. If offers are delivered by store associates, randomize by store-day and inflate the sample size by the design effect. Use four arms: (A) no offers; (B) the retailer's current recommender or staged upsell-then-cross-sell rule; (C) \SB{} on the same candidate sets and budgets; and (D) a 5--10\% arm that shows uniformly random feasible offer sets, which keeps every offer's selection probability positive for re-estimating the response model.
\item \textbf{Log.} For every basket record the pre-offer basket and context; the candidate sets; each shown offer with its type, displaced item, position, and selection probability; accepted replacements and purchased additions; the final basket; and realized prices, costs, discounts, and rebates.
\item \textbf{Measure.} The primary endpoint is offer-attributable incremental contribution margin per basket,
\[
\Delta M=\sum_g Y_g(m_{j_g}-m_{i_g})+\sum_k Z_km_k-c(A)-\text{rebates paid},
\]
the realized form of Eq.~\eqref{eq:typedmargin}. Secondary endpoints are acceptance and attach rates and total basket margin.
\item \textbf{Size.} With per-basket standard deviation $\sigma$ of $\Delta M$ and target difference $\delta$, a two-arm comparison at 5\% two-sided size and 80\% power needs approximately
\[
n=\frac{2(z_{0.975}+z_{0.80})^2\sigma^2}{\delta^2}\approx\frac{15.7\,\sigma^2}{\delta^2}
\]
baskets per arm. Table~\ref{tab:power} evaluates this from simulator variances, not observed treatment effects: about 1{,}400 per arm for joint versus independent selection under Instacart-like coupling, versus 1.4 million on Complete Journey and 7.0 million on Ta Feng. Exact versus alternating is detectable only in shared-budget regimes (about 450 per arm on Instacart). Using total basket margin instead of $\Delta M$ adds the variance of the pre-offer basket and inflates $n$ by roughly 20--50 times.
\item \textbf{Analyze.} Pre-register the comparisons and estimate arm differences in means with heteroskedasticity-robust intervals. Reduce variance with pre-offer basket margin and model-predicted value as covariates, and apply Holm correction across pairwise arm comparisons. Analyze in this order: offers versus no offers; \SB{} versus the current policy; and, within the random arm (D), a regression of add-on purchase on shown and accepted upsells that tests whether the display and acceptance channels differ.
\item \textbf{Guard.} Monitor acceptance rate, checkout time or abandonment, and complaints. Stop any arm whose margin per basket falls below the no-offer arm with 99\% confidence.
\item \textbf{Close the loop.} Refit the response model on the randomized logs, compare the estimated channel coefficients with the receipt-calibrated ones, and rerun the exactness map of Appendix~\ref{app:batch2} with the field-estimated values.
\end{enumerate}

\subsection{Campaign-design recipe: making coupling learnable}\label{app:campaign}
This recipe expands item~(6) of Section~\ref{sec:scope}. Its premise is an identification fact. Let $\pi$ be the policy that generated a retailer's offer logs. The display effect of upsell $g$ on add-on $k$ in context $s$ is the contrast
\[
\E[Z_k\mid x_g=j,\ k\in C,\ s]-\E[Z_k\mid x_g=0,\ k\in C,\ s].
\]
It can be estimated from the logs only if both conditioning events occur with positive probability under $\pi$. A campaign program that never shows $k$ alongside $g$, or always does, observes only one of the two terms, so the effect is not identified without extrapolation; separating display from acceptance additionally requires mediation assumptions or designs \cite{imai2010mediation}. Weak measured coupling is therefore uninformative when the logging policy never created the relevant contrast. The following steps make the contrast exist and make it visible.

\begin{enumerate}
\item \textbf{Diagnose before concluding.} Compute the market-basket diagnostic of Section~\ref{sec:marketbasket} together with the shrinkage weights $w_{gk}=\widehat\tau^2/(\widehat\tau^2+\widehat\sigma^2_{gk})$ of Eq.~\eqref{eq:mbashrink}. A large shrunk effect with high weight is a supported opportunity. A weight near zero marks an \emph{uninformative} cell, one with too little exposure to measure, not a cell whose effect is zero. In our calibrations the mean weight is 0.89 on Instacart and 0.50 on Complete Journey, and Ta Feng shows no detectable signal (Appendix~\ref{app:batch2}).
\item \textbf{Unify the objective and its owner.} Evaluate upsell and cross-sell offers under one objective, offer-attributable incremental contribution margin, owned by one policy. Separate key performance indicators, such as upgrade rate for one team and attach rate for another, encourage staged rules, which suppress joint exposure and fall into the staging trap of Proposition~\ref{prop:staging}.
\item \textbf{Co-display deliberately.} Design campaigns that present premium anchors with plausible complements, for example premium pancake mix with a national-brand syrup, whose receipt co-purchase rate is 11.3\% after premium versus 3.3\% after regular mix (Table~\ref{tab:marketbasket}). Without joint exposure the coupling channels cannot be estimated.
\item \textbf{Keep a randomized share.} Show uniformly random feasible offer sets to 5--10\% of eligible baskets and log each set's selection probability. Deterministic rules produce no within-context contrast, and recorded propensities permit unbiased off-policy evaluation of new policies \cite{saito2020open}.
\item \textbf{Aim exploration where information is missing.} Allocate the randomized share preferentially to cells with low weight $w_{gk}$ and high potential margin, so that each exploratory display buys the most decision-relevant information.
\item \textbf{Randomize promotions separately.} When discounts or bundle rebates are used, vary their terms independently of which offers are displayed and record them. Otherwise promotional lift is confounded with the display and acceptance channels.
\item \textbf{Log typed events.} Record shown and accepted offers separately, with offer type, displaced item, position, propensity, and realized margin. Receipts alone cannot separate display from acceptance \cite{imai2010mediation,chen2020bias}.
\item \textbf{Re-measure each cycle.} Rerun the diagnostic after each campaign period. Weights should rise in explored cells. Once a cell's coupling is supported, serve it with joint selection; use exact search where display friction is also high (Appendix~\ref{app:batch2}).
\end{enumerate}
These steps do not tailor the data to \SB{}. They are the conditions under which any interaction-aware policy, including a retailer's own, can learn whether its offers reinforce one another.

\section{Proof details}\label{app:proofs}
\subsection{Why pair moments suffice}
Write the realized incremental value (before conditioning on add-on realizations) as
\[
\begin{split}
V(Y,x,C)={}&\sum_g (Y_gd_g-c_g)+\sum_{(g,h)}Y_gY_h\gamma_{gh}\\
&+\sum_{k\in C}\big[m_k q_k(Y,x)-c_k\big].
\end{split}
\]
All terms are finite linear combinations of $1$, $Y_g$, and $Y_gY_h$. Their expectations are $1$, $p_g$, and $\rho_{gh}$ respectively. Consequently Eq.~\eqref{eq:objective} holds without any independence assumption. If a nonlinear $q_k$ contains, for example, $Y_gY_hY_l$, third-order moments may be necessary. The same applies to nonlinear joint-bundle margins.

\subsection{Forest-consistent correlation}
For each edge $(g,h)$, the binary pairwise table is
\begin{equation*}
\begin{array}{c|cc}
 &Y_h=0&Y_h=1\\\hline
Y_g=0&1-p_g-p_h+\rho_{gh}&p_h-\rho_{gh}\\
Y_g=1&p_g-\rho_{gh}&\rho_{gh}
\end{array}
\end{equation*}
which is nonnegative provided
\[\max\{0,p_g+p_h-1\}\le\rho_{gh}\le\min\{p_g,p_h\}.\]
Root each tree and construct a joint distribution by sampling the root marginal and sampling a child conditional on its parent according to this edge table. This distribution reproduces all specified node and edge marginals. No unique full joint law is asserted without further assumptions; the tree-Markov construction selects one valid law. Our synthetic code uses a convex interpolation between $p_gp_h$ and $\min\{p_g,p_h\}$.

\subsection{Full proof of separability}\label{app:sep}
Write $F(x,C)=A(x)+\sum_{k\in C}[\alpha_k+\beta_k^\top z(x)]$, where $A$ contains upgrade node and edge terms and $z(0)=0$. If $\beta_k^\top z(x)=0$ for all $x,k$, $F$ separates into $A(x)+\sum_{k\in C}\alpha_k$. Conversely, suppose $F=U_0(x)+V_0(C)$ on a Cartesian feasible domain. For any feasible $x$ and feasible singleton complement $\{k\}$, the double difference $F(x,\{k\})-F(x,\varnothing)-F(0,\{k\})+F(0,\varnothing)$ must vanish. By direct expansion it equals $\beta_k^\top z(x)$, establishing necessity. Notice this is \emph{objective separability}; different algorithms may produce the same maximizer despite nonzero interactions.

\subsection{Detailed proof of Proposition~\ref{prop:poly}}
For any fixed state $z$, the complement subproblem selects the up-to-$K$ positive scores among $\ell_k(z)=\alpha_k+\beta_k^\top z$. Introduce a zero-score dummy option for excluding a complement. Compare all pairs of affine scores and all scores with zero. The $O(M^2)$ resulting rational hyperplanes subdivide $\mathbb R^r$ into $O(M^{2r})$ full-dimensional cells for fixed $r$. Every cell induces a fixed strictly ordered positive-score selection. At a boundary state, ties among scores can be resolved using a nearby full-dimensional cell. Such a tie-resolved selection achieves the same score at the boundary; therefore at least one enumerated complement set is optimal at any realizable $z$, including degenerate cases after identifying permanently identical affine functions. We may enumerate redundant sets, which does not hurt correctness.

Fix an enumerated $C$ and let $B_C=\sum_{k\in C}\beta_k$ and $a_C=\sum_{k\in C}\alpha_k$. Writing the upgrade value as $U(x)=\sum_f u_f(x_{S_f})$ and the summary as $z(x)=\sum_f\mu_f(x_{S_f})$,
\[
F(x,C)=a_C+\sum_f\big[u_f(x_{S_f})+B_C^\top\mu_f(x_{S_f})\big].
\]
In the model of Eq.~\eqref{eq:objective}, the single-group factors are $u_g=p_gd_g-c_g$ and $\mu_g=(p_gt_g,v_g,0)$, and the edge factors are $u_{gh}=\rho_{gh}\gamma_{gh}$ and $\mu_{gh}=(0,0,\rho_{gh}\zeta_{gh})$. This is a discrete graphical optimization on the primal graph of the scopes, with one global count budget on nonzero group actions. Max-sum variable elimination along a tree decomposition of width $w$, with each message also indexed by the number of upgrade offers $0,\ldots,L$ made in the eliminated subtree, is exact; fixed $w$ bounds the bag size and $L\le G$ bounds the counter. The inner algorithm is therefore polynomial in $G$, $J_{\max}$, and input size, and combining it with the polynomial outer enumeration proves the claim.

\section{Implementation details}
The code package provides:
(1) \path{engine/moment_extension.py}, supporting policy-local edgewise second moments, a correlated-tree joint sampler, and exhaustive search;
(2) \path{engine/arrangement.py}, the exact rational cell sampler, \path{engine/treewidth_dp.py}, a count- or summary-augmented max-sum program on a min-fill tree decomposition, and \path{engine/treewidth_exact.py}, the exact rational solver for bounded-treewidth factor graphs together with the latent-class exposure model;
(3) \path{engine/baselines.py}, the independent, sequential, alternating and greedy baselines shared by all models;
(4) \path{engine/make_figures.py} for repeatable synthetic figures;
(5) \path{tests/smoke.py}, which runs every solver check and pins each instance count reported here (\texttt{make test}).

Random seeds and per-instance sweep results are provided as a CSV. Synthetic stress examples are \emph{purpose-built} to show when coupling changes the optimum; they are not samples from a calibrated user population.

%% file: tables/empirical_extra.tex
\begin{tabular}{llrrrrr}
\toprule
Data & Setting & Oracle & Exact regret & $-$Indep. & $-$Upg.-first & $-$Altern.\\
\midrule
Complete Journey & $\gamma=0.0$ & 528 & 0.2 & -0.0 & -0.0 & -0.0\\
Complete Journey & $\gamma=0.5$ & 531 & 0.2 & 1.1 & 0.3 & 0.0\\
Complete Journey & $\gamma=2.0$ & 550 & 0.7 & 9.1 & 3.3 & 0.1\\
Ta Feng & $\gamma=0.0$ & 20022 & 4.9 & -1.0 & -0.9 & 0.0\\
Ta Feng & $\gamma=0.5$ & 20283 & 4.3 & 25.7 & 8.7 & 0.0\\
Ta Feng & $\gamma=2.0$ & 21217 & 12.4 & 185.3 & 76.1 & 2.0\\
Instacart & $\gamma=0.0$ & 331 & 0.1 & -0.0 & -0.0 & -0.0\\
Instacart & $\gamma=0.5$ & 396 & 0.1 & 21.2 & 6.7 & 0.1\\
Instacart & $\gamma=2.0$ & 634 & 0.2 & 118.9 & 38.0 & 1.0\\
\midrule
Complete Journey & $N=10$k & 536 & 0.4 & 3.4 & 1.0 & 0.0\\
Complete Journey & $N=200$k & 536 & 0.3 & 3.5 & 1.0 & 0.0\\
Ta Feng & $N=10$k & 20576 & 7.9 & 69.4 & 24.9 & 0.2\\
Ta Feng & $N=200$k & 20576 & 5.1 & 72.5 & 25.5 & 0.3\\
Instacart & $N=10$k & 474 & 0.1 & 53.8 & 16.9 & 0.4\\
Instacart & $N=200$k & 474 & 0.1 & 53.6 & 17.1 & 0.4\\
\midrule
Complete Journey & basket only (C0) & 536 & 9.7 & 3.5 & 0.9 & 0.0\\
Complete Journey & +time (C1) & 536 & 11.2 & 3.5 & 0.9 & 0.0\\
Complete Journey & +store/area (C2) & 536 & 7.3 & 3.4 & 1.0 & 0.1\\
Ta Feng & basket only (C0) & 20576 & 247.6 & 64.2 & 24.0 & -0.8\\
Ta Feng & +time (C1) & 20576 & 211.1 & 68.0 & 25.7 & 0.3\\
Ta Feng & +store/area (C2) & 20576 & 114.8 & 68.8 & 24.8 & -0.5\\
Instacart & basket only (C0) & 474 & 0.7 & 53.3 & 17.1 & 0.3\\
Instacart & +time (C1) & 474 & 0.1 & 53.4 & 17.0 & 0.4\\
Instacart & +store/area (C2) & 474 & 0.7 & 53.3 & 17.1 & 0.3\\
\midrule
Complete Journey & $\lambda_2=-0.5$, 3 ch. & 530 & 0.3 & 1.1 & 0.7 & 0.0\\
Complete Journey & $\lambda_2=0.0$, 3 ch. & 531 & 0.3 & 0.4 & 0.0 & 0.0\\
Complete Journey & $\lambda_2=-0.5$, 2 ch. & 530 & 0.5 & 0.9 & 0.6 & 0.0\\
Complete Journey & $\lambda_2=0.0$, 2 ch. & 531 & 0.5 & 0.2 & -0.1 & -0.0\\
Complete Journey & $\lambda_2=0.5$, 2 ch. & 536 & 0.3 & 3.5 & 1.1 & 0.0\\
Ta Feng & $\lambda_2=-0.5$, 3 ch. & 19944 & 6.8 & 20.5 & 9.0 & -0.0\\
Ta Feng & $\lambda_2=0.0$, 3 ch. & 20215 & 5.2 & 13.4 & 5.8 & -0.0\\
Ta Feng & $\lambda_2=-0.5$, 2 ch. & 19944 & 10.5 & 15.4 & 7.1 & 0.0\\
Ta Feng & $\lambda_2=0.0$, 2 ch. & 20215 & 5.8 & 13.1 & 6.1 & -0.0\\
Ta Feng & $\lambda_2=0.5$, 2 ch. & 20576 & 6.4 & 70.7 & 25.2 & 0.2\\
Instacart & $\lambda_2=-0.5$, 3 ch. & 338 & 0.3 & 6.2 & 4.0 & 0.1\\
Instacart & $\lambda_2=0.0$, 3 ch. & 392 & 0.2 & 16.5 & 4.2 & 0.1\\
Instacart & $\lambda_2=-0.5$, 2 ch. & 338 & 4.7 & 1.7 & 0.8 & 0.0\\
Instacart & $\lambda_2=0.0$, 2 ch. & 392 & 0.9 & 16.3 & 4.0 & 0.1\\
Instacart & $\lambda_2=0.5$, 2 ch. & 474 & 0.1 & 53.8 & 17.1 & 0.4\\
\midrule
Complete Journey & margin low & 387 & 0.4 & 1.0 & 0.3 & -0.0\\
Complete Journey & margin high & 707 & 0.4 & 8.5 & 2.1 & 0.1\\
Complete Journey & margin Ta Feng tiers & 524 & 0.3 & 3.6 & 1.1 & 0.0\\
\midrule
Complete Journey & default (S1, $\gamma=1$, $N=50$k, C3) & 536 & 0.3 & 3.5 & 1.1 & 0.0\\
Ta Feng & default (S1, $\gamma=1$, $N=50$k, C3) & 20576 & 5.6 & 71.3 & 25.8 & 0.3\\
Instacart & default (S1, $\gamma=1$, $N=50$k, C3) & 474 & 0.1 & 53.4 & 17.0 & 0.4\\
\bottomrule
\end{tabular}

%% file: tables/empirical_more_data.tex
\begin{tabular}{llrrrrrrr}
\toprule
Data & Truth & Oracle & Exact regret & $-$Indep. & $-$Upg.-first & $-$Altern. & $-$Portfolio & $-$Margin-pop.\\
\midrule
Online Retail II & S1, $\gamma=1$ & 6615 & 4.0 & 22.3 & 5.5 & 0.0 & 0.0 & 68.2\\
Online Retail II & S2, $\gamma=1$ & 5992 & 6.3 & 28.3 & 9.6 & 0.2 & 0.2 & 91.6\\
Online Retail II & S3, $\gamma=1$ & 4663 & 893.9 & 31.8 & 3.7 & 0.0 & -0.0 & 56.6\\
Online Retail II & S1, $\gamma=0$ & 6575 & 4.2 & -0.2 & -0.1 & 0.0 & 0.0 & 45.1\\
Online Retail II & S1, $\gamma=2$ & 6693 & 5.3 & 47.5 & 13.6 & 0.2 & 0.2 & 96.5\\
Complete Journey & S1, Dominick's margins & 439 & 0.3 & 2.5 & 0.6 & 0.0 & 0.0 & 9.8\\
Complete Journey & S2, Dominick's margins & 415 & 1.9 & 3.9 & 0.7 & 0.0 & 0.0 & 9.7\\
Complete Journey & S3, Dominick's margins & 305 & 49.6 & 1.7 & 0.2 & -0.0 & -0.0 & 5.9\\
Carbo-Loading & S1, $\gamma=1$ & 226 & 0.2 & 1.7 & 0.7 & 0.0 & 0.0 & 10.6\\
Carbo-Loading & S2, $\gamma=1$ & 220 & 0.8 & 2.7 & 2.0 & 0.0 & 0.0 & 10.6\\
Carbo-Loading & S3, $\gamma=1$ & 163 & 31.5 & 1.3 & 1.3 & -0.0 & -0.0 & 6.3\\
Carbo-Loading & S1, $\gamma=0$ & 223 & 0.1 & -0.1 & -0.0 & 0.0 & 0.0 & 8.7\\
Carbo-Loading & S1, $\gamma=2$ & 231 & 0.1 & 6.1 & 3.4 & 0.0 & 0.0 & 15.0\\
Carbo-Loading & S1, $\gamma=1$, $\tau=1$ & 149 & 0.1 & 2.0 & 1.6 & 0.2 & 0.2 & 40.8\\
Carbo-Loading & S1, $\gamma=1$, $\tau=1.5$ & 120 & 0.1 & 1.4 & 1.0 & 0.1 & 0.1 & 64.7\\
Carbo-Loading & S1, $\gamma=1$, $\tau=2$ & 95 & 0.1 & 1.1 & 0.7 & 0.1 & 0.0 & 93.4\\
Carbo-Loading & S1, $\gamma=1$, item placebo & 226 & 0.2 & 0.1 & 0.1 & -0.0 & -0.0 & 8.9\\
Carbo-Loading & S1, $\gamma=1$, line placebo & 223 & 0.3 & -0.2 & -0.2 & 0.0 & 0.0 & 8.6\\
Carbo-Loading & S1, $\gamma=1$, Dominick's margins & 182 & 0.1 & 2.0 & 1.1 & 0.0 & 0.0 & 7.4\\
Carbo-Loading & S1, $\gamma=1$, EB & 226 & 0.2 & 1.7 & 0.7 & 0.0 & 0.0 & 10.6\\
\bottomrule
\end{tabular}

%% file: tables/empirical_coupling_share.tex
\begin{tabular}{llrrrrr}
\toprule
Data & Setting & Oracle & Upsell share & Coupling value & \multicolumn{2}{c}{Exact $-$ independent}\\
 & & per 1{,}000 & \% of oracle & per 1{,}000 (\% of oracle) & \% of oracle & \% of coupling value\\
\midrule
Complete Journey & S1 & 536 & 88 & 11.2 (2) & 0.65 & 31\\
Complete Journey & S2 & 509 & 84 & 28.4 (6) & 1.14 & 20\\
Ta Feng & S1 & 20576 & 88 & 600.8 (3) & 0.35 & 12\\
Ta Feng & S2 & 19523 & 85 & 1144.2 (6) & 0.53 & 9\\
Instacart & S1 & 474 & 33 & 159.2 (34) & 11.28 & 34\\
Instacart & S2 & 546 & 23 & 264.8 (48) & 20.23 & 42\\
Complete Journey & S1, $\gamma=0.5$ & 531 & 89 & 3.9 (1) & 0.21 & 29\\
Complete Journey & S1, $\gamma=2$ & 550 & 85 & 29.9 (5) & 1.65 & 30\\
Ta Feng & S1, $\gamma=0.5$ & 20283 & 90 & 277.9 (1) & 0.13 & 9\\
Ta Feng & S1, $\gamma=2$ & 21217 & 86 & 1319.6 (6) & 0.87 & 14\\
Instacart & S1, $\gamma=0.5$ & 396 & 40 & 75.0 (19) & 5.35 & 28\\
Instacart & S1, $\gamma=2$ & 634 & 24 & 325.5 (51) & 18.75 & 37\\
Online Retail II & S1, $\gamma=1$ & 6615 & 96 & 60.9 (1) & 0.34 & 37\\
Online Retail II & S2, $\gamma=1$ & 5992 & 95 & 138.9 (2) & 0.47 & 20\\
Online Retail II & S1, $\gamma=2$ & 6693 & 95 & 160.2 (2) & 0.71 & 30\\
Complete Journey & S1, Dominick's margins & 439 & 88 & 8.4 (2) & 0.58 & 30\\
Complete Journey & S2, Dominick's margins & 415 & 84 & 21.2 (5) & 0.93 & 18\\
Carbo-Loading & S1, $\gamma=1$ & 226 & 29 & 4.0 (2) & 0.73 & 41\\
Carbo-Loading & S2, $\gamma=1$ & 220 & 26 & 5.1 (2) & 1.22 & 52\\
Carbo-Loading & S1, $\gamma=2$ & 231 & 27 & 12.6 (5) & 2.63 & 48\\
Complete Journey & S1, EB shrinkage & 532 & 89 & 4.1 (1) & 0.17 & 22\\
Complete Journey & S2, EB shrinkage & 493 & 87 & 10.2 (2) & 0.44 & 21\\
Instacart & S1, EB shrinkage & 460 & 34 & 144.8 (31) & 10.19 & 32\\
Instacart & S2, EB shrinkage & 516 & 24 & 233.2 (45) & 18.09 & 40\\
\bottomrule
\end{tabular}

%% file: tables/empirical_e8.tex
\begin{tabular}{llrrrrrrr}
\toprule
Data & Setting & Oracle & Exact regret & $-$Indep. & $-$Greedy & $-$Altern. & $-$Portfolio & Differs\\
 & & per 1{,}000 & \multicolumn{5}{c}{\% of oracle margin} & \% baskets\\
\midrule
Complete Journey & Separate budgets (batch 1) & 536 & 0.06 & 0.65 & 0.08 & 0.00 & -- & 0.7\\
 & Display cost $\tau=0.5$ & 525 & 0.06 & 0.66 & 0.08 & 0.01 & 0.01 & 0.9\\
 & Display cost $\tau=1$ & 513 & 0.07 & 0.70 & 0.06 & 0.00 & 0.00 & 0.5\\
 & Display cost $\tau=2$ & 492 & 0.07 & 0.89 & 0.05 & 0.01 & 0.01 & 0.8\\
 & Budgets $L=K=3$ & 582 & 0.06 & 0.57 & 0.04 & 0.00 & 0.00 & 0.5\\
 & Up to 8 upsell groups & 625 & 0.08 & 0.64 & 0.12 & 0.01 & 0.01 & 0.8\\
 & 8 groups, $\tau=1$ & 602 & 0.08 & 0.70 & 0.11 & 0.01 & 0.01 & 0.7\\
 & Shared budget $T=3$ & 545 & 0.05 & 22.15 & 0.02 & 1.33 & 0.00 & 24.7\\
 & Shared budget $T=4$ & 571 & 0.05 & 6.59 & 0.02 & 0.96 & 0.00 & 21.0\\
 & Shared $T=4$, $\tau=1$ & 545 & 0.05 & 6.47 & 0.02 & 0.47 & -0.00 & 11.2\\
\midrule
Ta Feng & Separate budgets (batch 1) & 20576 & 0.03 & 0.35 & 0.05 & 0.00 & -- & 0.2\\
 & Display cost $\tau=0.5$ & 20096 & 0.03 & 0.36 & 0.05 & 0.00 & 0.00 & 0.3\\
 & Display cost $\tau=1$ & 19628 & 0.03 & 0.42 & 0.05 & 0.00 & 0.00 & 0.3\\
 & Display cost $\tau=2$ & 18766 & 0.03 & 0.76 & 0.04 & 0.01 & 0.01 & 0.8\\
 & Budgets $L=K=3$ & 22804 & 0.03 & 0.25 & 0.04 & 0.00 & 0.00 & 0.1\\
 & Up to 8 upsell groups & 22280 & 0.04 & 0.35 & 0.08 & 0.00 & 0.00 & 0.3\\
 & 8 groups, $\tau=1$ & 21329 & 0.04 & 0.41 & 0.08 & 0.00 & 0.00 & 0.4\\
 & Shared budget $T=3$ & 21276 & 0.02 & 25.29 & 0.02 & 1.52 & 0.00 & 20.9\\
 & Shared budget $T=4$ & 22435 & 0.02 & 8.62 & 0.01 & 1.24 & 0.00 & 24.1\\
 & Shared $T=4$, $\tau=1$ & 21432 & 0.02 & 8.81 & 0.01 & 0.77 & 0.00 & 16.8\\
\midrule
Instacart & Separate budgets (batch 1) & 474 & 0.02 & 11.28 & 2.80 & 0.08 & -- & 1.9\\
 & Display cost $\tau=0.5$ & 352 & 0.03 & 14.77 & 3.55 & 0.19 & 0.15 & 2.9\\
 & Display cost $\tau=1$ & 232 & 0.05 & 24.45 & 4.71 & 0.55 & 0.44 & 4.9\\
 & Display cost $\tau=2$ & 46 & 0.50 & 84.97 & 65.60 & 65.14 & 65.12 & 35.9\\
 & Budgets $L=K=3$ & 674 & 0.02 & 9.75 & 1.56 & 0.02 & 0.01 & 0.7\\
 & Up to 8 upsell groups & 482 & 0.03 & 11.88 & 3.22 & 0.12 & 0.09 & 2.6\\
 & 8 groups, $\tau=1$ & 240 & 0.06 & 25.17 & 5.49 & 0.60 & 0.48 & 5.6\\
 & Shared budget $T=3$ & 370 & 0.05 & 13.53 & 1.91 & 21.22 & 0.05 & 73.3\\
 & Shared budget $T=4$ & 500 & 0.03 & 16.16 & 1.63 & 22.92 & 0.03 & 67.9\\
 & Shared $T=4$, $\tau=1$ & 243 & 0.06 & 28.13 & 3.28 & 14.73 & 0.33 & 40.8\\
\bottomrule
\end{tabular}

%% file: tables/empirical_price_sensitivity.tex
\begin{tabular}{lrrrrrrrr}
\toprule
 & \multicolumn{4}{c}{Exact $-$ portfolio, \% of oracle} & \multicolumn{4}{c}{Exact regret, \% of oracle}\\
Price assumption & $\tau=0$ & $\tau=1$ & $\tau=1.5$ & $\tau=2$ & $\tau=0$ & $\tau=1$ & $\tau=1.5$ & $\tau=2$\\
\midrule
Organic $+25\%$ (main) & 0.08 & 0.44 & 16.13 & 65.12 & 0.02 & 0.05 & 0.14 & 0.50\\
Organic $+10\%$ & 0.11 & 3.02 & 70.58 & 89.31 & 0.02 & 0.09 & 0.54 & 3.31\\
Organic $+40\%$ & 0.04 & 0.13 & 1.53 & 5.78 & 0.03 & 0.04 & 0.09 & 0.16\\
Item price noise (s.d.\ 0.3) & 0.03 & 0.09 & 0.69 & 2.00 & 0.04 & 0.06 & 0.08 & 0.14\\
Margin rate 18\% & 0.05 & 0.44 & 16.13 & 65.12 & 0.02 & 0.05 & 0.14 & 0.50\\
Margin rate 38\% & 0.05 & 0.44 & 16.13 & 65.12 & 0.02 & 0.05 & 0.14 & 0.50\\
\bottomrule
\end{tabular}

%% file: tables/empirical_eb.tex
\begin{tabular}{llrrrr}
\toprule
Data & Setting & \multicolumn{2}{c}{Exact $-$ independent, per 1{,}000} & \multicolumn{2}{c}{Exact $-$ portfolio/altern., \% of oracle}\\
 & & fixed shrinkage & EB shrinkage & fixed & EB\\
\midrule
Complete Journey & S1 & 3.5 & 0.9 & 0.00 & 0.00\\
Complete Journey & S2 & 5.8 & 2.2 & 0.01 & 0.00\\
Complete Journey & S3 & 2.4 & 0.9 & 0.01 & -0.00\\
Complete Journey & S1, $\gamma=0.5$ & 1.1 & 0.2 & 0.00 & -0.00\\
Complete Journey & S1, $\gamma=2$ & 9.1 & 2.8 & 0.02 & 0.00\\
Complete Journey & S1, $\tau=2$ & 4.4 & 1.0 & 0.01 & 0.00\\
Complete Journey & line placebo & 1.5 & -0.0 & 0.00 & -0.00\\
Ta Feng & S1 & 71.3 & -1.2 & 0.00 & 0.00\\
Ta Feng & S2 & 104.0 & -1.1 & 0.01 & 0.00\\
Ta Feng & S3 & 46.0 & 4.7 & -0.00 & 0.00\\
Ta Feng & S1, $\gamma=0.5$ & 25.7 & -1.2 & 0.00 & 0.00\\
Ta Feng & S1, $\gamma=2$ & 185.3 & -1.2 & 0.01 & 0.00\\
Ta Feng & S1, $\tau=2$ & 143.2 & -1.5 & 0.01 & 0.00\\
Ta Feng & line placebo & 34.6 & -3.4 & -0.00 & 0.00\\
Instacart & S1 & 53.4 & 46.9 & 0.08 & 0.05\\
Instacart & S2 & 110.5 & 93.3 & 0.25 & 0.15\\
Instacart & S3 & 10.6 & 7.7 & 0.08 & 0.10\\
Instacart & S1, $\gamma=0.5$ & 21.2 & 18.3 & 0.03 & 0.02\\
Instacart & S1, $\gamma=2$ & 118.9 & 106.7 & 0.16 & 0.09\\
Instacart & S1, $\tau=2$ & 39.4 & 31.4 & 65.12 & 63.80\\
Instacart & line placebo & 6.6 & -0.1 & 0.02 & 0.00\\
\bottomrule
\end{tabular}

%% file: tables/empirical_robust.tex
\begin{tabular}{lrrr}
\toprule
 & Complete Journey & Ta Feng & Instacart\\
\midrule
\multicolumn{4}{l}{\emph{Bundle rebate: loss of rebate-blind exact, per 1{,}000 baskets}}\\
\quad $\rho=0.25$ & 0.0 & 0.8 & 1.5\\
\quad $\rho=0.5$ & 0.1 & 3.3 & 7.2\\
\quad $\rho=1.0$ & 0.6 & 19.9 & 34.0\\
\multicolumn{4}{l}{\emph{Stress test: exact regret, \% of oracle (base $\to$ rich model)}}\\
\quad cannibalization + fatigue & 17.5 $\to$ 0.10 & 17.3 $\to$ 0.08 & 10.6 $\to$ 0.08\\
\quad + test-period shift & 16.1 $\to$ 0.32 & 15.9 $\to$ 0.25 & 12.0 $\to$ 0.13\\
\multicolumn{4}{l}{\emph{Exact minus independent, per 1{,}000 baskets}}\\
\quad real premium labels & 3.5 & 71.3 & 53.4\\
\quad labels shuffled across items & 2.3 & 65.8 & 38.0\\
\quad labels random per purchase line & 1.5 & 34.6 & 6.6\\
\quad $\gamma=0$, ungated & -0.0 & -1.0 & -0.0\\
\quad $\gamma=0$, coupling gate & 0.0 & 0.0 & 0.0\\
\multicolumn{4}{l}{\emph{Exact regret by log size, \% of oracle}}\\
\quad $N=1$k & 0.21 & 0.09 & 0.30\\
\quad $N=2$k & 0.13 & 0.08 & 0.12\\
\quad $N=5$k & 0.08 & 0.06 & 0.06\\
\quad $N=10$k & 0.08 & 0.04 & 0.03\\
\quad $N=50$k & 0.06 & 0.03 & 0.02\\
\quad $N=200$k & 0.05 & 0.02 & 0.02\\
\bottomrule
\end{tabular}

%% file: tables/empirical_e7.tex
\begin{tabular}{lrrrrr}
\toprule
Data & Median $G$ & 90th pct.\ $G$ & Cyclic & Cyclic, $G\ge8$ & Width $\le2$\\
\midrule
Complete Journey & 6 & 22 & 50\% & 94\% & 66\%\\
Ta Feng & 4 & 12 & 22\% & 68\% & 93\%\\
Instacart & 4 & 8 & 68\% & 100\% & 54\%\\
\bottomrule
\end{tabular}

%% file: tables/empirical_scale.tex
\begin{tabular}{lrrrrrrrrr}
\toprule
Data & $L$ & $\tau$ & $G\ge8$ & DP = brute & DP time (median / p99) & DP faster & $-$Greedy & $-$Altern. & $-$Portfolio\\
 & & & \% baskets & checked & & \% baskets & \multicolumn{3}{c}{\% of oracle}\\
\midrule
Complete Journey & 2 & 0 & 34 & 887/887 & 3.2 ms / 33.2 s & 0.5 & 0.10 & 0.00 & 0.00\\
Complete Journey & 4 & 0 & 34 & 887/887 & 3.1 ms / 38.3 s & 27.1 & 0.09 & -0.00 & -0.00\\
Complete Journey & 4 & 1 & 34 & 887/887 & 3.0 ms / 37.8 s & 27.5 & 0.08 & 0.00 & -0.01\\
Ta Feng & 2 & 0 & 23 & 900/900 & 1.8 ms / 0.5 s & 0.6 & 0.09 & 0.01 & 0.01\\
Ta Feng & 4 & 0 & 23 & 900/900 & 2.1 ms / 0.5 s & 36.4 & 0.07 & 0.00 & 0.00\\
Ta Feng & 4 & 1 & 23 & 900/900 & 2.0 ms / 0.5 s & 35.8 & 0.06 & 0.00 & 0.00\\
Instacart & 2 & 0 & 12 & 893/893 & 3.5 ms / 7.6 s & 0.1 & 3.23 & 0.07 & 0.05\\
Instacart & 4 & 0 & 12 & 893/893 & 3.4 ms / 7.8 s & 4.5 & 2.81 & 0.04 & 0.03\\
Instacart & 4 & 1 & 12 & 893/893 & 3.1 ms / 7.9 s & 3.6 & 4.63 & 0.49 & 0.44\\
\bottomrule
\end{tabular}

%% file: tables/empirical_hybrid.tex
\begin{tabular}{llrrrrr}
\toprule
Budget & Method & Median & 95th pct. & 99th pct. & Worst & $<$100 ms\\
\midrule
$L=2$ & Always enumerate & 0.4 ms & 10.2 ms & 16.0 ms & 29.6 ms & 100.0\%\\
 & Always tree-decomposition DP & 2.5 ms & 1.1 s & 57.8 s & 132.6 s & 87.9\%\\
 & Hybrid (predicted faster method) & 0.4 ms & 10.2 ms & 16.0 ms & 29.6 ms & 100.0\%\\
 & Best of both per basket & 0.4 ms & 10.2 ms & 16.0 ms & 29.6 ms & 100.0\%\\
 & \textbf{Vectorized enumeration (E15c)} & 0.1 ms & 0.2 ms & 0.2 ms & 0.4 ms & 100.0\%\\
\midrule
$L=4$ & Always enumerate & 0.9 ms & 1.1 s & 2.0 s & 3.0 s & 85.8\%\\
 & Always tree-decomposition DP & 2.5 ms & 1.1 s & 68.3 s & 151.2 s & 87.6\%\\
 & Hybrid (predicted faster method) & 1.1 ms & 470.7 ms & 1.8 s & 3.0 s & 90.6\%\\
 & Best of both per basket & 0.9 ms & 470.7 ms & 1.8 s & 3.0 s & 90.7\%\\
 & \textbf{Vectorized enumeration (E15c)} & 0.1 ms & 16.1 ms & 29.5 ms & 110.6 ms & 99.97\%\\
\bottomrule
\end{tabular}

%% file: tables/empirical_prediction_quality.tex
\begin{tabular}{lrrrr}
\toprule
Predictor & Value RMSE & Oracle-policy & Margin regret & Regret per 1{,}000\\
 & \% oracle & agreement & \% oracle & baskets\\
\midrule
Oracle truth & 0.00 & 100.0\% & 0.000 & 0.00\\
Learned, $N=100$ & 17.20 & 54.8\% & 4.627 & 21.31\\
Learned, $N=250$ & 11.27 & 75.3\% & 1.106 & 5.09\\
Learned, $N=500$ & 6.34 & 83.9\% & 0.371 & 1.71\\
Learned, $N=1$k & 5.09 & 84.8\% & 0.299 & 1.38\\
Learned, $N=2$k & 3.76 & 90.7\% & 0.120 & 0.55\\
Learned, $N=5$k & 3.15 & 92.7\% & 0.059 & 0.27\\
Learned, $N=10$k & 2.63 & 95.2\% & 0.029 & 0.13\\
Learned, $N=50$k & 2.33 & 96.0\% & 0.018 & 0.08\\
Shuffled outcomes (50k) & 23.23 & 0.6\% & 38.394 & 176.78\\
\bottomrule
\end{tabular}

%% file: tables/empirical_market_basket.tex
\begin{tabular}{llrrrr}
\toprule
Premium anchor & Cross-sell receipt cell & Premium & Regular & Policy effect & Relative rate\\
\midrule
Pancake Mixes & syrups: Aunt Jemima & 11.3\% & 3.3\% & +8.0 pp & 3.43$\times$\\
Pasta Sauce & pasta: Barilla & 8.2\% & 4.8\% & +3.5 pp & 1.73$\times$\\
Pancake Mixes & syrups: Mrs Butterworth & 4.7\% & 1.7\% & +2.9 pp & 2.71$\times$\\
Pancake Mixes & syrups: Private Label & 9.4\% & 28.4\% & -15.0 pp & 0.33$\times$\\
Pasta Sauce & pasta: Private Label & 24.7\% & 37.5\% & -12.8 pp & 0.66$\times$\\
Pancake Mixes & pasta: Private Label & 8.5\% & 14.1\% & -5.6 pp & 0.60$\times$\\
\bottomrule
\end{tabular}

%% file: tables/empirical_mba_scaling.tex
\begin{tabular}{llrrr}
\toprule
Market & Quantity at $10^6$ baskets & $q=2$ & $q=3$ & $q=4$\\
\midrule
Instacart & Classical candidate itemsets & $1.1\times10^{9}$ & $1.7\times10^{13}$ & $2.0\times10^{17}$\\
 & Classical counting updates & $7.4\times10^{7}$ & $4.9\times10^{8}$ & $3.1\times10^{9}$\\
 & Typed contrasts estimated & 17{,}956 & 17{,}956 & 17{,}956\\
 & Typed counting updates & $7.6\times10^{7}$ & $7.6\times10^{7}$ & $7.6\times10^{7}$\\
\midrule
Complete Journey & Classical candidate itemsets & $2.3\times10^{9}$ & $5.1\times10^{13}$ & $8.5\times10^{17}$\\
 & Classical counting updates & $1.3\times10^{8}$ & $1.7\times10^{9}$ & $2.4\times10^{10}$\\
 & Typed contrasts estimated & 603{,}825 & 603{,}825 & 603{,}825\\
 & Typed counting updates & $1.5\times10^{8}$ & $1.5\times10^{8}$ & $1.5\times10^{8}$\\
\bottomrule
\end{tabular}

%% file: tables/empirical_power.tex
\begin{tabular}{llrrrr}
\toprule
Data & Setting & Comparison & Gap per 1{,}000 & Baskets/arm (offer log) & Baskets/arm (till)\\
\midrule
Complete Journey & default & vs independent & 3.9 & 1.4M & 67.1M\\
Complete Journey & default & vs upgrade-first & 1.4 & 11.3M & 550.2M\\
Complete Journey & default & vs alternating & 0.0 & 30943.0M & 1497174.2M\\
Complete Journey & default & vs no offers & 530.5 & 37 & 3,510\\
Complete Journey & shared budget $T=3$ & vs independent & 121.1 & 1,295 & 69k\\
Complete Journey & shared budget $T=3$ & vs upgrade-first & 9.1 & 266k & 12.2M\\
Complete Journey & shared budget $T=3$ & vs alternating & 7.0 & 444k & 20.4M\\
Complete Journey & shared budget $T=3$ & vs no offers & 540.9 & 38 & 3,379\\
Ta Feng & default & vs independent & 67.8 & 7.0M & 132.7M\\
Ta Feng & default & vs upgrade-first & 21.4 & 70.5M & 1330.6M\\
Ta Feng & default & vs alternating & 0.1 & 6499539.9M & 122662312.5M\\
Ta Feng & default & vs no offers & 19696.9 & 42 & 1,496\\
Ta Feng & shared budget $T=3$ & vs independent & 5238.0 & 1,076 & 22k\\
Ta Feng & shared budget $T=3$ & vs upgrade-first & 309.5 & 358k & 6.4M\\
Ta Feng & shared budget $T=3$ & vs alternating & 233.9 & 629k & 11.2M\\
Ta Feng & shared budget $T=3$ & vs no offers & 20400.8 & 42 & 1,400\\
Instacart & default & vs independent & 54.4 & 1,361 & 76k\\
Instacart & default & vs upgrade-first & 18.4 & 12k & 668k\\
Instacart & default & vs alternating & 0.3 & 53.0M & 2783.7M\\
Instacart & default & vs no offers & 476.1 & 9 & 972\\
Instacart & shared budget $T=3$ & vs independent & 53.1 & 1,089 & 78k\\
Instacart & shared budget $T=3$ & vs upgrade-first & 98.0 & 243 & 23k\\
Instacart & shared budget $T=3$ & vs alternating & 78.5 & 448 & 36k\\
Instacart & shared budget $T=3$ & vs no offers & 372.8 & 11 & 1,569\\
\bottomrule
\end{tabular}